\pdfoutput=1
\documentclass[11pt,letterpaper]{article}

\usepackage[top=1in, bottom=1in, left=1in, right=1in]{geometry}
\usepackage[T1]{fontenc}
\usepackage{times}
\usepackage{helvet}
\usepackage{courier}
\usepackage{amsmath,amssymb,amsthm}
\usepackage{mathtools}
\usepackage{bm}
\usepackage{graphicx}
\usepackage[caption=false,font=footnotesize]{subfig}
\usepackage{algorithm}
\usepackage{algorithmic}
\usepackage{cite}

\usepackage{xcolor}
\usepackage{hyperref}
\usepackage{microtype}
\hypersetup{colorlinks=true,linkcolor=blue,citecolor=blue,urlcolor=blue}

\usepackage{booktabs}
\usepackage{multirow}
\usepackage{enumitem}
\setlist{nosep,leftmargin=*}
\newtheorem{theorem}{Theorem}
\newtheorem{lemma}[theorem]{Lemma}

\newtheorem{proposition}[theorem]{Proposition}
\newtheorem{remark}{Remark}
\newtheorem{assumption}{Assumption}
\theoremstyle{definition}
\newtheorem{definition}{Definition}
\newcommand{\R}{\mathbb{R}}
\newcommand{\E}{\mathbb{E}}

\newcommand{\ones}{\mathbf{1}}
\newcommand{\Fnorm}[1]{\left\| #1 \right\|_F}

\newcommand{\inner}[2]{\left\langle #1,\, #2 \right\rangle}
\newcommand{\tr}{\operatorname{tr}}

\newcommand{\cH}{\mathcal{H}}
\newcommand{\cP}{\mathcal{P}}
\newcommand{\cB}{\mathcal{B}}
\newcommand{\cF}{\mathcal{F}}

\newcommand{\xbar}{\bar{x}}
\newcommand{\ybar}{\bar{y}}

\title{\LARGE \bf Convergence of Byzantine-Resilient Gradient Tracking via Probabilistic Edge Dropout}

\author{%
Amirhossein Dezhboro$^{1,*}$ \qquad Fateme Maleki$^{2}$ \\[1ex]
Arman Adibi$^{3}$ \qquad Erfan Amini$^{4}$ \\[1ex]
Jose E.~Ramirez-Marquez$^{1}$
}

\date{}
\begin{document}
\maketitle

\begingroup
\renewcommand{\thefootnote}{}
\begin{NoHyper}
\footnotetext{\hspace{-1.5em}%
$^{1}$Department of Systems Engineering, Stevens Institute of Technology, Hoboken, NJ 07030.\\[0.5ex]
$^{2}$Department of Industrial and Systems Engineering, Rutgers University, Piscataway, NJ 08854.\\[0.5ex]
$^{3}$School of Computer and Cyber Sciences, Augusta University, Augusta, GA 30912.\\[0.5ex]
$^{4}$Center for Climate Systems Research, Columbia University, Palisades, NY 10964.\\[0.5ex]
$^{*}$Corresponding author: \texttt{adezhbor@stevens.edu}%
}
\end{NoHyper}
\endgroup


\begin{abstract}
We study distributed optimization over networks with Byzantine agents that may send arbitrary adversarial messages. We propose \emph{Gradient Tracking with Probabilistic Edge Dropout} (GT-PD), a stochastic gradient tracking method that preserves the convergence properties of gradient tracking under adversarial communication. GT-PD combines two complementary defense layers: a universal self-centered projection that clips each incoming message to a ball of radius $\tau$ around the receiving agent, and a fully decentralized probabilistic dropout rule driven by a dual-metric trust score in the decision and tracking channels. This design bounds adversarial perturbations while preserving the doubly stochastic mixing structure, a property often lost under robust aggregation in decentralized settings. Under complete Byzantine isolation ($p_b=0$), GT-PD converges linearly to a neighborhood determined solely by stochastic gradient variance. For partial isolation ($p_b>0$), we introduce \emph{Gradient Tracking with Probabilistic Edge Dropout and Leaky Integration} (GT-PD-L), which uses a leaky integrator to control the accumulation of tracking errors caused by persistent perturbations and achieves linear convergence to a bounded neighborhood determined by the stochastic variance and the clipping-to-leak ratio. We further show that under two-tier dropout with $p_h=1$, isolating Byzantine agents introduces no additional variance into the honest consensus dynamics. Experiments on MNIST under Sign Flip, ALIE, and Inner Product Manipulation attacks show that GT-PD-L outperforms coordinate-wise trimmed mean by up to 4.3 percentage points under stealth attacks.

\end{abstract}


\section{Introduction}\label{sec:intro}
Motivated by emerging large-scale applications in machine learning, signal processing, and multi-agent systems, we study the problem of distributed optimization over networks in which a subset of agents may behave adversarially. In these settings, $n$ agents collaborate to minimize a global objective $f(x) = \frac{1}{n}\sum_{i=1}^n f_i(x)$ by communicating only with neighbors on a graph $G = (V,E)$. Gradient tracking algorithms \cite{pu2021push, xin2018linear, nedic2017achieving} have emerged as the method of choice for achieving exact convergence (as opposed to consensus-based approaches that converge to a neighborhood), by maintaining local gradient estimators that track the global gradient. However, a critical vulnerability arises in the presence of \emph{Byzantine agents}, which are nodes that may send arbitrary, adversarial messages to corrupt the optimization process. The main goal of this work is to develop and analyze a defense mechanism that isolates such adversaries while preserving the favorable convergence properties of gradient tracking.

Byzantine-resilient distributed computation has a rich history rooted in the work of Lamport et al.\ \cite{lamport1982byzantine}, with foundational results on graph-theoretic conditions for resilient consensus established by Sundaram and Hadjicostis \cite{sundaram2011distributed} and LeBlanc et al.\ \cite{leblanc2013resilient}. In the optimization setting, robust aggregation rules such as coordinate-wise trimmed mean \cite{yin2018byzantine} and Krum \cite{blanchard2017machine} have been developed for centralized (parameter server) architectures. However, extending these to fully decentralized networks remains an active and challenging area of research. Recent analyses demonstrate that directly applying centralized robust aggregators often breaks the doubly stochastic property of the virtual mixing matrix, leading to severe disagreement among honest agents \cite{wu2023byzantine}. Despite these architectural hurdles, recent advances have successfully integrated robust aggregation with gradient tracking \cite{li2024efficient} and leveraged local similarity metrics to filter malicious models in decentralized federated learning \cite{fang2024byzantine}.

Nevertheless, existing approaches generally rely on \emph{spatial filtering}, which are deterministic rules that completely discard or modify the mixing weights of suspicious updates at each iteration. A key difficulty is that altering these weights fundamentally breaks the doubly stochastic property of the mixing matrix in decentralized settings, which can lead to biased convergence \cite{wu2023byzantine}. In this paper, we bridge this gap by proposing a dual-layer defense architecture that operates on orthogonal dimensions of the attack space. Rather than filtering the mixing weights, we introduce a \emph{universal self-centered projection} applied to all incoming messages that strictly clips any message deviating beyond a specified radius while leaving the mixing matrix entirely unmodified. We prove that under standard initialization, honest messages geometrically satisfy the projection constraint and pass through unmodified (Proposition~\ref{prop:honest_nonclip}), so the projection effectively targets only adversarial messages without requiring an explicit detection decision. To manage the temporal dimension, we complement this with \emph{probabilistic edge dropout}, which reduces the frequency of adversarial exposure. By bounding the magnitude through projection and the frequency through dropout, our framework handles arbitrary adversaries while inherently preserving double stochasticity.

\begin{figure}[t]
\centering
\includegraphics[width=0.85\linewidth]{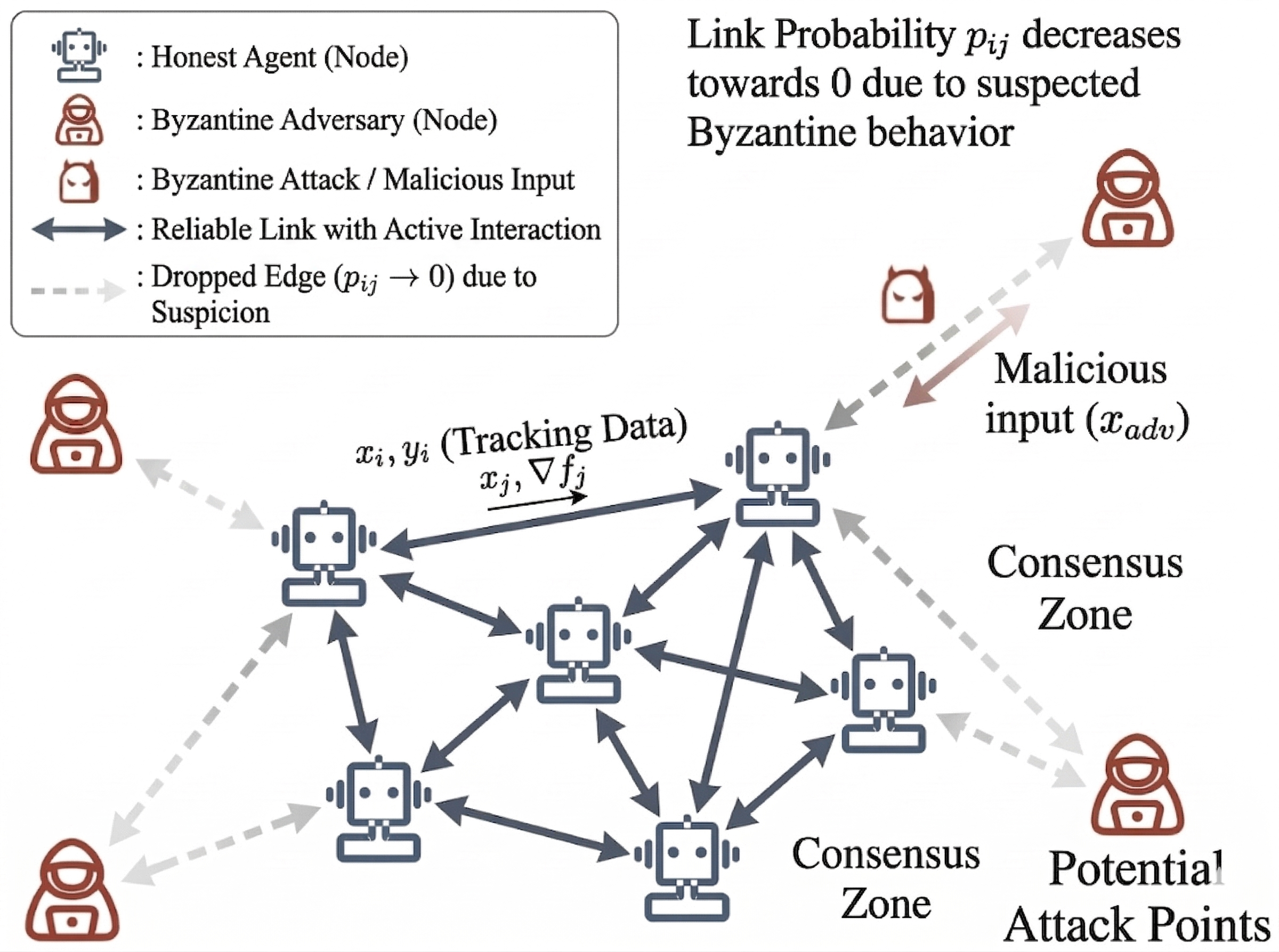}
\caption{GT-PD system model. A decentralized network of $N$ agents communicates over a graph $G$, where $M$ agents are Byzantine adversaries. Honest agents (blue) exchange decision variables via reliable links (solid arrows). All edges are subject to probabilistic dropout based on a local trust score. For edges to suspected adversaries (dashed), the retention probability drops to $p_{ij} \to 0$.}
\label{fig:prob_setting}
\end{figure}

Specifically, we propose \emph{probabilistic edge dropout}. At each iteration, each edge in the communication graph is independently retained with a specified probability and dropped otherwise. Edges connecting to suspected Byzantine agents receive lower retention probabilities, effectively isolating adversarial nodes while preserving connectivity among honest agents. This mechanism is driven by a dual-metric trust score that evaluates neighbor credibility jointly in the spatial (parameter) and temporal (gradient tracking) channels; the retention probability for an edge encodes the trust placed in that neighbor across both channels, preventing adversaries from evading detection by concentrating anomalous behavior in a single channel. The GT-PD system model is illustrated in Figure~\ref{fig:prob_setting}. Honest agents (blue) communicate via reliable links with active interaction, while edges to suspected Byzantine adversaries (red) are subject to probabilistic dropout, with retention probability $p_{ij}$ decreasing toward zero as suspicion increases. Critically, this dropout mechanism preserves the doubly stochastic structure of the mixing matrix almost surely (see Proposition~\ref{prop:ds}), a property that spatial filtering approaches generally cannot guarantee.

In this work, we investigate the conditions and rates under which gradient tracking converges when the communication graph is subjected to probabilistic dropout. A key technical finding is that, under the standard gradient tracking update, Byzantine perturbations to the gradient tracker accumulate over iterations without contraction, which prevents asymptotic convergence to a fixed neighborhood. We address this through two complementary results: an exact convergence guarantee under complete isolation ($p_b = 0$), and a modified algorithm (GT-PD-L) with a continuous leaky integrator that achieves bounded-neighborhood convergence for $p_b > 0$.

Our contributions are as follows:
\begin{enumerate}[label=(\roman*)]
\item We formalize the GT-PD algorithm, which applies a universal self-centered projection to all incoming messages and independently employs probabilistic edge dropout. The projection bounds message magnitude by construction without requiring neighbor classification (Proposition~\ref{prop:enforced_bound}), while dropout controls the frequency of adversarial exposure. We prove that honest messages are never modified by the projection under a quantifiable condition on the projection radius (Proposition~\ref{prop:honest_nonclip}), and that the dropout mixing matrix remains doubly stochastic almost surely, a structural property that spatial filtering methods generally violate (Section~\ref{sec:formulation}).

\item We propose a dual-metric data-driven retention probability (Definition~\ref{def:data_driven_p}) that evaluates neighbor credibility through a threshold-gated composite trust score aggregating normalized deviations in both the spatial and gradient tracking channels. The multiplicative factorization $p_{ij}^k = p_{ij}^{k,x} \cdot p_{ij}^{k,y}$ ensures that a deviation in either channel drives the retention probability toward zero, closing the evasion path exploited by decoupled attacks (Proposition~\ref{prop:decoupled_failure}). The threshold-gated activation guarantees that honest edges whose scores fall below the tolerance $S_0$ are retained with probability one, recovering the zero-penalty regime unconditionally. We establish a universal lower bound $p_{\min} = e^{-\lambda(8-S_0)}$ (Proposition~\ref{prop:p_lower}) that holds for all agent pairs, all iterations, and all data distributions, guaranteeing honest connectivity throughout training (Section~\ref{sec:data_driven}).

\item We introduce the effective honest mixing matrix $\hat{W}^k$ and derive its expected mixing and second-moment contraction factor $\bar{\sigma}_p^2$, formally isolating the honest subspace from adversarial variance injection (Section~\ref{sec:dropout_properties}).
\item We prove that under complete Byzantine isolation ($p_b = 0$), GT-PD achieves linear convergence to a fundamental noise floor determined exclusively by the stochastic gradient variance (Theorem~\ref{thm:exact}, Section~\ref{sec:convergence_exact}). The zero-penalty isolation result shows that complete isolation does not slow consensus when $p_h = 1$.

\item For partial isolation ($p_b > 0$), we propose GT-PD-L, substituting standard tracking with a continuous leaky integrator. We prove linear convergence to a bounded steady-state neighborhood with an explicit coupling parameter $\Phi$~\eqref{eq:Phi_def} that quantifies the cost of universal projection under partial isolation, completely eliminating the need for discrete epoch resets (Theorem~\ref{thm:leaky}, Section~\ref{sec:convergence_leaky}).

\end{enumerate}

\textbf{Related work.} Gradient tracking over deterministic time-varying graphs was analyzed by Nedic et al.\ \cite{nedic2017achieving}. Random communication topologies arising from link failures were studied for consensus by Hatano and Mesbahi \cite{hatano2005agreement} and Tahbaz-Salehi and Jadbabaie \cite{tahbaz2010consensus}, but not in the context of gradient tracking with intentional dropout. Communication compression in decentralized optimization, studied by Koloskova et al.\ \cite{koloskova2019decentralized}, introduces lossy communication analogous to dropout but does not consider Byzantine agents. While Byzantine-robust decentralized optimization has been studied extensively over both static and time-varying networks \cite{su2016multi, yang2019byrdie, peng2020byzantine, kuwaranancharoen2025geometric}, these approaches generally rely on spatial filtering rather than probabilistic dropout mechanisms. Li \cite{li2024efficient} applies robust aggregation (Median, Trimmed Mean, Krum) \emph{within} the gradient tracking loop, handling arbitrary Byzantine messages but breaking double stochasticity. 

He et al.\ \cite{he2023clippedgossip} propose ClippedGossip, a self-centered clipping rule for Byzantine-robust decentralized consensus. While GT-PD shares the self-centered projection principle with ClippedGossip, it differs in three key aspects: (i) GT-PD proves that honest messages are geometrically guaranteed to pass through the projection unmodified under a quantifiable condition (Proposition~\ref{prop:honest_nonclip}), whereas ClippedGossip clips all messages and absorbs the honest clipping error into the convergence bound; (ii) GT-PD integrates the projection with gradient tracking rather than plain consensus; and (iii) GT-PD combines the projection with probabilistic dropout as a complementary frequency-control defense. To the best of our knowledge, GT-PD is the first algorithm that combines universal self-centered projection with probabilistic edge dropout within the gradient tracking framework, simultaneously achieving tolerance to arbitrary Byzantine adversaries, inherent preservation of double stochasticity, and a zero-penalty isolation guarantee under complete dropout. Numerical experiments on MNIST under sign flip, ALIE~\cite{baruch2019little}, and inner product manipulation~\cite{xie2020fall} attacks demonstrate that GT-PD-L achieves the highest accuracy across all attack strategies, outperforming coordinate-wise trimmed mean by up to $4.3$ percentage points under stealth attacks, exhibiting a defense-in-depth property whereby the projection and leaky integrator layers maintain convergence even when the dropout layer is defeated.


\section{Problem Setting and GT-PD}
\label{sec:formulation}

In this section, we formally describe the distributed optimization problem in the presence of Byzantine agents, introduce the prob  abilistic dropout mechanism, and present the GT-PD algorithm. We begin by specifying the network model and the objective.

\subsection{Network and Agents}
Consider a network of $n$ agents indexed by $V = \{1,\ldots,n\}$ communicating over an undirected base graph $G = (V,E)$. A subset $\cB \subset V$ with $|\cB| = b$ consists of Byzantine agents whose updates are arbitrary and adversarial. The honest agents $\cH = V \setminus \cB$ (with $n_\cH \triangleq |\cH|$) collectively solve:
\begin{equation}
\label{eq:problem}
\min_{x \in \R^d}\; f_\cH(x) \triangleq \frac{1}{n_\cH} \sum_{i \in \cH} f_i(x).
\end{equation}

We impose the following standard regularity conditions.

\begin{assumption}[Smoothness]
\label{as:smooth}
Each $f_i$, $i \in \cH$, is $L$-smooth: $\|\nabla f_i(x) - \nabla f_i(y)\| \leq L\|x-y\|$ for all $x,y \in \R^d$.
\end{assumption}

Assumption~\ref{as:smooth} ensures that the gradients do not change too rapidly, which is standard in optimization for establishing convergence rates. It allows us to upper-bound the function value change using a quadratic model, which is the basis of the descent lemmas in Sections~\ref{sec:convergence_exact} and~\ref{sec:convergence_leaky}.

\begin{assumption}[Strong convexity]
\label{as:strong_cvx}
The average objective $f_\cH$ is $\mu$-strongly convex with $\mu > 0$. Define the condition number $\kappa = L/\mu$.
\end{assumption}

We define the gradient heterogeneity at the optimum as $\zeta_f^2 \triangleq \sum_{i \in \cH} \|\nabla f_i(x^*)\|^2$, which quantifies the degree of non-IID data across agents. This quantity is finite under Assumption~\ref{as:smooth} and vanishes if and only if all local objectives share the same minimizer.

Assumption~\ref{as:strong_cvx} guarantees a unique minimizer $x^* = \arg\min f_\cH(x)$ and enables linear (geometric) convergence rates. Together with Assumption~\ref{as:smooth}, these are the standard regularity requirements for proving linear convergence in gradient tracking algorithms \cite{nedic2017achieving,pu2021push}. The condition number $\kappa = L/\mu$ quantifies the difficulty of the optimization: larger $\kappa$ requires smaller step sizes and more iterations.

\begin{assumption}[Stochastic Oracle]
\label{as:stochastic}
At each iteration $k$, every honest agent $i \in \cH$ accesses a stochastic gradient $g_i^k = \nabla F_i(x_i^k, \xi_i^k)$ evaluated on a local data sample $\xi_i^k$. The stochastic gradient is unbiased such that $\E[g_i^k | \cF^k] = \nabla f_i(x_i^k)$, and its variance is strictly bounded by $\E[\|g_i^k - \nabla f_i(x_i^k)\|^2 | \cF^k] \leq \sigma^2$.
\end{assumption}

\begin{remark}[Filtration convention]
\label{rem:filtration}
Throughout the paper, $\cF^k$ denotes the sigma-algebra generated by all randomness realized through step $k$, including the stochastic samples $\{\xi_i^l\}_{l \leq k, i \in \cH}$ and the dropout masks $\{\hat{W}^l\}_{l \leq k}$. Under this filtration, the iterates $x_i^{k+1}$ and $y_i^{k+1}$ are $\cF^{k+1}$-measurable for all $i \in \cH$. Importantly, the stochastic gradient $g_i^k = \nabla F_i(x_i^k, \xi_i^k)$ depends on the sample $\xi_i^k$ drawn at step $k$, but the decision variable $x_i^k$ was determined at step $k-1$ and is therefore independent of $\xi_i^k$. Under the data-driven retention probability
(Definition~\ref{def:data_driven_p}), the probabilities
$p_{ij}^k$ depend on $x_i^{k-1}$, $x_j^{k-1}$,
$y_i^{k-1}$, and $y_j^{k-1}$ through the composite trust
score $S_{ij}^k = S_{ij}^{k,x} + S_{ij}^{k,y}$. All four
quantities are determined at step $k-1$ and are therefore
$\cF^{k-1}$-measurable. Consequently, the dropout mask $W^k$ is generated by Bernoulli draws that are conditionally independent of $\cF^{k-1}$ given $\{p_{ij}^k\}$, preserving the factorization structure required for the second-moment contraction bounds in Section~\ref{sec:dropout_properties}. This independence is the key property that allows cross-terms involving both the stochastic noise and the dropout randomness to vanish in expectation, as exploited in Lemma~\ref{lem:opt_exact} and Lemma~\ref{lem:consensus_exact}.
\end{remark}

\subsection{Base Mixing Matrix}
\begin{definition}[Base mixing matrix]
\label{def:base_W}
Let $W \in \R^{n \times n}$ be a symmetric doubly stochastic matrix compatible with $G$: $W_{ij} > 0$ if $(i,j) \in E$ or $i = j$, $W_{ij} = 0$ otherwise, $W\ones = \ones$, and $\ones^\top W = \ones^\top$.
\end{definition}

A concrete construction is the Metropolis-Hastings rule: $W_{ij} = 1/(1 + \max\{d_i, d_j\})$ for $(i,j) \in E$, with $W_{ii} = 1 - \sum_{j \neq i} W_{ij}$, where $d_i$ is the degree of node $i$.

Double stochasticity of the mixing matrix is crucial for gradient tracking: it ensures that the consensus averaging preserves the mean of the agents' iterates, which in turn allows the gradient tracker to converge to the exact global gradient rather than a biased estimate. The dropout mechanism in Definition~\ref{def:dropout} is specifically designed to preserve this property.

\subsection{Probabilistic Edge Dropout}
\begin{definition}[Dropout mask]
\label{def:dropout}
At iteration $k$, each edge $(i,j) \in E$ is independently retained with probability $p_{ij} \in (0,1]$ and dropped with probability $1-p_{ij}$. Formally, let $\xi_{ij}^k \sim \text{Bernoulli}(p_{ij})$ with $\xi_{ij}^k = \xi_{ji}^k$ (symmetric dropout). The dropout mixing matrix is:
\begin{equation}
\label{eq:Wk}
[W^k]_{ij} = \begin{cases} W_{ij} \cdot \xi_{ij}^k & i \neq j, \\ 1 - \sum_{j \neq i} W_{ij} \cdot \xi_{ij}^k & i = j. \end{cases}
\end{equation}
\end{definition}

\begin{proposition}[Doubly stochastic property]
\label{prop:ds}
If $W$ is symmetric, then $W^k$ defined by~\eqref{eq:Wk} is symmetric doubly stochastic almost surely for every realization of the dropout mask.
\end{proposition}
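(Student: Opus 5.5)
The argument is a direct verification of symmetry and of the row and column sums, carried out for an arbitrary fixed realization of the Bernoulli variables $\{\xi_{ij}^k\}$. Nothing probabilistic is used beyond the fact that every realization satisfies $\xi_{ij}^k=\xi_{ji}^k\in\{0,1\}$. Consequently, "almost surely" in Proposition~\ref{prop:ds} in fact holds surely. Throughout, we adopt the convention $\xi_{ij}^k=0$ whenever $(i,j)\notin E$ and $i\neq j$. This is harmless because $W_{ij}=0$ for such pairs by Definition~\ref{def:base_W}, so those terms vanish regardless of $\xi_{ij}^k$.

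\textbf{Symmetry.} The diagonal entries are trivially symmetric. For $i\neq j$, we have $[W^k]_{ij}=W_{ij}\xi_{ij}^k=W_{ji}\xi_{ji}^k=[W^k]_{ji}$. This uses the symmetry of $W$ together with the symmetric dropout convention $\xi_{ij}^k=\xi_{ji}^k$ from Definition~\ref{def:dropout}.

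\textbf{Row sums.} For each $i$, the diagonal entry is defined so that the row sums to one:
\[
\textstyle\sum_j [W^k]_{ij}=\bigl(1-\sum_{j\neq i}W_{ij}\xi_{ij}^k\bigr)+\sum_{j\neq i}W_{ij}\xi_{ij}^k=1 .
\]
Hence $W^k\ones=\ones$.

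\textbf{Column sums.} Since $W^k$ is symmetric, $\ones^\top W^k=(W^k\ones)^\top=\ones^\top$. Therefore $W^k$ is symmetric doubly stochastic.

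\textbf{Nonnegativity.} If doubly stochastic is read to include entrywise nonnegativity, one further check is needed, and this is the only step with any content. The off-diagonal entries satisfy $W_{ij}\xi_{ij}^k\ge 0$ because $W_{ij}\ge 0$ and $\xi_{ij}^k\in\{0,1\}$. For the diagonal entries,
\[
[W^k]_{ii}=1-\sum_{j\neq i}W_{ij}\xi_{ij}^k\;\ge\;1-\sum_{j\neq i}W_{ij}=W_{ii}>0 .
\]
The inequality holds because $\xi_{ij}^k\le 1$, and the last equality is the row-stochasticity of $W$. Thus dropout can only increase self-weights, every entry of $W^k$ is nonnegative, and the diagonal remains strictly positive.
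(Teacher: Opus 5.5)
Your proof is correct and follows the paper's argument step for step. Row sums come from the definition of the diagonal, and symmetry from $W_{ij}=W_{ji}$ together with $\xi_{ij}^k=\xi_{ji}^k$. Column sums then follow from symmetry, and nonnegativity from $[W^k]_{ii}\ge 1-\sum_{j\neq i}W_{ij}=W_{ii}$. Your extra remarks are valid minor refinements of the paper's version: the convention $\xi_{ij}^k=0$ off $E$, strict positivity $W_{ii}>0$ from Definition~\ref{def:base_W}, and the observation that the claim holds surely rather than merely almost surely.
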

\begin{proof}
For row stochasticity: $\sum_{j} [W^k]_{ij} = [W^k]_{ii} + \sum_{j \neq i} W_{ij}\xi_{ij}^k = 1$ by definition of the diagonal. For symmetry: $[W^k]_{ij} = W_{ij}\xi_{ij}^k = W_{ji}\xi_{ji}^k = [W^k]_{ji}$ since $W$ is symmetric and $\xi_{ij}^k = \xi_{ji}^k$. Column stochasticity follows from symmetry and row stochasticity. Nonnegativity holds since $W_{ij} \geq 0$ and $\xi_{ij}^k \in \{0,1\}$; the diagonal satisfies $[W^k]_{ii} = 1 - \sum_{j \neq i} W_{ij}\xi_{ij}^k \geq 1 - \sum_{j \neq i} W_{ij} = W_{ii} \geq 0$.
\end{proof}

\begin{remark}
Proposition~\ref{prop:ds} highlights a structural advantage of the dropout mechanism. In spatial filtering defenses such as trimmed mean or Krum, each agent selectively excludes certain neighbors' updates based on their values. This exclusion removes weight from specific rows and columns of the mixing matrix without redistributing it, which breaks double stochasticity. Once double stochasticity is lost, the consensus average drifts: honest agents no longer converge to the true mean of their iterates but instead to a biased point that depends on the filtering rule \cite{wu2023byzantine}. GT-PD avoids this failure mode entirely. Because the diagonal of $W^k$ absorbs the weight of every dropped edge (see Definition~\ref{def:dropout}), the row and column sums remain equal to one for every realization of the dropout mask, regardless of how many edges are dropped or which retention probabilities are chosen.
\end{remark}

\begin{assumption}[Conditional independence]
\label{as:independence}
At each iteration $k$, the dropout variables $\{\xi_{ij}^k\}_{(i,j) \in E}$ are conditionally independent across edges given $\cF^{k-1}$, with $\xi_{ij}^k | \cF^{k-1} \sim \text{Bernoulli}(p_{ij}^k)$, where $p_{ij}^k$ is $\cF^{k-1}$-measurable. There exists a constant $p_{\min} > 0$ such that $p_{ij}^k \geq p_{\min}$ for all $(i,j) \in E$ and all $k \geq 0$. Under the data-driven retention probability (Definition~\ref{def:data_driven_p}), this is satisfied with $p_{\min} = e^{-8\lambda}$ (Proposition~\ref{prop:p_lower}).
\end{assumption}

Assumption~\ref{as:independence} states that, conditioned on the history $\cF^{k-1}$, the dropout decisions at iteration $k$ are fresh and uncorrelated across edges. The retention probabilities $p_{ij}^k$ may depend on past iterates (as in the data-driven estimator of Definition~\ref{def:data_driven_p}), but the only source of randomness in the dropout mask $W^k$ beyond $\cF^{k-1}$ is the collection of independent Bernoulli coin flips. The uniform lower bound $p_{\min} > 0$ is satisfied by construction under the data-driven estimator with $p_{\min} = e^{-8\lambda}$ (Proposition~\ref{prop:p_lower}), and ensures that the honest subgraph remains connected in expectation. The exponent doubles relative to a single-channel score because the composite score $S_{ij}^k$ sums independent contributions from two channels, each bounded by $4$.

\subsection{Self-Centered Projection}
We introduce a message-level defense that bounds the magnitude of adversarial perturbations without modifying the mixing matrix.

\begin{definition}[Self-centered projection]
\label{def:clip}
For a receiving agent $i \in \cH$ and a projection radius $\tau > 0$, the self-centered projection of an incoming message $\tilde{x}_j^k$ is defined as
\begin{equation}
\label{eq:clip}
\cP_{\tau,i}(\tilde{x}_j^k) \triangleq x_i^k + \min\!\left(1,\; \frac{\tau}{\|\tilde{x}_j^k - x_i^k\|}\right)(\tilde{x}_j^k - x_i^k).
\end{equation}
If $\|\tilde{x}_j^k - x_i^k\| \leq \tau$, the message passes through unmodified. Otherwise, it is projected onto the boundary of the $\ell_2$-ball of radius $\tau$ centered at $x_i^k$, preserving the direction of $\tilde{x}_j^k - x_i^k$. When $\tilde{x}_j^k = x_i^k$, the projection returns $x_i^k$ by convention, since the perturbation is zero. The same projection is applied to the gradient tracker messages $\tilde{y}_j^k$.
\end{definition}

In the GT-PD framework, the self-centered projection is applied universally to all incoming messages, regardless of the sender's identity. No detection decision is required to determine which messages are projected. A message from any neighbor passes through unmodified whenever it falls within the $\tau$-ball centered at the receiving agent's state; otherwise, it is clipped. As we establish in Proposition~\ref{prop:honest_nonclip}, under standard initialization and appropriate choice of $\tau$, the projection never activates on honest-honest messages throughout the optimization trajectory.

\begin{proposition}[Enforced perturbation bound]
\label{prop:enforced_bound}
Under self-centered projection with radius $\tau > 0$, the aggregate Byzantine perturbation on any honest agent $i \in \cH$ satisfies, deterministically and for all $k \geq 0$:
\begin{equation}
\label{eq:enforced_zeta}
\left\| \sum_{m \in \cB} [W^k]_{im}\left(\cP_{\tau,i}(\tilde{x}_m^k) - x_i^k\right)\right\| \leq \delta_i^{\cB}\, \tau,
\end{equation}
where $\delta_i^{\cB} \triangleq \sum_{m \in \cB} [W^k]_{im}$ is the aggregate weight of Byzantine neighbors of agent $i$ at iteration $k$. The bound holds regardless of the messages $\tilde{x}_m^k$ sent by Byzantine agents.
\end{proposition}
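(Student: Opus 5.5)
The plan is a direct two-step argument: a per-message bound from the geometry of the projection in Definition~\ref{def:clip}, followed by the triangle inequality weighted by the (nonnegative) entries of $W^k$. No probabilistic reasoning is needed, since the claim is deterministic and must hold for every realization of the dropout mask and every adversarial message.

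\textbf{Step 1 (per-message bound).} Fix $k$, an honest agent $i \in \cH$, and a Byzantine neighbor $m \in \cB$. Set $v \triangleq \tilde{x}_m^k - x_i^k$. By \eqref{eq:clip}, $\cP_{\tau,i}(\tilde{x}_m^k) - x_i^k = \min(1, \tau/\|v\|)\, v$. There are three cases.
\begin{itemize}
\item If $v = 0$, the convention in Definition~\ref{def:clip} gives a zero displacement.
\item If $0 < \|v\| \leq \tau$, the factor equals $1$, so the norm is $\|v\| \leq \tau$.
\item If $\|v\| > \tau$, the norm is $(\tau/\|v\|)\,\|v\| = \tau$.
\end{itemize}
Hence $\|\cP_{\tau,i}(\tilde{x}_m^k) - x_i^k\| \leq \tau$ in all cases, whatever the value of $\tilde{x}_m^k$.

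\textbf{Step 2 (aggregation).} Apply the triangle inequality to the sum in \eqref{eq:enforced_zeta}. By Proposition~\ref{prop:ds}, each entry $[W^k]_{im}$ is nonnegative: off-diagonal entries equal $W_{im}\xi_{im}^k \geq 0$, and $i \neq m$ because $i \in \cH$ and $m \in \cB$. This gives $|[W^k]_{im}| = [W^k]_{im}$, so
\[
\Bigl\|\sum_{m\in\cB}[W^k]_{im}\bigl(\cP_{\tau,i}(\tilde{x}_m^k)-x_i^k\bigr)\Bigr\| \leq \sum_{m\in\cB}[W^k]_{im}\,\tau = \delta_i^{\cB}\tau.
\]
Non-neighbors and dropped edges contribute zero weight, so they are harmless.

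The only delicate point is the sign of the weights. Nonnegativity is what allows the triangle-inequality bound to collapse to $\delta_i^{\cB}\tau$ rather than $\sum_m |[W^k]_{im}|\,\tau$. Proposition~\ref{prop:ds} settles this, so I do not expect a real obstacle. The same argument applies verbatim to the tracker messages $\tilde{y}_m^k$, which are projected around $y_i^k$.
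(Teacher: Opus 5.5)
Your proposal is correct and follows essentially the same argument as the paper: a per-message bound $\|\cP_{\tau,i}(\tilde{x}_m^k) - x_i^k\| \leq \tau$ from Definition~\ref{def:clip}, followed by the triangle inequality with the nonnegative weights $[W^k]_{im}$. Your explicit case split and your remark that the relevant entries are off-diagonal (so they equal $W_{im}\xi_{im}^k \geq 0$) only make the paper's one-line justification more explicit.
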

\begin{proof}
By Definition~\ref{def:clip}, $\|\cP_{\tau,i}(\tilde{x}_m^k) - x_i^k\| \leq \tau$ for all $m \in \cB$. Applying the triangle inequality:
\begin{align*}
\left\| \sum_{m \in \cB} [W^k]_{im}\left(\cP_{\tau,i}(\tilde{x}_m^k) - x_i^k\right)\right\| 
&\leq \sum_{m \in \cB} [W^k]_{im}\, \|\cP_{\tau,i}(\tilde{x}_m^k) - x_i^k\| \\
&\leq \sum_{m \in \cB} [W^k]_{im}\, \tau = \delta_i^{\cB}\, \tau.
\end{align*}
Since $W^k$ has non-negative entries and the bound $\|\cP_{\tau,i}(\tilde{x}_m^k) - x_i^k\| \leq \tau$ is deterministic by construction, the result holds for every realization of the dropout mask and for all $k \geq 0$, regardless of the Byzantine messages $\tilde{x}_m^k$.
\end{proof}

Under the self-centered projection, the bounded Byzantine perturbation condition is enforced by construction. By Proposition~\ref{prop:enforced_bound}, the aggregate perturbation on each honest agent satisfies~\eqref{eq:enforced_zeta} deterministically for all $k \geq 0$. We define the maximum base weight of Byzantine neighbors across honest agents as $\delta_{\max}^{\cB} \triangleq \max_{i \in \cH} \sum_{m \in \cB} W_{im}$. Taking the worst case over $i \in \cH$ and noting that $\delta_i^{\cB} \leq \delta_{\max}^{\cB}$, we define the effective perturbation bound $\zeta \triangleq \delta_{\max}^{\cB} \cdot \tau$. The same bound applies to the gradient tracker variables $y$. The perturbation bound is therefore a structural design parameter controlled by the projection radius $\tau$, rather than a condition on the adversary's behavior. The adversary is free to send arbitrary, unbounded messages; the self-centered projection ensures that only a bounded perturbation reaches the consensus update.

The following proposition establishes that, under complete Byzantine isolation and standard initialization, honest messages are never modified by the self-centered projection with high probability. This converts the honest non-clipping property from a detection assumption into a geometric consequence of the algorithm's convergence.

\begin{proposition}[Honest non-clipping under complete isolation]
\label{prop:honest_nonclip}
Suppose $p_b = 0$ and all honest agents share a common spatial initialization $x_i^0 = x_0$ for all $i \in \cH$, so that $C^0 = 0$. Define the initial Lyapunov value $V^0 = \|u^0\|^2 + c\,T^0$, where $T^0 = \sum_{i \in \cH}\|y_i^0 - \ybar^0\|^2$, and let
\begin{equation}
\label{eq:Vmax}
V_{\max} \triangleq \max\!\left(V^0,\;\frac{\epsilon_{\textup{stoch}}}{1 - \rho}\right),
\end{equation}
where $\epsilon_{\textup{stoch}}$ and $\rho$ are defined in Theorem~\ref{thm:exact}. Note that under $p_b = 0$, the quantity $V_{\max}$ is independent of the projection radius $\tau$. Fix a time horizon $K \geq 1$ and a confidence level $\epsilon \in (0,1)$. Under the step size conditions of Theorem~\ref{thm:exact}, if the projection radius satisfies
\begin{equation}
\label{eq:tau_nonclip}
\tau \geq 2\sqrt{\frac{(K+1)\, V_{\max}}{\epsilon\, \min(a,\,c)}},
\end{equation}
then with probability at least $1 - \epsilon$, the self-centered projection $\cP_{\tau,i}$ does not modify any honest message for all iterations $k = 0, 1, \ldots, K$:
\begin{equation}
\Pr\!\left(\max_{0 \leq k \leq K}\;\max_{i,j \in \cH}\|x_j^k - x_i^k\| \leq \tau \;\text{ and }\; \max_{0 \leq k \leq K}\;\max_{i,j \in \cH}\|y_j^k - y_i^k\| \leq \tau\right) \geq 1 - \epsilon.
\end{equation}
Consequently, on this event the convergence guarantees of Theorem~\ref{thm:exact} hold exactly as stated.
\end{proposition}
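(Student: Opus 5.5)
The plan is to turn the Lyapunov bound of Theorem~\ref{thm:exact} into a uniform-in-time bound on expected pairwise distances, and then combine a union bound over $k$ with Markov's inequality. I take the Lyapunov function of Theorem~\ref{thm:exact} to be of the form $V^k = \|u^k\|^2 + a\,C^k + c\,T^k$, where $u^k$ is the optimality error of the honest average, $C^k = \sum_{i\in\cH}\|x_i^k - \xbar^k\|^2$ is the consensus error, and $T^k$ is the tracking disagreement. Since $C^0 = 0$, this reduces to $V^0 = \|u^0\|^2 + c\,T^0$ at $k=0$, which explains the definition in the statement. I also use the one-step contraction $\E[V^{k+1}] \le \rho\,\E[V^k] + \epsilon_{\textup{stoch}}$ established there, with $\rho<1$ under the step size conditions.

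\textbf{Step 1 (uniform expected bound).} Unroll the recursion to get
\[
\E[V^k] \le \rho^k V^0 + \epsilon_{\textup{stoch}}\frac{1-\rho^k}{1-\rho} \le V_{\max}.
\]
The right-hand side is a convex combination of $V^0$ and $\epsilon_{\textup{stoch}}/(1-\rho)$, with weights $\rho^k$ and $1-\rho^k$. Under $p_b=0$ no Byzantine perturbation enters the recursion, so $\rho$, $\epsilon_{\textup{stoch}}$ and $V^0$ involve no $\zeta=\delta_{\max}^{\cB}\tau$ term. This is what makes $V_{\max}$ independent of $\tau$ and keeps the argument from being circular.

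\textbf{Step 2 (from $V$ to pairwise distances).} For $i,j\in\cH$,
\[
\|x_j^k-x_i^k\|^2 \le 2\|x_j^k-\xbar^k\|^2 + 2\|x_i^k-\xbar^k\|^2 \le 4\,C^k \le 4V^k/a,
\]
and similarly $\|y_j^k-y_i^k\|^2 \le 4T^k \le 4V^k/c$. Hence both maxima are at most $\tau$ whenever $V^k \le \tau^2\min(a,c)/4$.

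\textbf{Step 3 (probability).} By Markov's inequality and Step 1,
\[
\Pr\bigl(V^k > \tau^2\min(a,c)/4\bigr) \le \frac{4V_{\max}}{\tau^2\min(a,c)}.
\]
A union bound over $k=0,\dots,K$ gives failure probability at most $4(K+1)V_{\max}/(\tau^2\min(a,c))$, which is at most $\epsilon$ exactly under~\eqref{eq:tau_nonclip}.

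\textbf{Main obstacle.} The delicate point is that Theorem~\ref{thm:exact} is derived assuming honest messages pass through unclipped, while we want to conclude that this assumption holds. I would handle this with a coupling. Define the \emph{unprojected} honest dynamics, in which honest-honest messages are never clipped and Byzantine edges are absent because $p_b=0$. Theorem~\ref{thm:exact} applies to this process unconditionally. The actual GT-PD trajectory coincides with it on the event that every pairwise distance stays at most $\tau$ up to time $K$: by induction on $k$, if the trajectories agree up to step $k$ and the distances at step $k$ are at most $\tau$, then no projection activates and the step-$(k+1)$ iterates agree as well. Steps 1--3 applied to the unprojected process bound the probability of that event. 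The final clause of the statement then follows, because on this event GT-PD equals the process to which Theorem~\ref{thm:exact} applies.
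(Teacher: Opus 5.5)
Your proposal is correct and follows essentially the same route as the paper. Both arguments use a virtual unprojected clean process to which Theorem~\ref{thm:exact} applies unconditionally, Markov's inequality with a union bound over the $K+1$ iterations (the paper sets $\delta=\epsilon/(K+1)$ where you threshold directly at $\tau^2\min(a,c)/4$), and an induction showing the real and virtual trajectories coincide on the success event.

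Your Step~1 is in fact more careful than the paper's. The paper writes $\E[V_{\mathrm{ref}}^k]\le\rho^kV^0+\epsilon_{\mathrm{stoch}}/(1-\rho)\le V_{\max}$, and the second inequality fails in general, e.g.\ at $k=0$ whenever $V^0,\epsilon_{\mathrm{stoch}}>0$. Your convex-combination form $\rho^kV^0+(1-\rho^k)\,\epsilon_{\mathrm{stoch}}/(1-\rho)$ is genuinely bounded by $V_{\max}$.
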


\begin{proof}
Consider a virtual reference system $(\mathbf{X}_{\textup{ref}}^k, \mathbf{Y}_{\textup{ref}}^k)$ initialized identically to the real system and executing the clean dynamics~\eqref{eq:X_clean}--\eqref{eq:Y_clean} \emph{without} applying self-centered projection to any message. Since $p_b = 0$, both the real and virtual systems have $\mathbf{E}_{x,\cH}^k = \mathbf{E}_{y,\cH}^k = 0$ (no Byzantine perturbation). The virtual system satisfies the Lyapunov recursion of Theorem~\ref{thm:exact} \emph{unconditionally} (since no projection is applied, the analysis requires no honest non-clipping assumption), giving
\begin{equation}
\E[V_{\textup{ref}}^k] \leq \rho^k V^0 + \frac{\epsilon_{\text{stoch}}}{1 - \rho} \leq V_{\max} \quad \text{for all } k \geq 0.
\end{equation}
By Markov's inequality applied to the non-negative random variable $V_{\textup{ref}}^k$: $\Pr(V_{\textup{ref}}^k > V_{\max}/\delta) \leq \delta$ for any $\delta > 0$. On the event $\{V_{\textup{ref}}^k \leq V_{\max}/\delta\}$, the consensus error satisfies $C_{\textup{ref}}^k \leq V_{\max}/(\delta\, a)$ and the tracking disagreement satisfies $T_{\textup{ref}}^k \leq V_{\max}/(\delta\, c)$. For any honest pair $i, j \in \cH$:

\begin{equation}
\|x_{\textup{ref},j}^k - x_{\textup{ref},i}^k\|^2
\leq 4C_{\textup{ref}}^k
\leq \frac{4V_{\max}}{\delta\, a}.
\end{equation}

We now establish by induction that the real system (with universal projection) and the virtual system produce identical trajectories on the high-probability event.

\textbf{Base case ($k = 0$):} Both systems share the same initialization, so they are identical. By the common spatial initialization $C^0 = 0$, all honest spatial messages satisfy $\|x_j^0 - x_i^0\| = 0 \leq \tau$. For the gradient trackers, $T^0 \leq V^0/c \leq V_{\max}/c$, so $\max_{i,j}\|y_j^0 - y_i^0\| \leq 2\sqrt{V_{\max}/c} \leq \tau$ by~\eqref{eq:tau_nonclip}.

\textbf{Inductive step:} Suppose the real and virtual systems are identical through iteration $k$, and $V_{\textup{ref}}^l \leq V_{\max}/\delta$ for all $l \leq k$. Then $\max_{i,j}\|x_j^k - x_i^k\| = \max_{i,j}\|x_{\textup{ref},j}^k - x_{\textup{ref},i}^k\| \leq 2\sqrt{V_{\max}/(\delta\,a)} \leq \tau$ (and identically for $y$), so no honest message triggers the projection at iteration $k$. The real system's update at iteration $k$ is therefore identical to the virtual system's, and the trajectories remain identical through iteration $k+1$.

By a union bound over $K + 1$ iterations: $\Pr(\exists\, k \leq K : V_{\textup{ref}}^k > V_{\max}/\delta) \leq (K+1)\delta$. Setting $\delta = \epsilon/(K+1)$ ensures the failure probability is at most $\epsilon$. On the success event:

\begin{equation}
\max_{0 \leq k \leq K}\max_{i,j \in \cH} \|x_j^k - x_i^k\| \leq 2\sqrt{\frac{V_{\max}}{\delta\, a}} = 2\sqrt{\frac{(K+1)V_{\max}}{\epsilon\, a}}
\end{equation}

Condition~\eqref{eq:tau_nonclip} ensures this is at most $\tau$. On this event, the real and virtual systems are identical, so all convergence guarantees of Theorem~\ref{thm:exact} hold for the real system.
\end{proof}

\begin{remark}
\label{rem:universal_projection}
Proposition~\ref{prop:honest_nonclip} establishes that universal projection is safe for honest agents with high probability under complete isolation, without requiring any detection oracle. The projection radius $\tau$ scales as $\mathcal{O}(\sqrt{K/\epsilon})$, reflecting two fundamental costs: the time horizon $K$ (longer trajectories require more headroom to accommodate stochastic fluctuations across all iterations) and the confidence level $\epsilon$ (higher confidence requires larger margins). Crucially, under $p_b = 0$ the quantity $V_{\max}$ does not depend on $\tau$, so the condition~\eqref{eq:tau_nonclip} is non-circular: the projection radius can be computed directly from the problem parameters and initialization. The $\sqrt{K}$ growth is the price of controlling the tail probability of the stochastic gradient noise uniformly over the trajectory; for any fixed iteration $k$, the required $\tau$ is $\mathcal{O}(\sqrt{1/\epsilon})$ without the horizon dependence.
\end{remark}

\begin{remark}[Honest clipping error under partial isolation]
\label{rem:honest_clip_partial}
Under partial isolation ($p_b > 0$), the steady-state Lyapunov bound $V_{\max}$ depends on $\zeta = \delta_{\max}^{\cB}\,\tau$, creating a circular dependency that prevents the direct application of Proposition~\ref{prop:honest_nonclip}. We therefore do not require honest non-clipping in this regime. Instead, as we will formally establish in Lemma~\ref{lem:honest_clip}, the honest clipping error is structurally bounded by the consensus error. Because the projection can only reduce the deviation from $x_i^k$, any honest clipping error is of order $\mathcal{O}(C^k)$ and $\mathcal{O}(T^k)$, which are already state variables in the Lyapunov function. These additional terms modify the coefficients of the one-step bounds, requiring the Lyapunov weight $a$ and the step size $\alpha$ to be adjusted by constant factors. The convergence rate $\rho$ and the asymptotic structure of Theorem~\ref{thm:leaky} remain unchanged, with the same scaling in all problem parameters. In particular, the steady-state error floor $(\delta_{\max}^{\cB})^2\tau^2/\beta^2$ is unaffected because the honest clipping error vanishes as $C^k, T^k \to 0$.
\end{remark}

\begin{remark}
\label{rem:clip_ds}
The perturbation bound established by Proposition~\ref{prop:enforced_bound} is a design parameter controlled by the projection radius $\tau$, rather than an exogenous condition on the adversary. Crucially, the self-centered projection modifies only the message values before they enter the weighted sum; the mixing matrix $W^k$ is never altered. Therefore, the doubly stochastic property established in Proposition~\ref{prop:ds} is trivially preserved under universal self-centered projection. Under complete isolation ($p_b = 0$), Proposition~\ref{prop:honest_nonclip} guarantees that honest messages pass through unmodified with high probability. Under partial isolation ($p_b > 0$), any honest clipping error is bounded by $\mathcal{O}(C^k)$ and absorbed by the Lyapunov analysis (Remark~\ref{rem:honest_clip_partial} and Lemma~\ref{lem:honest_clip}).
\end{remark}

\begin{remark}[Separation of defense layers]
\label{rem:detection}
In the GT-PD framework, the two defense layers operate independently and serve complementary roles. The self-centered projection (Definition~\ref{def:clip}) is applied universally to all incoming messages and requires no neighbor classification; it bounds the magnitude of any message that deviates beyond $\tau$ from the receiving agent's state. Under complete isolation, Proposition~\ref{prop:honest_nonclip} guarantees that honest messages fall within the $\tau$-ball with high probability. Under partial isolation, any residual honest clipping error is bounded by $\mathcal{O}(C^k)$ and does not affect the convergence structure (Remark~\ref{rem:honest_clip_partial}). The data-driven retention probability (Definition~\ref{def:data_driven_p}) governs the dropout mechanism and determines the frequency of communication with each neighbor. Agents whose parameters or gradient trackers deviate substantially from their neighbors in either channel receive low retention probabilities and are dropped with high probability. The composite score $S_{ij}^k = S_{ij}^{k,x} + S_{ij}^{k,y}$ ensures that a Byzantine agent cannot evade dropout by crafting messages that are anomalous in only one of the two channels. The detection literature provides alternative dropout score mechanisms, including credibility-assessment-based scoring \cite{hou2022credibility}, local-similarity filtering \cite{fang2024byzantine}, and dual-domain clustering \cite{sun2024dfl}, any of which can replace Definition~\ref{def:data_driven_p} without affecting the projection layer.
\end{remark}

\subsection{GT-PD Algorithm}
We now present the GT-PD algorithm. Each honest agent $i \in \cH$ maintains a decision variable $x_i^k \in \R^d$ and a gradient tracker $y_i^k \in \R^d$, updated as:
\begin{align}
x_i^{k+1} &= \sum_{j=1}^{n} [W^k]_{ij}\, \cP_{\tau,i}(\hat{x}_j^k) - \alpha\, y_i^k, \label{eq:x_update}\\
y_i^{k+1} &= \sum_{j=1}^{n} [W^k]_{ij}\, \cP_{\tau,i}(\hat{y}_j^k) + g_i^{k+1} - g_i^k, \label{eq:y_update}
\end{align}
where $\hat{x}_j^k = x_j^k$ for $j \in \cH$ and $\hat{x}_j^k = \tilde{x}_j^k$ for $j \in \cB$ (and similarly for $\hat{y}_j^k$), $\alpha > 0$ is the step size, $\cP_{\tau,i}$ is the self-centered projection from Definition~\ref{def:clip}, and $y_i^0 = g_i^0$ for all $i \in \cH$.

The rationale behind GT-PD is that each agent applies the self-centered projection $\cP_{\tau,i}$ to \emph{every} incoming message before incorporating it into the consensus step. This universal application eliminates the need for any binary detection decision: the agent does not need to classify its neighbors as honest or adversarial. Instead, the projection geometry handles the distinction automatically. A message from any neighbor $j$ whose state satisfies $\|\hat{x}_j^k - x_i^k\| \leq \tau$ passes through the projection unmodified, while a message with $\|\hat{x}_j^k - x_i^k\| > \tau$ is clipped to the boundary of the $\ell_2$-ball of radius $\tau$. For honest neighbors near consensus, the projection is inactive (Proposition~\ref{prop:honest_nonclip}). For Byzantine agents sending adversarial messages, the projection enforces a per-message magnitude bound of $\tau$ (Proposition~\ref{prop:enforced_bound}). The data-driven retention probability $p_{ij}^k$ (Definition~\ref{def:data_driven_p}) operates independently as the dropout mechanism, controlling the \emph{frequency} of adversarial exposure. The expected aggregate Byzantine perturbation per iteration is therefore bounded by $\delta_i^{\cB} \cdot p_b \cdot \tau$, which vanishes under complete isolation ($p_b = 0$) regardless of $\tau$. The stochastic gradient correction term $g_i^{k+1} - g_i^k$ maintains the gradient tracking property, ensuring that the agents' gradient estimators track the true global gradient of the honest objective $f_\cH$ in expectation, seamlessly absorbing the statistical noise introduced by the stochastic oracle.

\subsection{Effective Honest Subsystem}
To precisely analyze the convergence of GT-PD without being contaminated by unbounded adversarial states, we isolate the dynamics over the honest agents. The key idea is to construct an effective mixing matrix that operates only on the honest subspace, absorbing the weight from dropped Byzantine edges into the self-loops.

\begin{definition}[Effective honest mixing matrix]
\label{def:effective_W}
For the honest agents $\cH$, we define the effective mixing matrix $\hat{W}^k \in \R^{|\cH| \times |\cH|}$ by redistributing each honest agent's weight on Byzantine neighbors back to its self-loop. Formally, for $i,j \in \cH$:
\begin{equation}
[\hat{W}^k]_{ij} = \begin{cases} [W^k]_{ij} & i \neq j, \\ [W^k]_{ii} + \sum_{m \in \cB} [W^k]_{im} & i = j. \end{cases}
\end{equation}
Because $W^k$ is symmetric and doubly stochastic over all $n$ agents (Proposition~\ref{prop:ds}), it follows that $\hat{W}^k$ is symmetric and doubly stochastic over the $n_\cH$ honest nodes almost surely. We also define the honest consensus projection matrix $J_\cH = \frac{1}{n_\cH}\ones\ones^\top \in \R^{n_\cH \times n_\cH}$, which averages any vector over the honest agents. The matrix $(I_\cH - J_\cH)$ projects onto the disagreement subspace: for any stacked vector of honest agents' states, $(I_\cH - J_\cH)\mathbf{X}_\cH^k$ extracts the deviation of each agent from the honest average.
\end{definition}

We define the three quantities that form the Lyapunov function
used throughout the convergence analysis:
\begin{equation}
\label{eq:lyap_quantities}
\begin{split}
u^k &\triangleq \xbar^k - x^*, \qquad
C^k \triangleq \Fnorm{(I_\cH - J_\cH)\mathbf{X}_\cH^k}^2
= \sum_{i \in \cH}\|x_i^k - \xbar^k\|^2, \\
T^k &\triangleq \Fnorm{(I_\cH - J_\cH)\mathbf{Y}_\cH^k}^2
= \sum_{i \in \cH}\|y_i^k - \ybar^k\|^2,
\end{split}
\end{equation}
where $\xbar^k = \frac{1}{n_\cH}\sum_{i \in \cH} x_i^k$ and
$\ybar^k = \frac{1}{n_\cH}\sum_{i \in \cH} y_i^k$ are the
honest averages. The quantity $u^k$ measures the optimality gap,
$C^k$ the consensus disagreement among honest agents, and $T^k$
the tracking disagreement of the gradient estimators.

In compact matrix notation, letting $\mathbf{X}_\cH^k \in \R^{n_\cH \times d}$ stack the honest agents' states, we rewrite the updates strictly over the $\cH$-subspace:
\begin{align}
\mathbf{X}_\cH^{k+1} &= \hat{W}^k \mathbf{X}_\cH^k - \alpha \mathbf{Y}_\cH^k + \mathbf{E}_{x,\cH}^k, \label{eq:X_update_H}\\
\mathbf{Y}_\cH^{k+1} &= \hat{W}^k \mathbf{Y}_\cH^k + \mathbf{G}_\cH^{k+1} - \mathbf{G}_\cH^k + \mathbf{E}_{y,\cH}^k, \label{eq:Y_update_H}
\end{align}
where the $i$-th row of the perturbation matrix $\mathbf{E}_{x,\cH}^k$ is exactly $\sum_{m \in \cB} [W^k]_{im}(\cP_{\tau,i}(\tilde{x}_m^k) - x_i^k)$, the $i$-th row of $\mathbf{E}_{y,\cH}^k$ is exactly $\sum_{m \in \cB} [W^k]_{im}(\cP_{\tau,i}(\tilde{y}_m^k) - y_i^k)$, and the $i$-th row of $\mathbf{G}_\cH^k$ stacks the local stochastic gradients $g_i^k$. By Proposition~\ref{prop:enforced_bound} and the definition $\zeta = \delta_{\max}^{\cB}\, \tau$, we have $\Fnorm{\mathbf{E}_{x,\cH}^k} \leq \sqrt{n_\cH}\,\zeta$ and $\Fnorm{\mathbf{E}_{y,\cH}^k} \leq \sqrt{n_\cH}\,\zeta$.

Under complete isolation ($p_b = 0$), Proposition~\ref{prop:honest_nonclip} guarantees that the universal projection does not modify honest messages with high probability, so equations~\eqref{eq:X_update_H}--\eqref{eq:Y_update_H} capture the complete dynamics on the high-probability event. Under partial isolation ($p_b > 0$), honest messages may occasionally be clipped by the universal projection, introducing additional perturbation terms $\mathbf{E}_{x,\textup{clip}}^k$ and $\mathbf{E}_{y,\textup{clip}}^k$. These terms are bounded by $\Fnorm{\mathbf{E}_{x,\textup{clip}}^k}^2 \leq 4C^k$ and $\Fnorm{\mathbf{E}_{y,\textup{clip}}^k}^2 \leq 4T^k$ (Lemma~\ref{lem:honest_clip}) and are formally incorporated into the convergence analysis of Theorem~\ref{thm:leaky}.

\begin{remark}
\label{rem:tracking_invariant}
Equations~\eqref{eq:X_update_H}--\eqref{eq:Y_update_H} decompose the GT-PD dynamics into two components: (a) the honest consensus-plus-gradient-tracking dynamics governed by the doubly stochastic matrix $\hat{W}^k$, and (b) additive perturbation terms $\mathbf{E}_{x,\cH}^k, \mathbf{E}_{y,\cH}^k$ capturing the bounded residual Byzantine influence. A critical observation is that the perturbation $\mathbf{E}_{y,\cH}^k$ in the $y$-update causes the standard gradient tracking invariant $\ybar^k = \frac{1}{n_\cH}\sum_i \nabla f_i(x_i^k)$ to be violated. Defining $s^k = \ybar^k - \frac{1}{n_\cH}\sum_i \nabla f_i(x_i^k)$ as the deviation from the true gradient average, we have $s^{k+1} = s^k + \bar{e}_y^k$ when replacing the stochastic gradients with their true expected values, representing pure accumulation with no contraction. This prevents the standard three-quantity Lyapunov analysis from yielding asymptotic convergence to a fixed neighborhood when $\zeta > 0$. We address this through two approaches in Sections~\ref{sec:convergence_exact} and~\ref{sec:convergence_leaky}.
\end{remark}

\begin{assumption}[Expected connectivity of the honest subgraph]
\label{as:connectivity}
The expected mixing matrix restricted to honest agents, $\bar{W}_\cH \triangleq \E[\hat{W}^k]$, is irreducible and aperiodic, with spectral gap $1 - \lambda_2(\bar{W}_\cH) > 0$.
\end{assumption}

\subsection{Data-Driven Retention Probability}
\label{sec:data_driven}
The self-centered projection provides magnitude control through a purely geometric mechanism that requires no neighbor classification. The second defense layer, probabilistic edge dropout, requires retention probabilities $\{p_{ij}^k\}$ that encode the trust each agent places in its neighbors. We now present a concrete, fully decentralized estimator that each honest agent can compute locally from the spatial parameters and gradient tracker variables exchanged during the consensus step. The estimator evaluates neighbor credibility in two complementary channels: the spatial channel detects deviations in the decision variables $x$, while the temporal channel detects deviations in the gradient trackers $y$. This dual-channel design ensures that a Byzantine agent deviating in either its reported position or its reported gradient direction is penalized. The estimator requires no additional communication beyond what the GT-PD algorithm already performs.

\begin{definition}[Data-driven retention probability]
\label{def:data_driven_p}
At each iteration $k \geq 1$, each pair of communicating agents
$(i,j) \in E$ computes the retention probability
\begin{equation}
\label{eq:data_driven_p}
p_{ij}^k = \exp\!\left(-\lambda \cdot S_{ij}^k\right),
\end{equation}
where $\lambda > 0$ is a sensitivity parameter controlling the
discrimination strength and the composite trust score
$S_{ij}^k = S_{ij}^{k,x} + S_{ij}^{k,y}$ aggregates deviations
in both the spatial (parameter) and temporal (gradient tracking)
channels:
\begin{align}
S_{ij}^{k,x} &= \frac{\|x_j^{k-1} - x_i^{k-1}\|^2}
  {\frac{1}{2}\!\left(\|x_i^{k-1}\|^2
  + \|x_j^{k-1}\|^2\right) + \eta_x^2},
  \label{eq:score_x} \\
S_{ij}^{k,y} &= \frac{\|y_j^{k-1} - y_i^{k-1}\|^2}
  {\frac{1}{2}\!\left(\|y_i^{k-1}\|^2
  + \|y_j^{k-1}\|^2\right) + \eta_y^2}.
  \label{eq:score_y}
\end{align}
Here $\eta_x, \eta_y > 0$ are regularization parameters
preventing degeneracy when the respective variables approach
zero near convergence. For the initial iteration $k = 0$, all
edges are retained with probability $p_{ij}^0 = 1$.
\end{definition}

\begin{proposition}[Uniform lower bound on retention probability]
\label{prop:p_lower}
Under the data-driven retention probability
(Definition~\ref{def:data_driven_p}), for all $(i,j) \in E$
and all $k \geq 1$:
\begin{equation}
\label{eq:p_lower}
p_{ij}^k \geq e^{-8\lambda} \triangleq p_{\min} > 0.
\end{equation}
\end{proposition}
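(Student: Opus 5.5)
Since $p_{ij}^k = \exp(-\lambda S_{ij}^k)$ with $\lambda>0$ is decreasing in the score, it suffices to show the deterministic bound $S_{ij}^k \le 8$ for every edge and every iteration. We prove this by bounding each channel separately: we show $S_{ij}^{k,x} \le 4$ and $S_{ij}^{k,y} \le 4$, whatever the iterates are. Byzantine-controlled vectors are allowed, since the argument uses no property of the arguments beyond their being vectors in $\R^d$. Then $S_{ij}^k = S_{ij}^{k,x}+S_{ij}^{k,y}\le 8$, so $p_{ij}^k \ge e^{-8\lambda}$. The case $k=0$ is trivial because $p_{ij}^0=1\ge e^{-8\lambda}$, although the statement only concerns $k\ge1$.

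For the single-channel bound, write $a = x_i^{k-1}$ and $b = x_j^{k-1}$. By the triangle inequality and $(s+t)^2 \le 2(s^2+t^2)$,
\begin{equation*}
\|b-a\|^2 \le 2\left(\|a\|^2+\|b\|^2\right) = 4\cdot \tfrac12\left(\|a\|^2+\|b\|^2\right) < 4\left(\tfrac12\left(\|a\|^2+\|b\|^2\right)+\eta_x^2\right),
\end{equation*}
where the last inequality uses $\eta_x>0$. It also shows the denominator is strictly positive, so the score is well defined. Dividing gives $S_{ij}^{k,x} < 4$. The same computation with $y_i^{k-1}, y_j^{k-1}$ and $\eta_y>0$ gives $S_{ij}^{k,y}<4$.

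No step is a real obstacle. The only points needing care are that the regularizers $\eta_x,\eta_y$ keep the denominators positive even when both vectors vanish, and that the constant $4$ is the sharp constant from the parallelogram-type inequality. It is approached when $b=-a$ and $\|a\|\to\infty$, which shows why the bound is $8$ and not smaller. Finally, we note that the bound is uniform over agent pairs, iterations, and data distributions because it never uses the dynamics. This is exactly what Assumption~\ref{as:independence} requires.
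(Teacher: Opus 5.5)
Your proposal is correct and follows the paper's proof. Like the paper, you bound each channel score by $4$ using $\|a-b\|^2 \le 2\|a\|^2+2\|b\|^2$, add the two channels to get $S_{ij}^k\le 8$, and conclude by monotonicity of the exponential. The only difference is minor: you keep $\eta_x^2$ in the denominator and get the strict bound $S_{ij}^{k,x}<4$ for all vectors at once, whereas the paper drops $\eta_x^2$ and must treat the case $\|a\|=\|b\|=0$ separately.
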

\begin{proof}
We bound each channel score separately. For the spatial score,
let $a = x_i^{k-1}$ and $b = x_j^{k-1}$. By the parallelogram
identity:
\begin{equation}
\|a - b\|^2 \leq 2\|a\|^2 + 2\|b\|^2.
\end{equation}
The denominator satisfies
$\frac{1}{2}(\|a\|^2 + \|b\|^2) + \eta_x^2
\geq \frac{1}{2}(\|a\|^2 + \|b\|^2)$, so
\begin{equation}
S_{ij}^{k,x}
= \frac{\|a - b\|^2}{\frac{1}{2}(\|a\|^2
  + \|b\|^2) + \eta_x^2}
\leq \frac{2(\|a\|^2 + \|b\|^2)}
  {\frac{1}{2}(\|a\|^2 + \|b\|^2)}
= 4.
\end{equation}
When $\|a\| = \|b\| = 0$, the regularization $\eta_x^2 > 0$
ensures the denominator is strictly positive and the numerator
is zero, giving $S_{ij}^{k,x} = 0 \leq 4$. An identical
argument yields $S_{ij}^{k,y} \leq 4$. Therefore:
\begin{equation}
S_{ij}^k = S_{ij}^{k,x} + S_{ij}^{k,y} \leq 8,
\end{equation}
and consequently
$p_{ij}^k = \exp(-\lambda\, S_{ij}^k)
\geq \exp(-8\lambda) > 0$.
\end{proof}

\begin{remark}[Multiplicative factorization and channel independence]
\label{rem:dual_metric}
The additive structure of the composite score inside the
exponential yields the multiplicative factorization
\begin{equation}
\label{eq:p_factorization}
p_{ij}^k = \underbrace{\exp(-\lambda\, S_{ij}^{k,x})}_{
  p_{ij}^{k,x}} \;\cdot\;
  \underbrace{\exp(-\lambda\, S_{ij}^{k,y})}_{p_{ij}^{k,y}},
\end{equation}
where $p_{ij}^{k,x} \in (0,1]$ penalizes spatial deviations and
$p_{ij}^{k,y} \in (0,1]$ penalizes gradient tracking deviations.
Because $p_{ij}^k$ is the product of the two channel-specific
factors, a large deviation in \emph{either} the parameter space
or the gradient tracker space drives the retention probability
toward zero, even if the other channel exhibits no deviation.
This prevents a Byzantine agent from evading detection by
crafting adversarial messages that are anomalous in only one
channel. The factorization also preserves symmetry
($S_{ij}^k = S_{ji}^k$, hence $p_{ij}^k = p_{ji}^k$) and
the $\cF^{k-1}$-measurability of each factor, since both
$x_i^{k-1}$ and $y_i^{k-1}$ are determined at step $k-1$.
The formulation generalizes to channel-specific sensitivities
$\lambda_x, \lambda_y > 0$ by replacing $\lambda\, S_{ij}^k$
with $\lambda_x\, S_{ij}^{k,x} + \lambda_y\, S_{ij}^{k,y}$,
at the cost of an additional tuning parameter. We adopt the
uniform sensitivity $\lambda_x = \lambda_y = \lambda$ throughout
the analysis; the generalization requires only notational changes
in the $p_{\min}$ bound of Proposition~\ref{prop:p_lower}.
\end{remark}


\section{Properties of the Dropout Mixing Matrix}
\label{sec:dropout_properties}

In this section, we establish the spectral properties of the effective honest mixing matrix $\hat{W}^k$ that dictate the consensus dynamics in GT-PD. These properties are the key ingredients for the convergence analysis in Sections~\ref{sec:convergence_exact} and~\ref{sec:convergence_leaky}.

\begin{lemma}[Expected mixing matrix under uniform dropout]
\label{lem:expected_W}
Suppose $p_{ij} = p$ for all $(i,j) \in E$ (uniform dropout). Then
\begin{equation}
\E[\hat{W}^k] = p\,\hat{W} + (1-p)\,I_\cH,
\end{equation}
where $\hat{W}$ is the effective honest matrix derived from the base matrix $W$ with all edges retained. The spectral gap is $1 - \lambda_2(\E[\hat{W}^k]) = p\,(1 - \lambda_2(\hat{W}))$.
\end{lemma}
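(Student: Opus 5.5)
The proof goes entrywise, using linearity of expectation on the definition of $\hat{W}^k$ (Definition~\ref{def:effective_W}) together with the dropout mask~\eqref{eq:Wk}. Since $\E[\xi_{ij}^k]=p$ for every edge, the off-diagonal entries for $i\neq j\in\cH$ satisfy
\[
\E[\hat{W}^k]_{ij}=W_{ij}\,p = p\,[\hat{W}]_{ij},
\]
and this also covers the case where $(i,j)\notin E$, since then $W_{ij}=0$.

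For a diagonal entry I would first simplify $[\hat{W}^k]_{ii}$ before taking expectations. Expanding,
\[
[\hat{W}^k]_{ii}=1-\sum_{j\neq i}W_{ij}\xi_{ij}^k+\sum_{m\in\cB}W_{im}\xi_{im}^k = 1-\sum_{j\in\cH,\,j\neq i}W_{ij}\xi_{ij}^k .
\]
Taking expectations gives $1-p\sum_{j\in\cH\setminus\{i\}}W_{ij}$. The all-edges-retained matrix has $[\hat{W}]_{ii}=1-\sum_{j\in\cH\setminus\{i\}}W_{ij}$. It then suffices to check that
\[
p[\hat{W}]_{ii}+(1-p)=p-p\sum_{j\in\cH\setminus\{i\}} W_{ij}+1-p,
\]
which is exactly the expression above. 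Combining the diagonal and off-diagonal cases yields $\E[\hat{W}^k]=p\hat{W}+(1-p)I_\cH$.

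For the spectral statement, $\hat{W}$ is symmetric by Definition~\ref{def:effective_W}, so it has real eigenvalues with an orthonormal eigenbasis. Adding $(1-p)I_\cH$ is a shift that commutes with $\hat{W}$, so each eigenvalue $\lambda$ of $\hat{W}$ maps to $p\lambda+(1-p)$ with the same eigenvector. This map is strictly increasing in $\lambda$ when $p>0$, so the ordering of the eigenvalues is preserved. The top eigenvalue $1$, with eigenvector $\ones$, maps to $1$. The second-largest eigenvalue therefore becomes $p\lambda_2(\hat{W})+1-p$, and the spectral gap is $1-\lambda_2(\E[\hat{W}^k])=p(1-\lambda_2(\hat{W}))$.

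The only subtle point is the convention for $\lambda_2$. If it were defined as the second-largest eigenvalue in modulus, negative eigenvalues of $\hat{W}$ could reorder under the affine map. I would state explicitly that $\lambda_2$ denotes the second-largest eigenvalue in algebraic order, which is the convention under which the claimed identity holds exactly.
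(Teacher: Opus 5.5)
Your proposal is correct and follows the paper's proof essentially line for line: you take entrywise expectations (with the Byzantine terms cancelling on the diagonal, which the paper uses implicitly when it writes $\E[[\hat{W}^k]_{ii}] = 1 - \sum_{j \in \cH, j\neq i} pW_{ij}$), and then you apply the eigenvalue shift $\lambda \mapsto p\lambda + (1-p)$. Your remark that $\lambda_2$ must be read as the second-largest eigenvalue in algebraic order, so that the increasing affine map preserves the ordering, is a worthwhile clarification the paper leaves implicit.
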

\begin{proof}
For $i \neq j \in \cH$: $\E[[\hat{W}^k]_{ij}] = W_{ij} \cdot \E[\xi_{ij}^k] = p\,W_{ij} = p[\hat{W}]_{ij}$. For the diagonal: $\E[[\hat{W}^k]_{ii}] = 1 - \sum_{j \in \cH, j \neq i} p\,W_{ij} = 1 - p(1 - [\hat{W}]_{ii}) = (1-p) + p\,[\hat{W}]_{ii}$. Hence $\E[\hat{W}^k] = p\hat{W} + (1-p)I_\cH$. The eigenvalues of $p\hat{W} + (1-p)I_\cH$ are $p\lambda_i(\hat{W}) + (1-p)$, giving $\lambda_2(\E[\hat{W}^k]) = p\lambda_2(\hat{W}) + (1-p)$ and spectral gap $p(1-\lambda_2(\hat{W}))$.
\end{proof}

Lemma~\ref{lem:expected_W} analyzes the uniform dropout case as a structural baseline. Under the data-driven retention probability (Definition~\ref{def:data_driven_p}), the dropout is non-uniform across edges, and the spectral gap of the expected mixing matrix depends on the specific retention probabilities $\{p_{ij}^k\}$ at each iteration. The uniform lower bound $p_{\min} = e^{-8\lambda}$ from Proposition~\ref{prop:p_lower} ensures that the worst-case spectral gap is at least $p_{\min}(1 - \lambda_2(\hat{W}))$, providing a finite-time guarantee on the consensus rate.

\begin{definition}[Second-moment contraction factor]
\label{def:sigma_p}
For a given set of retention probabilities $\{p_{ij}\}$, define the second-moment contraction over the $\cH$-subspace as
\begin{equation}
\sigma_p^2(\{p_{ij}\}) \triangleq \lambda_{\max}\!\left(\E\!\left[(\hat{W}^k - J_\cH)^\top (\hat{W}^k - J_\cH) \,\big|\, \{p_{ij}\}\right]\right).
\end{equation}
Under the data-driven retention probability (Definition~\ref{def:data_driven_p}), the retention probabilities $\{p_{ij}^k\}$ vary with $k$, producing an iteration-dependent contraction factor $\sigma_p^{2,k} \triangleq \sigma_p^2(\{p_{ij}^k\})$. We define the uniform worst-case bound
\begin{equation}
\label{eq:sigma_bar}
\bar{\sigma}_p^2 \triangleq \sup_{k \geq 0}\; \sigma_p^{2,k}.
\end{equation}
\end{definition}

By Proposition~\ref{prop:p_lower}, the retention probabilities are uniformly bounded below by $p_{\min} = e^{-4\lambda}$. We now verify that $\bar{\sigma}_p^2 < 1$.

\begin{lemma}
\label{lem:sigma_bar_bound}
Under Assumptions~\ref{as:independence} and~\ref{as:connectivity}, the uniform second-moment contraction factor satisfies $\bar{\sigma}_p^2 < 1$.
\end{lemma}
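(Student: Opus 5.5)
Fix a realization of the retention probabilities $\{p_{ij}\}$ with every $p_{ij}\ge p_{\min}$ (Assumption~\ref{as:independence}). We first show $\sigma_p^2(\{p_{ij}\})<1$ for each such collection, and then show the bound is uniform, so that the supremum in~\eqref{eq:sigma_bar} stays below one.

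\textbf{Pointwise bound.} Since $\hat{W}^k$ is symmetric and doubly stochastic over $\cH$ almost surely (Definition~\ref{def:effective_W}), we have $\hat{W}^kJ_\cH=J_\cH\hat{W}^k=J_\cH$. Hence
\[
(\hat{W}^k-J_\cH)^\top(\hat{W}^k-J_\cH)=(\hat{W}^k)^2-J_\cH .
\]
Every matrix $M$ in the expectation annihilates $\ones$, so $\sigma_p^2=\max_{v\perp\ones,\|v\|=1}v^\top\E[(\hat{W}^k)^2]v$.

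\begin{enumerate}
\item Each realization $\hat{W}^k$ is symmetric, nonnegative and row-stochastic, so its eigenvalues lie in $[-1,1]$. Therefore $(\hat{W}^k)^2\preceq I$, which gives $\sigma_p^2\le 1$.
\item Suppose $\sigma_p^2=1$. Compactness of the unit sphere gives a unit $v\perp\ones$ with $\E[\|\hat{W}^kv\|^2]=1$. Since $\|\hat{W}^kv\|\le1$ surely, this forces $\|\hat{W}^kv\|=1$ almost surely.
\item The full-retention realization $\xi\equiv1$ has probability $\prod p_{ij}\ge p_{\min}^{|E|}>0$ and yields $\hat{W}$. So we would need $\|\hat{W}v\|=1$ with $v\perp\ones$. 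This requires $v$ to lie in the eigenspace of $\hat{W}$ for eigenvalues $\pm1$.
\item Assumption~\ref{as:connectivity}, together with $p_{ij}\ge p_{\min}>0$ (so the support of $\bar{W}_\cH$ equals that of $\hat{W}$ on honest edges), makes $\hat{W}$ irreducible and aperiodic. Then $1$ is a simple eigenvalue with eigenvector $\ones$, and $-1$ is not an eigenvalue. This contradicts $v\perp\ones$.
\end{enumerate}

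A cleaner variant of step 4 avoids the full-retention event. Use $\E[(\hat{W}^k)^2]\preceq\E[\hat{W}^k]$ whenever $\hat{W}^k\succeq0$, for instance when $W_{ii}\ge1/2$, as can be arranged by a lazy base matrix. Then $\sigma_p^2\le\lambda_2(\bar{W}_\cH)<1$ directly from the spectral gap in Assumption~\ref{as:connectivity}. I would present the contradiction argument as the general case and note this variant.

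\textbf{Uniformity in $k$.} This is the main obstacle: a supremum of quantities each below $1$ need not be below $1$. To handle it, we use continuity and compactness. The map $\{p_{ij}\}\mapsto\E[(\hat{W}^k)^2]$ is a polynomial in the $p_{ij}$, because the $\xi_{ij}$ are independent Bernoullis and the entries of $(\hat{W}^k)^2$ are polynomials in the $\xi$'s. Since $\lambda_{\max}$ restricted to $\ones^\perp$ is continuous, $\sigma_p^2(\cdot)$ is continuous on the compact box $[p_{\min},1]^{|E|}$. The pointwise argument shows it is strictly less than $1$ at every point of the box. Its maximum over the box is therefore attained and is strictly less than $1$.

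Every $\{p_{ij}^k\}$ lies in this box, so $\bar{\sigma}_p^2\le\max_{[p_{\min},1]^{|E|}}\sigma_p^2<1$. The point needing most care is that the connectivity hypothesis transfers to every point of the box. This holds because the support pattern of the expected matrix does not depend on the particular $p_{ij}\ge p_{\min}$.
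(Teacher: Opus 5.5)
Your proof is correct, and its pointwise part is essentially the paper's argument. Both suppose equality for some unit $v\perp\ones$ and deduce $\|\hat{W}^kv\|=1$ almost surely. Both then contradict this on a positive-probability event where $\hat{W}^k=\hat{W}$ is irreducible. The paper rules out the eigenvalue $-1$ for every realization via Gershgorin, whereas you need it only for $\hat{W}$; that difference is cosmetic.

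The genuine difference is uniformity in $k$. The paper settles it by asserting that the argument ``applies identically at every iteration'' and that ``the bound depends only on $p_{\min}$ and the base graph.'' As you rightly note, that does not by itself exclude $\sigma_p^{2,k}\to1$. Your continuity argument for the multilinear map $\{p_{ij}\}\mapsto\E[(\hat{W}^k)^2]$ on a compact box closes the gap rigorously. It also covers every realization of the random, $\cF^{k-1}$-measurable probabilities.

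You could drop both the contradiction and the compactness, because your step 3 already yields an explicit constant. For unit $v\perp\ones$, use $\|\hat{W}^kv\|\le 1$ on every realization and $\hat{W}^k=\hat{W}$ on the full-retention event $A$:
\[
\E\bigl[\|(\hat{W}^k-J_\cH)v\|^2\,\big|\,\cF^{k-1}\bigr]\le 1-\Pr(A)\bigl(1-\|\hat{W}-J_\cH\|_2^2\bigr)\le 1-p_{\min}^{|E_\cH|}\bigl(1-\|\hat{W}-J_\cH\|_2^2\bigr),
\]
where $\|\hat{W}-J_\cH\|_2=\max_{i\ge 2}|\lambda_i(\hat{W})|<1$ by your step 4. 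This is presumably what the paper's final sentence intends.

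Two small repairs are needed.

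\textbf{Ruling out $-1$ in step 4.} The absence of the eigenvalue $-1$ for $\hat{W}$ does not follow from Assumption~\ref{as:connectivity} plus equality of off-diagonal supports. Aperiodicity of $\bar{W}_\cH$ can come from self-loops created by dropout that $\hat{W}$ itself lacks. Take two honest agents with $\hat{W}=\bigl(\begin{smallmatrix}0&1\\1&0\end{smallmatrix}\bigr)$. Then $\bar{W}_\cH$ is irreducible and aperiodic for any retention probability in $(0,1)$, yet $v=(1,-1)/\sqrt{2}$ gives $\sigma_p^2=1$. The correct reason is $[\hat{W}]_{ii}\ge W_{ii}>0$ from Definition~\ref{def:base_W}, which is exactly what the paper uses.

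\textbf{Restricting to honest--honest edges.} $\hat{W}^k$ depends only on the honest--honest dropout variables, since the Byzantine ones cancel in the diagonal (cf.\ Lemma~\ref{lem:two_tier_spectral}). So take $A$ to be the event that all honest--honest edges are retained, and take the box to be $[p_{\min},1]^{|E_\cH|}$, matching the paper's $p_{\min}^{|E_\cH|}$. Under Assumption~\ref{as:independence} as literally stated, your version over all of $E$ is valid. However, the lemma is invoked in the $p_b=0$ regime, where honest--Byzantine edges have retention probability zero. There your full-retention event over $E$ is null and the actual probabilities fall outside your box.
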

\begin{proof}
Fix any iteration $k$, condition on $\cF^{k-1}$, and let $v \in \R^{n_\cH}$ be any unit vector with $\ones^\top v = 0$. Since $\hat{W}^k$ is doubly stochastic almost surely (Proposition~\ref{prop:ds}), we have $\|(\hat{W}^k - J_\cH)v\| \leq \|v\| = 1$ deterministically, and therefore $\E[\|(\hat{W}^k - J_\cH)v\|^2 \,|\, \cF^{k-1}] \leq 1$.

Suppose toward a contradiction that equality holds for some unit $v \perp \ones$. This implies $\|(\hat{W}^k - J_\cH)v\| = 1$ almost surely. Because the mixing matrix $\hat{W}^k$ is symmetric, it is diagonalizable with real eigenvalues. The condition $\|\hat{W}^k v\| = \|v\|$ strictly requires that $v$ lies in the span of eigenvectors corresponding to eigenvalues $\lambda \in \{+1, -1\}$. 

We first rule out the $-1$ eigenvalue. Because the base communication graph uses Metropolis-Hastings weights, the base diagonal is strictly positive ($W_{ii} > 0$). By Definition~\ref{def:effective_W}, the effective honest matrix absorbs dropped Byzantine weights into the diagonal, ensuring $[\hat{W}^k]_{ii} \geq W_{ii} > 0$. By the Gershgorin Circle Theorem, every eigenvalue $\lambda$ lies in a disk such that $|\lambda - [\hat{W}^k]_{ii}| \leq \sum_{j \neq i} [\hat{W}^k]_{ij}$. By row-stochasticity, $\sum_{j \neq i} [\hat{W}^k]_{ij} = 1 - [\hat{W}^k]_{ii}$. Thus, $\lambda \geq 2[\hat{W}^k]_{ii} - 1 > -1$. Since the $-1$ eigenvalue cannot exist, we must have $\lambda = 1$, meaning $\hat{W}^k v = v$ almost surely.

Now consider the event that all honest-honest edges are retained, which occurs with probability at least $p_{\min}^{|E_\cH|} > 0$ (where $|E_\cH|$ denotes the number of honest-honest edges). On this event, $\hat{W}^k$ restricted to honest agents equals $\hat{W}$ (the base effective honest matrix), which is irreducible by Assumption~\ref{as:connectivity}. By the Perron-Frobenius theorem, an irreducible doubly stochastic matrix has a simple eigenvalue of $1$ with eigenspace $\text{span}(\ones)$. Since $v \perp \ones$ and $v \neq 0$, $\hat{W} v \neq v$, which contradicts $\hat{W}^k v = v$ almost surely.

Therefore, $\E[\|(\hat{W}^k - J_\cH)v\|^2 \,|\, \cF^{k-1}] < 1$ strictly for every unit $v \perp \ones$. Since the unit sphere in $\ones^\perp$ is compact and $v \mapsto \E[\|(\hat{W}^k - J_\cH)v\|^2 \,|\, \cF^{k-1}]$ is continuous, the supremum is attained and is strictly less than $1$, giving $\sigma_p^{2,k} < 1$. Finally, because $p_{ij}^k \geq p_{\min}$ uniformly for all $k$ and the base graph $G$ is fixed, the contradiction argument above applies identically at every iteration with the same positive-probability event, and the bound depends only on $p_{\min}$ and the base graph. Taking the supremum over $k$ yields $\bar{\sigma}_p^2 = \sup_k \sigma_p^{2,k} < 1$.
\end{proof}
Throughout the convergence analysis, we use $\bar{\sigma}_p^2$ in place of $\sigma_p^{2,k}$; all bounds hold a fortiori since $\sigma_p^{2,k} \leq \bar{\sigma}_p^2$ for every $k$.

\begin{lemma}[Second-moment contraction]
\label{lem:second_moment}
Under Assumption~\ref{as:independence}:
\begin{enumerate}[label=(\alph*)]
\item $\sigma_p^{2,k} \geq \lambda_2(\E[\hat{W}^k | \cF^{k-1}])^2$, with equality if and only if all retention probabilities $p_{ij}^k = 1$.
\item For any $\cF^{k-1}$-measurable matrix $\mathbf{V} \in \R^{n_\cH \times d}$ satisfying $J_\cH \mathbf{V} = 0$:
\begin{equation}
\label{eq:second_moment_bound}
\E\!\left[\Fnorm{(\hat{W}^k - J_\cH)\mathbf{V}}^2 \,\big|\, \cF^{k-1}\right] \leq \bar{\sigma}_p^2\, \Fnorm{\mathbf{V}}^2.
\end{equation}
\end{enumerate}
\end{lemma}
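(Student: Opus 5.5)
The plan is to work with the random matrix $M^k \triangleq \hat{W}^k - J_\cH$ conditionally on $\cF^{k-1}$. By Assumption~\ref{as:independence}, the conditional law of $\hat{W}^k$ given $\cF^{k-1}$ depends only on the $\cF^{k-1}$-measurable probabilities $\{p_{ij}^k\}$. Hence $\E[(M^k)^\top M^k \mid \cF^{k-1}]$ is exactly the matrix whose top eigenvalue defines $\sigma_p^{2,k}$ in Definition~\ref{def:sigma_p}. Both parts then reduce to statements about this conditional second-moment matrix.

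\textbf{Part (b) first, since it is routine.} Write $\mathbf{V} = [v_1,\dots,v_d]$ columnwise, so that
\[
\Fnorm{M^k\mathbf{V}}^2 = \sum_{c=1}^d v_c^\top (M^k)^\top M^k v_c .
\]
Each $v_c$ is $\cF^{k-1}$-measurable, so it can be pulled out of the conditional expectation:
\[
\E\bigl[\Fnorm{M^k\mathbf{V}}^2 \mid \cF^{k-1}\bigr] = \sum_c v_c^\top\, \E\bigl[(M^k)^\top M^k\mid\cF^{k-1}\bigr]\, v_c .
\]
By the Rayleigh quotient this is at most $\sigma_p^{2,k}\sum_c\|v_c\|^2 \le \bar{\sigma}_p^2\Fnorm{\mathbf{V}}^2$, using \eqref{eq:sigma_bar}. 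The hypothesis $J_\cH\mathbf{V}=0$ is not even needed for this inequality, because $\sigma_p^2$ is defined as a $\lambda_{\max}$ over all of $\R^{n_\cH}$. However, it is what makes the bound meaningful: on $\ones^\perp$ the quantity is $<1$ by Lemma~\ref{lem:sigma_bar_bound}.

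\textbf{Part (a): the inequality.} Use the matrix Jensen / variance decomposition with $\bar M \triangleq \E[M^k\mid\cF^{k-1}] = \bar{W}^k - J_\cH$, where $\bar W^k = \E[\hat W^k\mid\cF^{k-1}]$:
\[
\E\bigl[(M^k)^\top M^k\mid\cF^{k-1}\bigr] = \bar M^\top\bar M + \E\bigl[(\hat W^k-\bar W^k)^\top(\hat W^k-\bar W^k)\mid\cF^{k-1}\bigr].
\]
The second term is positive semidefinite, so $\sigma_p^{2,k}\ge\lambda_{\max}(\bar M^2)$. Since $\bar W^k$ is symmetric doubly stochastic, $\bar M$ kills $\ones$ and agrees with $\bar W^k$ on $\ones^\perp$. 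Taking the unit eigenvector $v_2\perp\ones$ of $\bar W^k$ for $\lambda_2$ gives $\lambda_{\max}(\bar M^2)\ge \lambda_2(\bar W^k)^2$, which is the claimed bound.

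\textbf{Part (a): the equality clause, which I expect to be the main obstacle.} For the "if" direction, when all $p_{ij}^k=1$ the matrix $\hat W^k$ is deterministic, so the variance term vanishes and $\sigma_p^{2,k}=\max_{i\ge2}|\lambda_i(\hat W)|^2$. This equals $\lambda_2^2$ only if $\lambda_2 \ge |\lambda_{\min}|$ on $\ones^\perp$. The positive diagonal (Gershgorin argument of Lemma~\ref{lem:sigma_bar_bound}) gives $\lambda_{\min}>-1$ but not this comparison, so I would have to add a laziness condition such as $[\hat W]_{ii}\ge 1/2$, which makes $\hat W\succeq 0$.

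For the "only if" direction, I would show the variance term is strictly positive along $v_2$. Its quadratic form in $v_2$ is
\[
\sum_{(i,j)\in E_\cH} W_{ij}^2\, p_{ij}^k(1-p_{ij}^k)\,(v_{2,i}-v_{2,j})^2 ,
\]
using independence of the edge coins, the identity $(\hat W^k-\bar W^k)v=\sum_{(i,j)}W_{ij}(\xi_{ij}-p_{ij})(e_i-e_j)(e_i-e_j)^\top v$, and the fact that cross terms vanish. Two caveats must be handled honestly.
\begin{enumerate}[label=(\roman*)]
\item Dropping honest--Byzantine edges does not change $\hat W^k$ at all, since that weight is absorbed into the diagonal. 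So "all $p_{ij}^k=1$" must be read as ranging over honest--honest edges.
\item Strictness needs some edge with $p_{ij}^k\in(0,1)$ on which $v_{2,i}\neq v_{2,j}$. This could fail for special eigenvectors, or when $\lambda_2$ has multiplicity greater than one and one must optimize over the eigenspace.
\end{enumerate}
I would first try to show that connectivity (Assumption~\ref{as:connectivity}) forces such an edge. If that fails, I would state the equality characterization under a genericity condition on the top eigenvector.
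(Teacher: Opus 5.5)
Your part (b) is the paper's argument. You pull the $\cF^{k-1}$-measurable columns of $\mathbf{V}$ out of the conditional expectation and bound each quadratic form by $\lambda_{\max}$. You are also right that $J_\cH\mathbf{V}=0$ plays no role in the inequality, since $(\hat{W}^k-J_\cH)\ones=0$ already makes the $\lambda_{\max}$ in Definition~\ref{def:sigma_p} a supremum over $\ones^\perp$.

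For the inequality in (a) your route differs from the paper's and is the sounder one. The paper applies scalar Jensen to $A\mapsto\lambda_{\max}(A^\top A)$. That only gives $\lambda_{\max}(\bar{M}^\top\bar{M})\le\E[\lambda_{\max}((M^k)^\top M^k)\mid\cF^{k-1}]$, and by convexity of $\lambda_{\max}$ this right-hand side is at least $\sigma_p^{2,k}$. So it does not compare the two quantities in the claim. Your decomposition $\E[(M^k)^\top M^k\mid\cF^{k-1}]=\bar{M}^2+\Sigma^k$ with $\Sigma^k\succeq 0$ supplies the operator inequality that is actually needed (equivalently, Jensen applied to $\|M^k v\|^2$ for each fixed $v$).

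The paper also writes $\lambda_2(\bar{W}^k)^2=\lambda_{\max}(\bar{M}^\top\bar{M})$, which tacitly reads $\lambda_2$ as the second-largest eigenvalue in modulus. Under that reading the ``if'' direction is immediate. Under the usual ordering your laziness caveat is genuinely needed: Metropolis weights on $K_{3,3}$ give $\lambda_2=1/4$ but $\lambda_{\min}=-1/2$.

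For ``only if'', the paper simply asserts two steps: equality forces $\hat{W}^k$ to be deterministic, and determinism forces every $p_{ij}^k=1$. Your caveat (i) correctly refutes the second step, because honest--Byzantine coins cancel from $\hat{W}^k$ as in Lemma~\ref{lem:two_tier_spectral}. Your caveat (ii) refutes the first step even for honest--honest edges, so do not try to extract the separating edge from connectivity.

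Here is a counterexample. Take two triangles $\{1,2,3\}$ and $\{4,5,6\}$ joined by the edge $(3,4)$, with Metropolis weights and no Byzantine neighbors. Set $p_{12}^k=1/2$ and all other retention probabilities equal to $1$. Then $\bar{W}^k=W+\tfrac{1}{6}(e_1-e_2)(e_1-e_2)^\top$ and $\Sigma^k=\tfrac{1}{18}(e_1-e_2)(e_1-e_2)^\top$. The vector $e_1-e_2$ is a common eigenvector, with eigenvalue $5/12$ for $\bar{W}^k$. The spectrum of $\bar{W}^k$ is $\{1,\tfrac{3+\sqrt{17}}{8},\tfrac{5}{12},\tfrac{1}{4},\tfrac{1}{12},\tfrac{3-\sqrt{17}}{8}\}$, and its Fiedler vector satisfies $v_1=v_2$. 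Hence $\sigma_p^{2,k}=\max\{(\tfrac{3+\sqrt{17}}{8})^2,\,(\tfrac{5}{12})^2+\tfrac{1}{9},\,\tfrac{1}{16},\,\tfrac{1}{144},\,(\tfrac{3-\sqrt{17}}{8})^2\}=\lambda_2(\bar{W}^k)^2\approx 0.79$, even though $\hat{W}^k$ is random.

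The equality clause is therefore false as stated, and your genericity fallback is the right statement to prove. If some top eigenvector $v$ of $\bar{M}^2$ has $v_i\neq v_j$ on an honest--honest edge with $p_{ij}^k\in(0,1)$, then $v^\top\Sigma^k v=2\sum_{(i,j)\in E_\cH}W_{ij}^2p_{ij}^k(1-p_{ij}^k)(v_i-v_j)^2>0$, and the inequality is strict. Two small corrections to your formula: it is missing the factor $2=\|e_i-e_j\|^2$, and the sign is flipped, since $\hat{W}^k=I-\sum W_{ij}\xi_{ij}^k(e_i-e_j)(e_i-e_j)^\top$; the sign error is harmless. Finally, multiplicity of $\lambda_2$ only helps you, since a single such eigenvector suffices.
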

\begin{proof}
\textbf{(a)} By Jensen's inequality applied to the convex function $A \mapsto \lambda_{\max}(A^\top A)$:
$\lambda_2(\E[\hat{W}^k | \cF^{k-1}])^2 = \lambda_{\max}((\E[\hat{W}^k | \cF^{k-1}] - J_\cH)^\top(\E[\hat{W}^k | \cF^{k-1}] - J_\cH)) \leq \lambda_{\max}(\E[(\hat{W}^k - J_\cH)^\top(\hat{W}^k - J_\cH) | \cF^{k-1}]) = \sigma_p^{2,k}$.
Equality requires $(\hat{W}^k - J_\cH)^\top(\hat{W}^k - J_\cH)$ to be deterministic given $\cF^{k-1}$, which holds only when all retention probabilities satisfy $p_{ij}^k = 1$.

\textbf{(b)} Since $J_\cH \mathbf{V} = 0$ and $J_\cH(\hat{W}^k - J_\cH) = 0$ (because $J_\cH \hat{W}^k = J_\cH$), we have $(\hat{W}^k - J_\cH)\mathbf{V} = (\hat{W}^k - J_\cH)\mathbf{V}$. Writing $\Fnorm{(\hat{W}^k - J_\cH)\mathbf{V}}^2 = \tr(\mathbf{V}^\top (\hat{W}^k - J_\cH)^\top(\hat{W}^k - J_\cH)\mathbf{V})$ and taking conditional expectation:
\begin{align}
\E[\Fnorm{(\hat{W}^k - J_\cH)\mathbf{V}}^2|\cF^{k-1}] &= \tr(\mathbf{V}^\top \E[(\hat{W}^k - J_\cH)^\top(\hat{W}^k - J_\cH)]\mathbf{V}) \nonumber \\
&\leq \bar{\sigma}_p^2 \tr(\mathbf{V}^\top \mathbf{V}) = \bar{\sigma}_p^2 \Fnorm{\mathbf{V}}^2,
\end{align}
where we used $\E[(\hat{W}^k - J_\cH)^\top(\hat{W}^k - J_\cH) | \cF^{k-1}] \preceq \sigma_p^{2,k} I_\cH \preceq \bar{\sigma}_p^2 I_\cH$ on $\text{span}(\ones)^\perp$ and the fact that $\mathbf{V}$ lies in this subspace. The conditional expectation is taken over the Bernoulli draws $\{\xi_{ij}^k\}$ at iteration $k$, which are the only source of randomness beyond $\cF^{k-1}$ (since $p_{ij}^k$ is $\cF^{k-1}$-measurable by Assumption~\ref{as:independence}).
\end{proof}

We also record a deterministic bound that will be used repeatedly in the proofs.

\begin{lemma}[Deterministic consensus mixing bound]
\label{lem:deterministic_mixing}
For any doubly stochastic $\hat{W}^k$ and any $\mathbf{X} \in \R^{n_\cH \times d}$, let $\mathbf{X}_\perp = (I_\cH - J_\cH)\mathbf{X}$. Then
\begin{equation}
\sum_{i \in \cH} \left\|\sum_{j \in \cH} [\hat{W}^k]_{ij}(x_j - x_i)\right\|^2 \leq 4\Fnorm{\mathbf{X}_\perp}^2.
\end{equation}
\end{lemma}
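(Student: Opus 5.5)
The plan is to rewrite the row sums in matrix form and then use a crude operator-norm bound. Since $\hat{W}^k$ is row stochastic, for each $i\in\cH$
\[
\sum_{j \in \cH} [\hat{W}^k]_{ij}(x_j - x_i) = \sum_j [\hat{W}^k]_{ij} x_j - x_i .
\]
Stacking over $i$, the left-hand side of the claim equals $\Fnorm{(\hat{W}^k - I_\cH)\mathbf{X}}^2$.

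The next step is to remove the consensus component. Because $\hat{W}^k\ones = \ones$, we have $(\hat{W}^k - I_\cH)J_\cH = 0$. Writing $\mathbf{X} = J_\cH\mathbf{X} + \mathbf{X}_\perp$ therefore gives $(\hat{W}^k - I_\cH)\mathbf{X} = (\hat{W}^k - I_\cH)\mathbf{X}_\perp$. Averaging out the common part is exactly what makes the bound depend only on $\mathbf{X}_\perp$.

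It remains to bound the operator norm, using $\Fnorm{A\mathbf{X}_\perp}\le\|A\|_2\Fnorm{\mathbf{X}_\perp}$. For a doubly stochastic nonnegative matrix, Schur's test (max row sum times max column sum) gives $\|\hat{W}^k\|_2 \le 1$. Hence $\|\hat{W}^k - I_\cH\|_2 \le \|\hat{W}^k\|_2 + 1 \le 2$, and squaring yields the constant $4$. If $\hat{W}^k$ is also symmetric, as guaranteed by Proposition~\ref{prop:ds} and Definition~\ref{def:effective_W}, its eigenvalues lie in $[-1,1]$, which gives the same conclusion directly.

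There is no serious obstacle here. The only point needing care is that "doubly stochastic" must include nonnegativity so that $\|\hat{W}^k\|_2\le 1$ holds. This is guaranteed for the effective honest matrix, since Proposition~\ref{prop:ds} gives nonnegative entries and the diagonal only gains the absorbed Byzantine weight. The bound is deterministic, so it holds for every dropout realization.
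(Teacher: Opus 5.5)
Your proof is correct, but it takes a different route from the paper's. The paper never forms $(\hat{W}^k - I_\cH)\mathbf{X}$ and never uses an operator norm. Instead it applies Jensen's inequality row by row, treating $\{[\hat{W}^k]_{ij}\}_j$ as a probability vector, to bound the left-hand side by $\sum_{i}\sum_{j}[\hat{W}^k]_{ij}\|x_j - x_i\|^2$. It then splits $\|x_j - x_i\|^2 \le 2\|x_j - \xbar\|^2 + 2\|x_i - \xbar\|^2$ and uses unit row and column sums to collapse the double sum to $4\Fnorm{\mathbf{X}_\perp}^2$.

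Your argument replaces these two lossy steps with an exact identity: the left-hand side equals $\Fnorm{(\hat{W}^k - I_\cH)\mathbf{X}_\perp}^2$, and only a single norm bound follows. This makes the sharp constant for a given realization transparent. It is the squared norm of $\hat{W}^k - I_\cH$ restricted to $\ones^\perp$, which is $(1-\lambda_{\min}(\hat{W}^k))^2$ when $\hat{W}^k$ is symmetric, so any spectral information about $\hat{W}^k$ plugs in directly.

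The paper's pairwise route is more elementary. More importantly for this paper, it yields the intermediate inequality $\sum_i\sum_j[\hat{W}^k]_{ij}\|x_j - x_i\|^2 \le 4\Fnorm{\mathbf{X}_\perp}^2$, which is reused in Lemma~\ref{lem:honest_clip}. There the rows $\sum_j[\hat{W}^k]_{ij}(\cP_{\tau,i}(x_j^k) - x_j^k)$ are nonlinear in $\mathbf{X}$, and only the pointwise bound $\|\cP_{\tau,i}(x_j^k) - x_j^k\| \le \|x_j^k - x_i^k\|$ is available. A factorization through $\hat{W}^k - I_\cH$ therefore does not carry over to that lemma, whereas the Jensen step does.
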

\begin{proof}
By row stochasticity, $\sum_j [\hat{W}^k]_{ij}(x_j - x_i) = \sum_j [\hat{W}^k]_{ij}x_j - x_i$. Applying Jensen's inequality to the convex function $\|\cdot\|^2$:
$\|\sum_j [\hat{W}^k]_{ij}(x_j - x_i)\|^2 \leq \sum_j [\hat{W}^k]_{ij}\|x_j - x_i\|^2$.
Summing over $i$ and using $\|x_j - x_i\|^2 \leq 2\|x_j - \xbar\|^2 + 2\|x_i - \xbar\|^2$:
\begin{align}
\sum_i \sum_j [\hat{W}^k]_{ij}\|x_j - x_i\|^2 &\leq 2\sum_j\left(\sum_i [\hat{W}^k]_{ij}\right)\|x_j - \xbar\|^2 + 2\sum_i\left(\sum_j [\hat{W}^k]_{ij}\right)\|x_i - \xbar\|^2 \nonumber\\
&= 4\sum_i \|x_i - \xbar\|^2 = 4\Fnorm{\mathbf{X}_\perp}^2,
\end{align}
where we used column stochasticity $\sum_i [\hat{W}^k]_{ij} = 1$ and row stochasticity $\sum_j [\hat{W}^k]_{ij} = 1$.
\end{proof}

\begin{lemma}[Zero-Penalty Stability under Two-Tier Dropout]
\label{lem:two_tier_spectral}
Assume a two-tier dropout scheme where $p_{ij} = p_h$ for honest-honest edges and $p_{ij} = p_b$ for honest-Byzantine edges. If $p_h = 1$, the effective honest mixing matrix $\hat{W}^k$ is completely deterministic, and $\bar{\sigma}_p^2 = \lambda_2(\hat{W})^2$ for all $p_b \in [0, 1]$.
\end{lemma}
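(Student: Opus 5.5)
We will argue directly from Definition~\ref{def:effective_W}. Fix $i \neq j \in \cH$. Then $[\hat{W}^k]_{ij} = W_{ij}\xi_{ij}^k$, and for an honest-honest edge $\xi_{ij}^k \sim \text{Bernoulli}(p_h) = \text{Bernoulli}(1)$, so $\xi_{ij}^k = 1$ almost surely. Hence every off-diagonal entry equals $W_{ij}$ almost surely.

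For the diagonal, expand $[\hat{W}^k]_{ii} = [W^k]_{ii} + \sum_{m\in\cB}[W^k]_{im}$ using \eqref{eq:Wk}:
\begin{equation*}
[\hat{W}^k]_{ii} = 1 - \sum_{j\in\cH,\,j\neq i} W_{ij}\xi_{ij}^k - \sum_{m\in\cB} W_{im}\xi_{im}^k + \sum_{m\in\cB} W_{im}\xi_{im}^k = 1 - \sum_{j\in\cH,\,j\neq i} W_{ij}.
\end{equation*}
The key point is that the Byzantine Bernoulli variables $\xi_{im}^k$ cancel exactly. The weight removed from the self-loop when a Byzantine edge is retained is the same weight that Definition~\ref{def:effective_W} adds back. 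This cancellation is the only step that needs care: one must check that the absorption in the effective matrix exactly offsets the diagonal compensation in $W^k$, whatever the value of $\xi_{im}^k$. Consequently $\hat{W}^k = \hat{W}$ almost surely, for every $p_b \in [0,1]$. Here $\hat{W}$ is the effective honest matrix built from $W$ with all edges retained, and it does not depend on $p_b$ or on $k$.

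It follows that the conditional expectation in Definition~\ref{def:sigma_p} is trivial:
\begin{equation*}
\E\big[(\hat{W}^k - J_\cH)^\top(\hat{W}^k - J_\cH)\,\big|\,\cF^{k-1}\big] = (\hat{W}-J_\cH)^\top(\hat{W}-J_\cH) = (\hat{W}-J_\cH)^2,
\end{equation*}
where the last equality uses the symmetry of $\hat{W}$.

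It remains to identify $\lambda_{\max}$ of this matrix on $\ones^\perp$, which is the subspace relevant to Definition~\ref{def:sigma_p}, as used in Lemma~\ref{lem:second_moment}(b). Since $\hat{W}$ is symmetric and doubly stochastic, $\ones$ is an eigenvector with eigenvalue $1$, and $\hat{W}-J_\cH$ shares the remaining eigenvectors of $\hat{W}$ with the same eigenvalues. It annihilates $\ones$. So $(\hat{W}-J_\cH)^2$ has eigenvalues $\lambda_i(\hat{W})^2$ for $i \geq 2$, and its maximum is $\max_{i\ge2}|\lambda_i(\hat{W})|^2$.

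By the Gershgorin argument in the proof of Lemma~\ref{lem:sigma_bar_bound}, all eigenvalues of $\hat{W}$ exceed $2\min_i \hat{W}_{ii}-1$. We interpret $\lambda_2$ as the second-largest eigenvalue in modulus, the usual convention, which the paper implicitly uses in Lemma~\ref{lem:second_moment}(a). With that reading the maximum equals $\lambda_2(\hat{W})^2$. Since this value is independent of $k$, taking the supremum in \eqref{eq:sigma_bar} gives $\bar{\sigma}_p^2 = \lambda_2(\hat{W})^2$.

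As a consistency check, the result agrees with the equality case of Lemma~\ref{lem:second_moment}(a): all honest-honest retention probabilities equal one, so there is no variance in the honest mixing.
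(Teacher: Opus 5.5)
Your proposal is correct and follows the paper's proof. The argument rests on the same two facts: the Byzantine variables $\xi_{im}^k$ cancel pathwise in the diagonal, and $\xi_{ij}^k \equiv 1$ on honest-honest edges, so $\hat{W}^k = \hat{W}$ and the variance penalty vanishes. You go further than the paper by spelling out the spectral identification and by noting that $\lambda_2$ must be read as the second-largest eigenvalue in modulus. That reading is genuinely needed. Your Gershgorin aside does not by itself force $|\lambda_{\min}(\hat{W})| \le \lambda_2(\hat{W})$, but you do not rely on it.
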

\begin{proof}
By Definition~\ref{def:effective_W}, the diagonal is $[\hat{W}^k]_{ii} = [W^k]_{ii} + \sum_{m \in \cB} [W^k]_{im}$. Expanding using $[W^k]_{ii} = 1 - \sum_{j \in \cH, j \neq i} W_{ij}\xi_{ij}^k - \sum_{m \in \cB} W_{im}\xi_{im}^k$:
\begin{align}
[\hat{W}^k]_{ii} &= 1 - \sum_{j \in \cH, j \neq i} W_{ij}\xi_{ij}^k - \sum_{m \in \cB} W_{im}\xi_{im}^k + \sum_{m \in \cB} W_{im}\xi_{im}^k = 1 - \sum_{j \in \cH, j \neq i} W_{ij}\xi_{ij}^k.
\end{align}
The Byzantine dropout terms cancel algebraically. If $p_h = 1$, then $\xi_{ij}^k = 1$ deterministically for all honest pairs, so every entry of $\hat{W}^k$ is deterministic. Therefore the variance penalty is zero and $\bar{\sigma}_p^2 = \lambda_2(\hat{W})^2$.
\end{proof}

\begin{remark}
\label{rem:zero_penalty_practical}
Lemma~\ref{lem:two_tier_spectral} is the spectral foundation of the zero-penalty isolation guarantee. Regardless of how aggressively we drop edges to Byzantine agents (for any $p_b \in [0,1]$), the consensus dynamics over the honest subspace experience no variance penalty when $p_h = 1$. The algebraic mechanism is the cancellation of Byzantine dropout terms $\xi_{im}^k$ in the diagonal of $\hat{W}^k$. Under the data-driven retention probability (Definition~\ref{def:data_driven_p}), honest-honest edges are not retained with probability exactly one in general. However, as training progresses and the honest agents achieve consensus ($\|x_j^k - x_i^k\| \to 0$ and $\|y_j^k - y_i^k\| \to 0$ for $i,j \in \cH$), both channel scores satisfy $S_{ij}^{k,x} \to 0$ and $S_{ij}^{k,y} \to 0$, so the data-driven retention probability $p_{ij}^k \to 1$ for honest pairs, and the variance penalty $\bar{\sigma}_p^2 - \lambda_2(\hat{W})^2 \to 0$ asymptotically. The convergence of $C^k \to 0$ drives the spatial score to zero, while $T^k \to 0$ drives the temporal score to zero; both are guaranteed by the Lyapunov contraction of Theorems~\ref{thm:exact} and~\ref{thm:leaky}. The zero-penalty regime is therefore recovered in the limit. For the finite-time analysis, the uniform bound $\bar{\sigma}_p^2$ absorbs this transient penalty.
\end{remark}


\section{Convergence under Complete Isolation (\texorpdfstring{$p_b = 0$}{p\_b = 0})}
\label{sec:convergence_exact}

We first analyze the case where all edges to Byzantine agents are dropped deterministically, which we denote $p_b = 0$. Under the data-driven retention probability (Definition~\ref{def:data_driven_p}), this regime corresponds to the limiting case where the detection score correctly identifies all Byzantine agents and assigns $p_{im}^k = 0$ for all $m \in \cB$, which occurs when the adversarial trackers deviate sufficiently from the honest consensus. In this regime, $[W^k]_{im} = 0$ for all $m \in \cB$, and consequently $\mathbf{E}_{x,\cH}^k = \mathbf{E}_{y,\cH}^k = 0$. The honest subsystem reduces to gradient tracking over a random doubly stochastic graph with no Byzantine perturbation.

\subsection{Clean Dynamics under \texorpdfstring{$p_b = 0$}{p\_b = 0}}

With $p_b = 0$, the updates~\eqref{eq:X_update_H}--\eqref{eq:Y_update_H} simplify to:
\begin{align}
\mathbf{X}_\cH^{k+1} &= \hat{W}^k \mathbf{X}_\cH^k - \alpha \mathbf{Y}_\cH^k, \label{eq:X_clean}\\
\mathbf{Y}_\cH^{k+1} &= \hat{W}^k \mathbf{Y}_\cH^k + \mathbf{G}_\cH^{k+1} - \mathbf{G}_\cH^k. \label{eq:Y_clean}
\end{align}

\begin{lemma}[Tracking invariant under $p_b = 0$]
\label{lem:invariant}
With initialization $y_i^0 = g_i^0$ for all $i \in \cH$ and $p_b = 0$, the average gradient tracker satisfies
\begin{equation}
\ybar^k = \frac{1}{n_\cH}\sum_{i \in \cH} g_i^k \quad \text{for all } k \geq 0.
\end{equation}
\end{lemma}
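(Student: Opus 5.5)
The plan is an induction on $k$ that uses only the column-stochasticity of $\hat{W}^k$ (Definition~\ref{def:effective_W} together with Proposition~\ref{prop:ds}). Multiplying a stacked matrix on the left by $\frac{1}{n_\cH}\ones^\top$ gives the row vector of its honest average. The key identity is therefore $\frac{1}{n_\cH}\ones^\top \hat{W}^k = \frac{1}{n_\cH}\ones^\top$, which holds almost surely for every realization of the dropout mask.

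\textbf{Base case.} At $k=0$ the initialization $y_i^0 = g_i^0$ for each $i \in \cH$ gives directly
\[
\ybar^0 = \frac{1}{n_\cH}\sum_{i\in\cH} g_i^0 .
\]

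\textbf{Inductive step.} Under $p_b = 0$ we have $[W^k]_{im}=0$ for all $m\in\cB$, so $\mathbf{E}_{y,\cH}^k = 0$. The honest tracker dynamics therefore reduce exactly to \eqref{eq:Y_clean}. Left-multiplying \eqref{eq:Y_clean} by $\frac{1}{n_\cH}\ones^\top$ and using column stochasticity yields
\[
\ybar^{k+1} = \ybar^k + \frac{1}{n_\cH}\sum_{i\in\cH}\left(g_i^{k+1}-g_i^k\right).
\]
Substituting the inductive hypothesis $\ybar^k = \frac{1}{n_\cH}\sum_{i\in\cH} g_i^k$, the $g_i^k$ terms cancel (telescope) and leave $\ybar^{k+1} = \frac{1}{n_\cH}\sum_{i\in\cH} g_i^{k+1}$. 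This argument is pathwise, so the identity holds almost surely, with no expectation taken.

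\textbf{Main obstacle.} The one point needing care is the justification that \eqref{eq:Y_clean} is the actual honest update. It requires that the projection $\cP_{\tau,i}$ does not alter honest tracker messages. If the projection did act on an honest message, it would add an error term whose weighted sum is not symmetric in $(i,j)$, and that term would not cancel under averaging.

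I would resolve this in the same way Proposition~\ref{prop:honest_nonclip} does: state the invariant for the clean dynamics \eqref{eq:X_clean}--\eqref{eq:Y_clean}, that is, for the virtual reference system. Proposition~\ref{prop:honest_nonclip} shows that this system coincides with the real one on its high-probability event, so the invariant transfers to the real iterates on that event.

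A fully unconditional alternative would need a pairwise antisymmetry of clipping, namely $W_{ij}(\cP_{\tau,i}(y_j)-y_i) = -W_{ji}(\cP_{\tau,j}(y_i)-y_j)$. This does hold. Both sides point along $y_j - y_i$, and both have magnitude $W_{ij}\min(\tau,\|y_j-y_i\|)$, because the clipping threshold depends only on the symmetric distance $\|y_j - y_i\|$. I would mention this as a remark showing that the invariant in fact survives honest clipping as well.
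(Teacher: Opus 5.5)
Your induction (left-multiplying \eqref{eq:Y_clean} by $\frac{1}{n_\cH}\ones^\top$, using $\ones^\top\hat{W}^k = \ones^\top$, and telescoping from $y_i^0 = g_i^0$) is exactly the paper's proof, which likewise states the invariant only for the clean dynamics. Your closing antisymmetry remark is also correct, read with $[W^k]_{ij}$ in place of $W_{ij}$, which is fine since $\xi_{ij}^k = \xi_{ji}^k$. Note that it contradicts your earlier claim that honest clipping errors would not cancel, and it actually gives the invariant pathwise for the real projected iterates, which is stronger than relying on the high-probability event of Proposition~\ref{prop:honest_nonclip}.
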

\begin{proof}
Left-multiplying~\eqref{eq:Y_clean} by $\frac{1}{n_\cH}\ones^\top$ and using $\ones^\top \hat{W}^k = \ones^\top$ yields $\ybar^{k+1} = \ybar^k + \frac{1}{n_\cH}\sum_{i \in \cH}[g_i^{k+1} - g_i^k]$. By induction from $\ybar^0 = \frac{1}{n_\cH}\sum_i g_i^0$, the relation holds for all $k$.
\end{proof}

We define the spatial tracking residual $\delta^k \triangleq \frac{1}{n_\cH}\sum_{i \in \cH}[\nabla f_i(x_i^k) - \nabla f_i(\xbar^k)]$. By Jensen's inequality and $L$-smoothness (Assumption~\ref{as:smooth}):

\begin{equation}
\begin{split}
\|\delta^k\|^2 &= \left\|\frac{1}{n_\cH}\sum_{i \in \cH}[\nabla f_i(x_i^k) - \nabla f_i(\xbar^k)]\right\|^2 \\
&\leq \frac{1}{n_\cH}\sum_{i \in \cH}\|\nabla f_i(x_i^k) - \nabla f_i(\xbar^k)\|^2 \\
&\leq \frac{L^2}{n_\cH}\sum_{i \in \cH}\|x_i^k - \xbar^k\|^2 = \frac{L^2}{n_\cH}C^k.
\end{split}
\end{equation}

\subsection{One-Step Bounds}

\begin{lemma}[Optimality contraction under $p_b = 0$]
\label{lem:opt_exact}
Under Assumptions~\ref{as:smooth},~\ref{as:strong_cvx}, and~\ref{as:stochastic}, for a step size $\alpha \leq \frac{1}{2L}$:
\begin{equation}
\label{eq:U_bound}
\E[\|u^{k+1}\|^2] \leq \left(1 - \frac{\alpha\mu}{2}\right)\E[\|u^k\|^2] + \frac{5\alpha L^2}{\mu n_\cH}\,\E[C^k] + \frac{\alpha^2\sigma^2}{n_\cH}.
\end{equation}
\end{lemma}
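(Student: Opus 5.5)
Left-multiplying \eqref{eq:X_clean} by $\frac{1}{n_\cH}\ones^\top$ and using $\ones^\top \hat{W}^k = \ones^\top$ (Proposition~\ref{prop:ds} and Definition~\ref{def:effective_W}) gives the averaged recursion $\xbar^{k+1} = \xbar^k - \alpha\,\ybar^k$. By Lemma~\ref{lem:invariant}, $\ybar^k = \frac{1}{n_\cH}\sum_{i\in\cH} g_i^k$. We decompose this as
\[
\ybar^k = \nabla f_\cH(\xbar^k) + \delta^k + \bar{w}^k, \qquad \bar{w}^k \triangleq \frac{1}{n_\cH}\sum_{i\in\cH}\big(g_i^k - \nabla f_i(x_i^k)\big).
\]
Then $u^{k+1} = u^k - \alpha\nabla f_\cH(\xbar^k) - \alpha\delta^k - \alpha\bar{w}^k$. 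Expanding the square and conditioning on the history up to the point where $x_i^k$ is fixed, the cross terms involving $\bar{w}^k$ vanish by unbiasedness (Assumption~\ref{as:stochastic}), since $u^k$, $\nabla f_\cH(\xbar^k)$ and $\delta^k$ depend only on $\{x_i^k\}$ (Remark~\ref{rem:filtration}). The agents' samples are drawn independently, so $\E\|\bar{w}^k\|^2 \le \sigma^2/n_\cH$. This produces the term $\alpha^2\sigma^2/n_\cH$ exactly.

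For the deterministic part, I first apply the standard gradient-descent contraction. For $\alpha \le 1/L$, a $\mu$-strongly convex, $L$-smooth $f_\cH$ satisfies
\[
\|u^k - \alpha\nabla f_\cH(\xbar^k)\| \le (1-\alpha\mu)\|u^k\|.
\]
This follows from co-coercivity, or from $\langle u,\nabla f_\cH\rangle \ge \mu\|u\|^2$ combined with $\|\nabla f_\cH\|\le L\|u\|$ and $\alpha\le 1/(2L)$. The averaged objective is $L$-smooth because each $f_i$ is. The residual is handled by Young's inequality:
\[
\|a - \alpha\delta^k\|^2 \le (1+\theta)\|a\|^2 + (1+1/\theta)\,\alpha^2\|\delta^k\|^2.
\]
I choose $\theta = \alpha\mu/2$. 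Then $(1+\alpha\mu/2)(1-\alpha\mu)^2 \le 1-\alpha\mu/2$, since $(1+t/2)(1-t) \le 1-t/2$ for $t = \alpha\mu \in (0,1]$. The second coefficient becomes $(1+2/(\alpha\mu))\alpha^2 = \alpha^2 + 2\alpha/\mu \le (5/2)\,\alpha/\mu$, using $\alpha\mu \le 1/2$. Combining this with $\|\delta^k\|^2 \le L^2 C^k/n_\cH$ gives a coefficient $\tfrac{5\alpha L^2}{2\mu n_\cH}$, which is within the stated $\tfrac{5\alpha L^2}{\mu n_\cH}$. There is a factor of two of slack here, which absorbs any looser choice in the strongly convex contraction; for instance, the bound $(1-\alpha\mu)^2 \le 1-\alpha\mu$ is sufficient.

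Taking total expectation finishes the argument. The main point needing care is the vanishing of the noise cross term. It requires that $\xbar^k$ and $\delta^k$ are measurable with respect to the $\sigma$-algebra preceding $\xi_i^k$, and that the noises of different agents are conditionally independent, so that $\E\|\bar{w}^k\|^2 \le \sigma^2/n_\cH$ rather than $\sigma^2$. I would state this conditional independence across agents explicitly as part of the oracle model, consistent with Remark~\ref{rem:filtration}. The dropout mask $\hat{W}^k$ plays no role in this bound: by double stochasticity it drops out of the averaged recursion entirely.
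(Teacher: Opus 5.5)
Your proof is correct and is essentially the paper's argument: the same averaged recursion $u^{k+1}=u^k-\alpha\ybar^k$, the same decomposition of $\ybar^k$ with the noise cross term removed by unbiasedness, and the same co-coercivity contraction. The only change is that you apply Young's inequality to the whole square with $\theta=\alpha\mu/2$, rather than to the cross term with parameter $\mu/4$ as the paper does, which yields the sharper coefficient $\tfrac{5\alpha L^2}{2\mu n_\cH}$.

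One caution: your fallback justification via $\langle u,\nabla f_\cH\rangle\ge\mu\|u\|^2$ and $\|\nabla f_\cH\|\le L\|u\|$ does not work under $\alpha\le 1/(2L)$. It only gives the factor $1-2\alpha\mu+\alpha^2L^2$, which is at most $1-\alpha\mu$ only when $\alpha\le\mu/L^2$, and at $\alpha=1/(2L)$ it exceeds $1$ once $\kappa>4$. The contraction must therefore come from co-coercivity, as you state first.
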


\begin{proof}
From the clean honest dynamics, left-multiplying the spatial update by $\frac{1}{n_\cH}\ones^\top$ and utilizing the doubly stochastic property of $\hat{W}^k$ yields the average iterate evolution $u^{k+1} = u^k - \alpha\ybar^k$. We decompose the average gradient tracker as $\ybar^k = \nabla f_\cH(\xbar^k) + \delta^k + \bar{\epsilon}^k$, where $\delta^k$ is the spatial tracking residual and $\bar{\epsilon}^k = \frac{1}{n_\cH}\sum_i(g_i^k - \nabla f_i(x_i^k))$ is the stochastic noise. Substituting yields
\begin{equation}
u^{k+1} = \underbrace{u^k - \alpha\nabla f_\cH(\xbar^k) - \alpha\delta^k}_{A^k,\;\text{independent of } \xi_i^k} - \alpha\bar{\epsilon}^k.
\end{equation}
The deterministic quantity $A^k$ depends on $x_i^k$ (which was determined at step $k-1$) and is therefore independent of the stochastic samples $\{\xi_i^k\}$ used to form $\bar{\epsilon}^k$. Since $\E[\bar{\epsilon}^k | x_1^k, \ldots, x_{n_\cH}^k] = 0$ by Assumption~\ref{as:stochastic}, the tower property gives $\E[\inner{A^k}{\bar{\epsilon}^k}] = \E[\inner{A^k}{\E[\bar{\epsilon}^k | x_1^k, \ldots, x_{n_\cH}^k]}] = 0$. Consequently, taking full expectations:
\begin{equation}
\E[\|u^{k+1}\|^2] = \E[\|A^k\|^2] + \alpha^2\E[\|\bar{\epsilon}^k\|^2]. \label{eq:u_stoch_split}
\end{equation}
By Assumption~\ref{as:stochastic} and the independence of the local samples, $\E[\|\bar{\epsilon}^k\|^2] \leq \frac{\sigma^2}{n_\cH}$. We now expand the deterministic squared norm:
\begin{equation}
\|A^k\|^2 = \|u^k - \alpha\nabla f_\cH(\xbar^k)\|^2 - 2\alpha\inner{u^k - \alpha\nabla f_\cH(\xbar^k)}{\delta^k} + \alpha^2\|\delta^k\|^2. \label{eq:u_det_expand}
\end{equation}
For the first term, we apply the co-coercivity property of the $\mu$-strongly convex and $L$-smooth function $f_\cH$, which guarantees $\inner{u^k}{\nabla f_\cH(\xbar^k)} \geq \frac{\mu L}{\mu + L}\|u^k\|^2 + \frac{1}{\mu + L}\|\nabla f_\cH(\xbar^k)\|^2$. Expanding and using $\alpha \leq \frac{1}{2L} \leq \frac{2}{\mu+L}$ to discard the non-positive gradient-norm term, and $\frac{2\mu L}{\mu+L} \geq \mu$, yields
\begin{equation}
\|u^k - \alpha\nabla f_\cH(\xbar^k)\|^2 \leq (1 - \alpha\mu)\|u^k\|^2. \label{eq:term1_clean}
\end{equation}
For the cross-term, we apply Young's inequality with parameter $\frac{\mu}{4}$:
\begin{align}
-2\alpha\inner{u^k - \alpha\nabla f_\cH(\xbar^k)}{\delta^k} &\leq \frac{\alpha\mu}{4}\|u^k - \alpha\nabla f_\cH(\xbar^k)\|^2 + \frac{4\alpha}{\mu}\|\delta^k\|^2 \leq \frac{\alpha\mu}{4}\|u^k\|^2 + \frac{4\alpha}{\mu}\|\delta^k\|^2.
\end{align}
Combining with the third term $\alpha^2\|\delta^k\|^2 \leq \frac{\alpha}{\mu}\|\delta^k\|^2$ (valid since $\alpha\mu \leq 1$):
\begin{equation}
\|A^k\|^2 \leq \left(1 - \frac{3\alpha\mu}{4}\right)\|u^k\|^2 + \frac{5\alpha}{\mu}\|\delta^k\|^2 \leq \left(1 - \frac{\alpha\mu}{2}\right)\|u^k\|^2 + \frac{5\alpha L^2}{\mu n_\cH}C^k,
\end{equation}
where we used $\|\delta^k\|^2 \leq \frac{L^2}{n_\cH}C^k$. Taking expectations in~\eqref{eq:u_stoch_split} yields
\begin{equation}
\E[\|u^{k+1}\|^2] \leq \left(1 - \frac{\alpha\mu}{2}\right)\E[\|u^k\|^2] + \frac{5\alpha L^2}{\mu n_\cH}\E[C^k] + \frac{\alpha^2\sigma^2}{n_\cH}.
\end{equation}
\end{proof}

\begin{lemma}[Consensus contraction under $p_b = 0$]
\label{lem:consensus_exact}
Under Assumptions~\ref{as:independence} and~\ref{as:connectivity}, for any $\gamma > 0$:
\begin{equation}
\label{eq:C_bound}
\E[C^{k+1}] \leq (1+\gamma)\bar{\sigma}_p^2\, \E[C^k] + \left(1 + \frac{1}{\gamma}\right)\alpha^2\, \E[T^k].
\end{equation}
In particular, choosing $\gamma = \frac{1-\bar{\sigma}_p^2}{2\bar{\sigma}_p^2}$ gives
\begin{equation}
\label{eq:C_bound_specific}
\E[C^{k+1}] \leq \frac{1+\bar{\sigma}_p^2}{2}\, \E[C^k] + \frac{1+\bar{\sigma}_p^2}{1-\bar{\sigma}_p^2}\,\alpha^2\, \E[T^k].
\end{equation}
\end{lemma}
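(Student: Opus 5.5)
We project the clean spatial update \eqref{eq:X_clean} onto the disagreement subspace, split it into a mixing part and a step-size part, and apply Lemma~\ref{lem:second_moment}(b) to the mixing part.

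\textbf{Step 1: disagreement recursion.} Multiplying \eqref{eq:X_clean} by $(I_\cH - J_\cH)$ and using $J_\cH \hat{W}^k = J_\cH$ (double stochasticity of $\hat{W}^k$, Definition~\ref{def:effective_W}), we get $(I_\cH - J_\cH)\hat{W}^k = \hat{W}^k - J_\cH$. Since $(\hat{W}^k - J_\cH)J_\cH = 0$, this gives
\begin{equation*}
\mathbf{X}_\perp^{k+1} = (\hat{W}^k - J_\cH)\mathbf{X}_\perp^k - \alpha (I_\cH - J_\cH)\mathbf{Y}_\cH^k,
\end{equation*}
where $\mathbf{X}_\perp^k = (I_\cH - J_\cH)\mathbf{X}_\cH^k$. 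By definition $C^{k+1} = \Fnorm{\mathbf{X}_\perp^{k+1}}^2$ and $T^k = \Fnorm{(I_\cH - J_\cH)\mathbf{Y}_\cH^k}^2$.

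\textbf{Step 2: split the two terms.} Apply $\|a+b\|^2 \le (1+\gamma)\|a\|^2 + (1+1/\gamma)\|b\|^2$. This avoids any cross term, which is convenient because $\mathbf{Y}_\cH^k$ and $\hat{W}^k$ need not be independent. The second term contributes exactly $(1+1/\gamma)\alpha^2 T^k$.

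\textbf{Step 3: bound the mixing term.} This is the main step. Since $\mathbf{X}_\cH^k$ is determined at step $k-1$ (Remark~\ref{rem:filtration}), $\mathbf{V} = \mathbf{X}_\perp^k$ is $\cF^{k-1}$-measurable and satisfies $J_\cH \mathbf{V} = 0$. The dropout draws at step $k$ are fresh Bernoulli variables given $\cF^{k-1}$ (Assumption~\ref{as:independence}). Lemma~\ref{lem:second_moment}(b) therefore gives
\begin{equation*}
\E\!\left[\Fnorm{(\hat{W}^k - J_\cH)\mathbf{X}_\perp^k}^2 \,\big|\, \cF^{k-1}\right] \le \bar{\sigma}_p^2\, C^k.
\end{equation*}
Taking total expectation and adding the bound from Step 2 yields \eqref{eq:C_bound}. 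Lemma~\ref{lem:sigma_bar_bound} (which uses Assumption~\ref{as:connectivity}) guarantees $\bar{\sigma}_p^2 < 1$, so the contraction is meaningful.

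\textbf{Step 4: specialize $\gamma$.} Take $\gamma = (1-\bar{\sigma}_p^2)/(2\bar{\sigma}_p^2)$. Then
\begin{equation*}
(1+\gamma)\bar{\sigma}_p^2 = \bar{\sigma}_p^2 + \tfrac{1-\bar{\sigma}_p^2}{2} = \tfrac{1+\bar{\sigma}_p^2}{2},
\end{equation*}
and
\begin{equation*}
1 + \frac{1}{\gamma} = \frac{1-\bar{\sigma}_p^2 + 2\bar{\sigma}_p^2}{1-\bar{\sigma}_p^2} = \frac{1+\bar{\sigma}_p^2}{1-\bar{\sigma}_p^2}.
\end{equation*}
This is exactly \eqref{eq:C_bound_specific}. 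If $\bar{\sigma}_p^2 = 0$, take $\gamma \to \infty$; the bound then holds trivially.
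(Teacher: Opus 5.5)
Your proposal is correct and follows the paper's proof essentially step for step. You project \eqref{eq:X_clean} onto the disagreement subspace, split the two terms with Young's inequality, bound the mixing term via Lemma~\ref{lem:second_moment}(b) using the $\cF^{k-1}$-measurability of $\mathbf{X}_\perp^k$, and finish with the algebra for the specific $\gamma$. Your extra remarks are small refinements the paper leaves implicit: that Young's inequality sidesteps the $\hat{W}^k$--$\mathbf{Y}_\cH^k$ cross term, the appeal to Lemma~\ref{lem:sigma_bar_bound} so that $\gamma>0$ is well defined, and the degenerate case $\bar{\sigma}_p^2=0$.
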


\begin{proof}
From~\eqref{eq:X_clean}, the consensus error evolves as
\begin{equation}
(I_\cH - J_\cH)\mathbf{X}_\cH^{k+1} = (\hat{W}^k - J_\cH)\mathbf{X}_{\perp,\cH}^k - \alpha\,\mathbf{Y}_{\perp,\cH}^k.
\end{equation}
Applying Young's inequality $\|a + b\|^2 \leq (1+\gamma)\|a\|^2 + (1+\frac{1}{\gamma})\|b\|^2$:
\begin{equation}
C^{k+1} \leq (1+\gamma)\Fnorm{(\hat{W}^k - J_\cH)\mathbf{X}_{\perp,\cH}^k}^2 + \left(1+\frac{1}{\gamma}\right)\alpha^2 T^k.
\end{equation}
Taking full expectations and applying the tower property $\E[\cdot] = \E[\E[\cdot | \cF^{k-1}]]$, the dropout randomness $\hat{W}^k$ is the only source of randomness not in $\cF^{k-1}$ (since the retention probabilities $p_{ij}^k$ are $\cF^{k-1}$-measurable by Assumption~\ref{as:independence}), while $\mathbf{X}_{\perp,\cH}^k$ is $\cF^{k-1}$-measurable. By Lemma~\ref{lem:second_moment}(b), conditioning on $\cF^{k-1}$ and then taking outer expectation:
\begin{equation}
\E[\Fnorm{(\hat{W}^k - J_\cH)\mathbf{X}_{\perp,\cH}^k}^2] \leq \bar{\sigma}_p^2 \E[C^k].
\end{equation}
Combining yields~\eqref{eq:C_bound}. The specific choice $\gamma = \frac{1-\bar{\sigma}_p^2}{2\bar{\sigma}_p^2}$ gives $(1+\gamma)\bar{\sigma}_p^2 = \frac{1+\bar{\sigma}_p^2}{2}$ and $1 + \frac{1}{\gamma} = \frac{1+\bar{\sigma}_p^2}{1-\bar{\sigma}_p^2}$.
\end{proof}

\begin{lemma}[Stochastic Tracking Contraction]
\label{lem:tracking_stochastic}
Under Assumptions~\ref{as:smooth},~\ref{as:independence},~\ref{as:connectivity}, and~\ref{as:stochastic}, for $\alpha \leq \frac{1}{2L}$ and $p_b = 0$:
\begin{align}
\label{eq:T_bound_stoch}
\E[T^{k+1}] &\leq \left(\frac{1+\bar{\sigma}_p^2}{2} + \frac{6\alpha^2 L^2(1+\bar{\sigma}_p^2)}{1-\bar{\sigma}_p^2}\right) \E[T^k] + \frac{30L^2(1+\bar{\sigma}_p^2)}{1-\bar{\sigma}_p^2} \E[C^k] \nonumber \\
&\quad + \frac{24\alpha^2 n_\cH L^4(1+\bar{\sigma}_p^2)}{1-\bar{\sigma}_p^2} \E[\|u^k\|^2] + \frac{3(1+\bar{\sigma}_p^2) + 12\alpha^2L^2(1+\bar{\sigma}_p^2)}{1-\bar{\sigma}_p^2} n_\cH \sigma^2.
\end{align}
\end{lemma}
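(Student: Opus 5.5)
The plan mirrors the proof of Lemma~\ref{lem:consensus_exact}, applied to the tracker. Start from the clean dynamics~\eqref{eq:Y_clean}. Project onto the disagreement subspace to get
\begin{equation*}
\mathbf{Y}_{\perp,\cH}^{k+1} = (\hat{W}^k - J_\cH)\mathbf{Y}_{\perp,\cH}^k + (I_\cH - J_\cH)(\mathbf{G}_\cH^{k+1} - \mathbf{G}_\cH^k).
\end{equation*}
Apply Young's inequality with the same parameter $\gamma = \frac{1-\bar{\sigma}_p^2}{2\bar{\sigma}_p^2}$. Then use Lemma~\ref{lem:second_moment}(b), conditioning on $\cF^{k-1}$, since $\mathbf{Y}_{\perp,\cH}^k$ is fixed before the dropout draw. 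This gives
\begin{equation*}
\E[T^{k+1}] \leq \frac{1+\bar{\sigma}_p^2}{2}\E[T^k] + \frac{1+\bar{\sigma}_p^2}{1-\bar{\sigma}_p^2}\,\E\Fnorm{\mathbf{G}_\cH^{k+1} - \mathbf{G}_\cH^k}^2,
\end{equation*}
using $\|I_\cH - J_\cH\| \leq 1$. Everything then reduces to bounding the gradient-difference term by a quantity $D$ satisfying
\begin{equation*}
D \leq 6\alpha^2L^2\E[T^k] + 30L^2\E[C^k] + 24\alpha^2 n_\cH L^4\E\|u^k\|^2 + (3+12\alpha^2L^2)n_\cH\sigma^2,
\end{equation*}
which produces exactly the stated coefficients after multiplying by $\frac{1+\bar{\sigma}_p^2}{1-\bar{\sigma}_p^2}$.

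To bound $D$, split $g_i^{k+1} - g_i^k$ into three pieces: the noise $\epsilon_i^{k+1}$, the true gradient difference $\nabla f_i(x_i^{k+1}) - \nabla f_i(x_i^k)$, and the noise $-\epsilon_i^k$. Apply $\|a+b+c\|^2 \leq 3(\cdot)$. The two noise terms together contribute $6n_\cH\sigma^2$ under this crude split. To get the sharper constant $3$, I would instead keep the fresh noise $\epsilon^{k+1}$ separate: it is conditionally mean-zero given $x^{k+1}$ and all other terms, so its cross terms vanish. The leftover $\alpha^2$-weighted $\sigma^2$ term then arises from the noise inside $y_i^k$.

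For the middle term, use $L$-smoothness to reduce it to $L^2\Fnorm{\mathbf{X}_\cH^{k+1} - \mathbf{X}_\cH^k}^2$, and write
\begin{equation*}
\mathbf{X}_\cH^{k+1} - \mathbf{X}_\cH^k = (\hat{W}^k - I_\cH)\mathbf{X}_{\perp,\cH}^k - \alpha\mathbf{Y}_\cH^k.
\end{equation*}
The first piece is at most $4C^k$ by Lemma~\ref{lem:deterministic_mixing}. For the second, decompose
\begin{equation*}
\mathbf{Y}_\cH^k = \mathbf{Y}_{\perp,\cH}^k + \ones(\ybar^k)^\top,
\end{equation*}
which gives $\Fnorm{\mathbf{Y}_\cH^k}^2 = T^k + n_\cH\|\ybar^k\|^2$. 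Then use Lemma~\ref{lem:invariant} to write $\ybar^k = \nabla f_\cH(\xbar^k) + \delta^k + \bar{\epsilon}^k$. Combine $\|\nabla f_\cH(\xbar^k)\| \leq L\|u^k\|$, the bound $\|\delta^k\|^2 \leq L^2C^k/n_\cH$, and $\E\|\bar{\epsilon}^k\|^2 \leq \sigma^2/n_\cH$. After a three-way Young split this yields
\begin{equation*}
n_\cH\|\ybar^k\|^2 \lesssim 3n_\cH L^2\|u^k\|^2 + 3L^2C^k + 3\sigma^2.
\end{equation*}
Finally, use $\alpha \leq \frac{1}{2L}$ to absorb the $\alpha^2L^2 C^k$ contributions into the $C^k$ coefficient.

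The main obstacle is bookkeeping the constants so they land exactly on $6$, $30$, $24$, and $3 + 12\alpha^2L^2$. The $n_\cH$ factors also need care: the stated noise term is $n_\cH\sigma^2$ times constants, so the bound $\E\|\bar{\epsilon}^k\|^2 \leq \sigma^2/n_\cH$ has to be multiplied back by $n_\cH$. Rather than fight for tight constants, I would allow slack. A factor-$2$ split of $\mathbf{X}^{k+1} - \mathbf{X}^k$ (consensus part versus step part), combined with factor-$3$ splits elsewhere, plausibly gives $24 = 2\cdot3\cdot4$ for the $u$ term and $30$ from $3\cdot2\cdot4 + $ the smaller terms. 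If the exact numbers fail, any larger constants would still suffice for the downstream Lyapunov argument.
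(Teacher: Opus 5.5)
Your proposal is correct and follows essentially the same route as the paper. It uses the same Young split with $\gamma = \frac{1-\bar{\sigma}_p^2}{2\bar{\sigma}_p^2}$ together with Lemma~\ref{lem:second_moment}(b). It isolates the fresh noise $\epsilon^{k+1}$ by its conditional mean-zero property and then does a two-way split to reach $3n_\cH\sigma^2$. It bounds $\Fnorm{\mathbf{X}_\cH^{k+1}-\mathbf{X}_\cH^k}^2$ via Lemma~\ref{lem:deterministic_mixing} and the decomposition $\ybar^k = \nabla f_\cH(\xbar^k) + \delta^k + \bar{\epsilon}^k$.

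The only difference is bookkeeping. The paper uses a three-way split of $x_i^{k+1}-x_i^k$ and a $2\times 2$ split of $\ybar^k$. Your two-way split with the exact identity $\Fnorm{\mathbf{Y}_\cH^k}^2 = T^k + n_\cH\|\ybar^k\|^2$ gives smaller constants: $19L^2$, $4\alpha^2L^2$, $12\alpha^2 n_\cH L^4$, and a noise term $3n_\cH\sigma^2 + 12\alpha^2L^2\sigma^2$. These imply the stated bound a fortiori, so your hedge about possibly needing larger constants is unnecessary.
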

\begin{proof}
From the isolated honest dynamics, the tracking disagreement evolves as $\mathbf{Y}_{\perp,\cH}^{k+1} = (\hat{W}^k - J_\cH)\mathbf{Y}_{\perp,\cH}^k + \Delta\mathbf{G}_\perp^k$, where the gradient difference matrix is defined as $\Delta\mathbf{G}_\perp^k = (I_\cH - J_\cH)(\mathbf{G}_\cH^{k+1} - \mathbf{G}_\cH^k)$. Applying the generalized Young's inequality $\|A+B\|^2 \leq (1+\eta)\|A\|^2 + (1+\frac{1}{\eta})\|B\|^2$ with parameter $\eta = \frac{1-\bar{\sigma}_p^2}{2\bar{\sigma}_p^2}$ yields a coefficient of $\frac{1+\bar{\sigma}_p^2}{2}$ for the $\Fnorm{(\hat{W}^k - J_\cH)\mathbf{Y}_{\perp,\cH}^k}^2$ term. By applying Lemma~\ref{lem:second_moment}(b), this term strictly bounds to $\frac{1+\bar{\sigma}_p^2}{2}T^k$ in expectation.

To rigorously bound the stochastic gradient difference, we add and subtract the true gradients $\nabla f_i(x_i^{k+1})$ and $\nabla f_i(x_i^k)$. Let $\epsilon_i^k = g_i^k - \nabla f_i(x_i^k)$ represent the local stochastic noise. Because $g_i^{k+1}$ is drawn at $x_i^{k+1}$ conditionally independent of past noise, the expected value of the cross-terms vanishes, allowing us to expand the squared norm as
\begin{align}
\E[\Fnorm{\Delta\mathbf{G}_\perp^k}^2 | \cF^k] &\leq \sum_{i \in \cH} \E\left[\|g_i^{k+1} - \nabla f_i(x_i^{k+1}) + \nabla f_i(x_i^{k+1}) - \nabla f_i(x_i^k) - \epsilon_i^k\|^2 | \cF^k\right] \nonumber \\
&\leq n_\cH\sigma^2 + 2\sum_{i \in \cH} \|\nabla f_i(x_i^{k+1}) - \nabla f_i(x_i^k)\|^2 + 2\sum_{i \in \cH} \|\epsilon_i^k\|^2. \label{eq:grad_diff_expand}
\end{align}

By the $L$-smoothness condition in Assumption~\ref{as:smooth}, the true gradient difference is bounded by $L^2 \sum_i \|x_i^{k+1} - x_i^k\|^2$. We expand the spatial decision update $x_i^{k+1} - x_i^k = \sum_j [\hat{W}^k]_{ij}(x_j^k - x_i^k) - \alpha (y_i^k - \ybar^k) - \alpha \ybar^k$ using the algebraic inequality $\|a+b+c\|^2 \leq 3\|a\|^2 + 3\|b\|^2 + 3\|c\|^2$, which yields
\begin{equation}
\sum_{i \in \cH} \|x_i^{k+1} - x_i^k\|^2 \leq 3 \sum_{i \in \cH} \left\| \sum_{j \in \cH} [\hat{W}^k]_{ij}(x_j^k - x_i^k) \right\|^2 + 3\alpha^2 T^k + 3\alpha^2 n_\cH \|\ybar^k\|^2. \label{eq:x_diff_3way}
\end{equation}

By Lemma~\ref{lem:deterministic_mixing}, the consensus mixing summation in \eqref{eq:x_diff_3way} is strictly bounded by $4C^k$. To bound the average tracker magnitude, we utilize the exact decomposition $\ybar^k = \nabla f_\cH(\xbar^k) + \delta^k + \bar{\epsilon}^k$. Applying a two-way norm split and substituting the strong convexity optimality gap alongside the spatial residual bounds produces
\begin{equation}
\|\ybar^k\|^2 \leq 2\|\nabla f_\cH(\xbar^k) + \delta^k\|^2 + 2\|\bar{\epsilon}^k\|^2 \leq 4L^2\|u^k\|^2 + \frac{4L^2}{n_\cH}C^k + 2\|\bar{\epsilon}^k\|^2.
\end{equation}

Consolidating these components, the sum of the squared decision steps becomes
\begin{equation}
\sum_{i \in \cH} \|x_i^{k+1} - x_i^k\|^2 \leq (12 + 12\alpha^2 L^2)C^k + 3\alpha^2 T^k + 12\alpha^2 n_\cH L^2 \|u^k\|^2 + 6\alpha^2 n_\cH \|\bar{\epsilon}^k\|^2.
\end{equation}

Under the established step size condition $\alpha \leq \frac{1}{2L}$, we can cleanly bound the consensus coefficient as $12 + 12\alpha^2 L^2 \leq 15$. Substituting this spatial formulation back into the true gradient difference bound \eqref{eq:grad_diff_expand} and applying the full expectation bounds for the variance $\E[\|\epsilon_i^k\|^2] \leq \sigma^2$ and $\E[\|\bar{\epsilon}^k\|^2] \leq \sigma^2/n_\cH$ yields
\begin{equation}
\E[\Fnorm{\Delta\mathbf{G}_\perp^k}^2] \leq 30L^2 C^k + 6\alpha^2 L^2 T^k + 24\alpha^2 n_\cH L^4 \|u^k\|^2 + \left(3 + 12\alpha^2L^2\right)n_\cH\sigma^2.
\end{equation}

Multiplying this completely expanded expression by the outer Young's inequality multiplier $(1+\frac{1}{\eta}) = \frac{1+\bar{\sigma}_p^2}{1-\bar{\sigma}_p^2}$ produces the final stated bound in \eqref{eq:T_bound_stoch}.
\end{proof}

\subsection{Main Result: Stochastic Convergence under Complete Isolation}

\begin{theorem}[Convergence of GT-PD under complete isolation]
\label{thm:exact}
Suppose Assumptions~\ref{as:smooth}--\ref{as:connectivity} and~\ref{as:stochastic} hold, and let $p_b = 0$. Define $\bar{\sigma}_p^2$ as in Definition~\ref{def:sigma_p} and let $n_\cH = |\cH|$. There exist universal constants $C_0, C_1, C_2 > 0$ such that if the step size satisfies
\begin{equation}
\label{eq:stepsize_exact}
\alpha \leq \min\left\{\frac{1}{2L},\;\; C_1\frac{(1-\bar{\sigma}_p^2)^2}{L},\;\; C_2\frac{\mu(1-\bar{\sigma}_p^2)^2}{n_\cH L^4}\right\},
\end{equation}
then there exist positive Lyapunov coefficients $a = \Theta\!\left(\frac{L^2}{\mu(1-\bar{\sigma}_p^2) n_\cH}\right)$ and $c = \Theta\!\left(\frac{\alpha}{1-\bar{\sigma}_p^2}\right)$ such that the Lyapunov function $V^k = \|u^k\|^2 + a\,C^k + c\,T^k$ satisfies
\begin{equation}
\E[V^{k+1}] \leq \rho\,\E[V^k] + \epsilon_{\text{stoch}},
\end{equation}
where $\rho = \max\left\{1 - \frac{\alpha\mu}{4},\;\; \frac{3+\bar{\sigma}_p^2}{4}\right\} < 1$, and $\epsilon_{\text{stoch}} = \mathcal{O}\!\left(\frac{\alpha^2\sigma^2}{n_\cH} + \frac{\alpha^2 n_\cH \sigma^2}{(1-\bar{\sigma}_p^2)^2}\right)$. Consequently,
\begin{equation}
\limsup_{k \to \infty} \E[V^k] \leq \frac{\epsilon_{\text{stoch}}}{1-\rho}.
\end{equation}
\end{theorem}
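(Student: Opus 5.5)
We will combine the three one-step bounds, Lemma~\ref{lem:opt_exact}, Lemma~\ref{lem:consensus_exact} in the form \eqref{eq:C_bound_specific}, and Lemma~\ref{lem:tracking_stochastic}, into one linear inequality for $V^k = \|u^k\|^2 + aC^k + cT^k$. Write $s \triangleq 1-\bar{\sigma}_p^2 \in (0,1]$, which is positive by Lemma~\ref{lem:sigma_bar_bound}, and note that $1+\bar{\sigma}_p^2 \le 2$. Adding the weighted bounds gives:
\begin{itemize}
\item on $\E\|u^k\|^2$, the coefficient $(1-\tfrac{\alpha\mu}{2}) + c\cdot\tfrac{48\alpha^2 n_\cH L^4}{s}$;
\item on $\E[C^k]$, the coefficient $\tfrac{5\alpha L^2}{\mu n_\cH} + a\tfrac{1+\bar{\sigma}_p^2}{2} + c\tfrac{60L^2}{s}$;
\item on $\E[T^k]$, the coefficient $a\tfrac{2\alpha^2}{s} + c\bigl(\tfrac{1+\bar{\sigma}_p^2}{2} + \tfrac{12\alpha^2L^2}{s}\bigr)$;
\item the constant term $\tfrac{\alpha^2\sigma^2}{n_\cH} + c\tfrac{6+24\alpha^2L^2}{s}n_\cH\sigma^2$.
\end{itemize}
The goal is to choose $a$, $c$ and the step-size constants so that each coefficient is at most $\rho$ times the weight of the corresponding term in $V^k$.

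We set $a = \tfrac{40\alpha L^2}{\mu n_\cH s}$, which comes from the budget on the $C$ coefficient. The $\tfrac{1+\bar{\sigma}_p^2}{2}$ part of $a$'s contribution leaves a slack of $a\cdot\tfrac{s}{4}$ relative to $a\cdot\tfrac{3+\bar{\sigma}_p^2}{4}$. That slack equals $\tfrac{10\alpha L^2}{\mu n_\cH}$ and absorbs the term $\tfrac{5\alpha L^2}{\mu n_\cH}$, with room left over. For the coupling term $c\,\tfrac{60L^2}{s}$ to fit in the remaining half of the slack, we need $c \le \tfrac{\alpha s^2}{24\mu n_\cH}$.

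The statement's scaling $a=\Theta(L^2/(\mu s n_\cH))$, $c=\Theta(\alpha/s)$ looks like a choice that factors $\alpha$ differently. I will treat the weights as free up to constants and only verify that the three inequalities are simultaneously feasible. If needed, I will rescale $V$ by $\alpha$, which does not change $\rho$.

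The remaining two conditions are:
\begin{itemize}
\item \textbf{$T$ row:} $a\tfrac{2\alpha^2}{s} + c\tfrac{12\alpha^2L^2}{s} \le c\,\tfrac{s}{4}$. This requires $c \gtrsim \tfrac{a\alpha^2}{s^2}$ and $\alpha^2 L^2 \lesssim s^2$. The second is exactly the condition $\alpha \le C_1 s^2/L$, up to constants.
\item \textbf{$u$ row:} $c\,\tfrac{48\alpha^2 n_\cH L^4}{s} \le \tfrac{\alpha\mu}{4}$. This yields the third step-size condition, involving $\mu s^2/(n_\cH L^4)$, once $c$ is fixed.
\end{itemize}

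\textbf{Main obstacle: feasibility of $c$.} We need $\tfrac{a\alpha^2}{s^2} \lesssim c \lesssim \tfrac{\alpha s^2}{\mu n_\cH}$ together with $c\,\alpha n_\cH L^4 \lesssim \mu s$. Substituting the value of $a$, the interval for $c$ is nonempty iff $\alpha^2 L^2 \lesssim s^3$, which follows from $\alpha \le C_1 s^2/L$ when $C_1$ is small. I would first try the geometric-mean choice $c \propto \alpha s^{1/2}/\ldots$. Failing that, I would take $c$ at the lower end, $c = \Theta\bigl(\tfrac{\alpha^3 L^2}{\mu n_\cH s^3}\bigr)$, and then check the $u$ row against the stated step-size bound. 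Precise constants matter here, so I would treat $C_0,C_1,C_2$ as whatever makes these three inequalities hold.

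Once these hold, we have $\E[V^{k+1}] \le \max\{1-\tfrac{\alpha\mu}{4},\tfrac{3+\bar{\sigma}_p^2}{4}\}\,\E[V^k] + \epsilon_{\rm stoch}$. The noise constant is $\epsilon_{\rm stoch} = \tfrac{\alpha^2\sigma^2}{n_\cH} + O(c\,n_\cH\sigma^2/s)$, and with $c \lesssim \alpha^2/s$ this has the stated order. Unrolling the recursion gives $\E[V^k] \le \rho^k V^0 + \epsilon_{\rm stoch}/(1-\rho)$, and the $\limsup$ bound follows.
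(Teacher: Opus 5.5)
Your skeleton is the paper's: weight the three one-step lemmas, check each row against $\rho$, then unroll. Your aggregated coefficients are also correct. The gap is that the argument does not prove the statement you were given.

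\textbf{The Lyapunov weights do not match the statement.} The theorem asserts weights $a=\Theta(L^2/(\mu s n_\cH))$ and $c=\Theta(\alpha/s)$ relative to a unit weight on $\|u^k\|^2$, where $s=1-\bar{\sigma}_p^2$ in your notation. You instead take $a=40\alpha L^2/(\mu n_\cH s)$, which carries an extra factor $\alpha$. You then either leave $c$ open or place it at $\Theta(\alpha^3L^2/(\mu n_\cH s^3))$.

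``Rescaling $V$ by $\alpha$'' cannot repair this. Multiplying $V$ by a scalar either destroys the unit coefficient on $\|u^k\|^2$ or leaves the ratios $a$ and $c$ unchanged. What you would obtain is a recursion, and a $\limsup$ bound, for a different Lyapunov function.

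The paper instead fixes $a=\frac{20L^2}{\mu s n_\cH}$ and $c=\frac{\alpha}{6s}$. Each row check then becomes a step-size restriction:
\begin{itemize}
\item the $C$-row slack $as/4=5L^2/(\mu n_\cH)$ must absorb $5\alpha L^2/(\mu n_\cH)+10\alpha L^2/s^2$;
\item the $T$-row slack $cs/4=\alpha/24$ must absorb $40\alpha^2L^2/(\mu n_\cH s^2)+2\alpha^3L^2/s^2$;
\item the $u$-row perturbation $8\alpha^3n_\cH L^4/s^2$ must be at most $\alpha\mu/4$.
\end{itemize}

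\textbf{Your own route is also unfinished.} You never commit to a value of $c$. The $u$-row check for your lower-end $c$, which needs roughly $\alpha^4L^6\lesssim\mu^2s^4$, is not carried out. There are also arithmetic slips. The $C$-row gives $c\le\alpha s/(12\mu n_\cH)$, not $\alpha s^2/(24\mu n_\cH)$. Nonemptiness of the $c$-interval is then $\alpha^2L^2\lesssim s^4$, which is exactly the $C_1$ condition, not $s^3$. With the bound you actually wrote it would be $s^5$, which $\alpha\le C_1s^2/L$ does not imply.

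\textbf{The tension you sensed is genuine, and you should state it rather than hide it.} The noise term in Lemma~\ref{lem:tracking_stochastic},
$\frac{3(1+\bar{\sigma}_p^2)+12\alpha^2L^2(1+\bar{\sigma}_p^2)}{s}\,n_\cH\sigma^2$,
has no factor of $\alpha$. So with $c=\alpha/(6s)$ it contributes at least $\alpha n_\cH\sigma^2/(2s^2)$ to $\epsilon_{\text{stoch}}$. That is first order in $\alpha$, not second. The paper's proof only says the $\sigma^2$ terms are ``collected'' and never reconciles this with the stated order.

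Consequently, no combination of these three lemmas with unit weight on $\|u^k\|^2$ yields both $c=\Theta(\alpha/s)$ and $\epsilon_{\text{stoch}}=O(\alpha^2\sigma^2/n_\cH+\alpha^2n_\cH\sigma^2/s^2)$. A complete write-up must commit to one of two options and say so explicitly:
\begin{itemize}
\item keep the stated weights, as the paper does, and conclude $\epsilon_{\text{stoch}}=O(\alpha^2\sigma^2/n_\cH+\alpha n_\cH\sigma^2/s^2)$; or
\item take $c\lesssim\alpha^2/s$ and state that the claimed orders of the Lyapunov weights are being changed.
\end{itemize}
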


\begin{proof}
We verify $\E[V^{k+1}] \leq \rho \E[V^k] + \epsilon_{\text{stoch}}$ by linearly combining Lemmas~\ref{lem:opt_exact},~\ref{lem:consensus_exact}, and~\ref{lem:tracking_stochastic}. Setting $a = \frac{20L^2}{\mu(1-\bar{\sigma}_p^2)n_\cH}$ and $c = \frac{\alpha}{6(1-\bar{\sigma}_p^2)}$, we check each state coefficient.

\textbf{Coefficient of $\E[\|u^k\|^2]$:} From the optimality and tracking bounds, this equals
\begin{equation}
\left(1 - \frac{\alpha\mu}{2}\right) + c\frac{24\alpha^2 n_\cH L^4(1+\bar{\sigma}_p^2)}{1-\bar{\sigma}_p^2}.
\end{equation}
Substituting $c$ and bounding $(1+\bar{\sigma}_p^2) \leq 2$, the perturbation is $\frac{8\alpha^3 n_\cH L^4}{(1-\bar{\sigma}_p^2)^2}$. For $\alpha \leq C_2 \frac{\mu(1-\bar{\sigma}_p^2)^2}{n_\cH L^4}$ with $C_2$ sufficiently small, this is at most $\frac{\alpha\mu}{4}$, giving a net coefficient of at most $1 - \frac{\alpha\mu}{4}$.

\textbf{Coefficient of $\E[C^k]$:} From all three bounds, this equals
\begin{equation}
\frac{5\alpha L^2}{\mu n_\cH} + a\frac{1+\bar{\sigma}_p^2}{2} + c\frac{30L^2(1+\bar{\sigma}_p^2)}{1-\bar{\sigma}_p^2}.
\end{equation}
We need this to be at most $a \cdot \frac{3+\bar{\sigma}_p^2}{4}$. The available margin is $a(\frac{3+\bar{\sigma}_p^2}{4} - \frac{1+\bar{\sigma}_p^2}{2}) = a\frac{1-\bar{\sigma}_p^2}{4} = \frac{5L^2}{\mu n_\cH}$. We require $\frac{5\alpha L^2}{\mu n_\cH} + c\frac{60L^2}{1-\bar{\sigma}_p^2} \leq \frac{5L^2}{\mu n_\cH}$. Substituting $c$ gives $\frac{5\alpha L^2}{\mu n_\cH} + \frac{10\alpha L^2}{(1-\bar{\sigma}_p^2)^2} \leq \frac{5L^2}{\mu n_\cH}$. Since $\alpha \leq 1$ and $\frac{10\alpha L^2}{(1-\bar{\sigma}_p^2)^2} \leq \frac{5L^2}{\mu n_\cH}$ is ensured by $\alpha \leq \frac{(1-\bar{\sigma}_p^2)^2}{2\mu n_\cH}$ (implied by the third step size condition for appropriate $C_2$), this is satisfied.

\textbf{Coefficient of $\E[T^k]$:} From the consensus and tracking bounds, this equals
\begin{equation}
a\frac{(1+\bar{\sigma}_p^2)\alpha^2}{1-\bar{\sigma}_p^2} + c\left(\frac{1+\bar{\sigma}_p^2}{2} + \frac{6\alpha^2 L^2(1+\bar{\sigma}_p^2)}{1-\bar{\sigma}_p^2}\right).
\end{equation}
We need this to be at most $c \cdot \frac{3+\bar{\sigma}_p^2}{4}$. The margin is $c\frac{1-\bar{\sigma}_p^2}{4} = \frac{\alpha}{24}$. The term $a\frac{2\alpha^2}{1-\bar{\sigma}_p^2} = \frac{40\alpha^2 L^2}{\mu(1-\bar{\sigma}_p^2)^2 n_\cH}$ and $c\frac{12\alpha^2 L^2}{1-\bar{\sigma}_p^2} = \frac{2\alpha^3 L^2}{(1-\bar{\sigma}_p^2)^2}$. Both are at most $\frac{\alpha}{48}$ under $\alpha \leq C_1\frac{(1-\bar{\sigma}_p^2)^2}{L}$ for appropriate $C_1$, leaving sufficient margin.

The stochastic error $\epsilon_{\text{stoch}}$ collects the $\sigma^2$ terms from Lemmas~\ref{lem:opt_exact} and~\ref{lem:tracking_stochastic}, scaled by the Lyapunov weights. Iterating the recursion with $\rho < 1$ yields the stated asymptotic bound.
\end{proof}

\begin{remark}[Main takeaways for Theorem~\ref{thm:exact}]
Under complete isolation ($p_b = 0$), GT-PD converges linearly to a neighborhood of the optimum defined by the fundamental stochastic noise floor $\epsilon_{\text{stoch}}$. The convergence rate $\rho$ is the bottleneck between the optimization rate $1 - \alpha\mu/4$ and the consensus rate $(3+\bar{\sigma}_p^2)/4$. By Lemma~\ref{lem:two_tier_spectral}, when all honest edges are deterministically retained ($p_h = 1$), the consensus rate equals $(3+\lambda_2(\hat{W})^2)/4$, which is strictly independent of the Byzantine dropout probability. Under the data-driven retention probability (Definition~\ref{def:data_driven_p}), this zero-penalty regime is recovered asymptotically as the honest trackers converge (Remark~\ref{rem:zero_penalty_practical}), while the finite-time analysis uses the uniform bound $\bar{\sigma}_p^2$. Complete isolation therefore achieves variance-optimal convergence without any structural penalty from the dropout mechanism, provided the honest subgraph remains connected in expectation.
\end{remark}


\section{GT-PD-L: Convergence under Partial Isolation (\texorpdfstring{$p_b > 0$}{p\_b > 0})}
\label{sec:convergence_leaky}

When Byzantine edges are retained with positive probability (which we denote $p_b > 0$, corresponding under the data-driven estimator to the regime where some adversarial edges receive nonzero retention probability), the universal self-centered projection bounds the magnitude of each individual message (Proposition~\ref{prop:enforced_bound}). Any honest messages that exceed the projection radius are clipped with an error bounded by $\mathcal{O}(C^k)$ (Remark~\ref{rem:honest_clip_partial}), which is absorbed by the Lyapunov analysis. 

However, the resulting Byzantine perturbations $\mathbf{E}_{x,\cH}^k$ and $\mathbf{E}_{y,\cH}^k$ persist across iterations. Under the standard tracking update, the perturbation to the gradient tracker causes the tracking invariant violation $s^k = \ybar^k - \frac{1}{n_\cH}\sum_i g_i^k$ to accumulate without contraction. To prevent this unbounded drift without resorting to discrete epoch resets that violently reinject spatial heterogeneity, we introduce GT-PD-L. This variant employs a continuous leaky integrator in the tracking update. Each honest agent applies a small decay parameter $\beta \in (0, 1)$ to the historical tracking consensus:
\begin{align}
y_i^{k+1} &= (1-\beta) \sum_{j=1}^{n} [W^k]_{ij}\, \cP_{\tau,i}(\hat{y}_j^k) + g_i^{k+1} - (1-\beta)g_i^k. \label{eq:y_leaky}
\end{align}

This continuous decay ensures that the tracking invariant violation geometrically contracts against incoming Byzantine perturbations. Averaging the update~\eqref{eq:y_leaky} over honest agents and subtracting $\bar{g}^{k+1}$, the tracking invariant violation $s^k = \ybar^k - \bar{g}^k$ evolves as $s^{k+1} = (1-\beta)s^k + (1-\beta)\bar{e}_y^k$. Both the historical state and the incoming perturbation are damped by the same factor $(1-\beta)$, guaranteeing that the violation remains strictly bounded by $\|s^k\| \leq \frac{(1-\beta)\zeta}{\beta}$ for all $k \geq 0$.

\subsection{Modified One-Step Bounds and Steady State}

The introduction of the leak parameter modifies the one-step optimality bound. The tracking invariant violation $s^k$ is no longer bounded by a discrete epoch length but by a continuous steady state.

\begin{lemma}[Leaky Optimality Contraction]
\label{lem:opt_leaky}
Under Assumptions~\ref{as:smooth},~\ref{as:strong_cvx}, and~\ref{as:stochastic}, with the perturbation bound $\zeta = \delta_{\max}^{\cB}\,\tau$ from Proposition~\ref{prop:enforced_bound}, for $\alpha \leq \frac{1}{2L}$:
\begin{equation}
\E[\|u^{k+1}\|^2] \leq \left(1 - \frac{\alpha\mu}{2}\right)\E[\|u^k\|^2] + \frac{9\alpha L^2}{\mu n_\cH}\E[C^k] + \frac{9\alpha}{\mu}\|s^k\|^2 + \frac{9\zeta^2}{\alpha\mu} + \frac{\alpha^2\sigma^2}{n_\cH},
\end{equation}
where the tracking invariant violation satisfies $\|s^k\|^2 \leq \frac{(1-\beta)^2\zeta^2}{\beta^2}$ for all $k \geq 0$.
\end{lemma}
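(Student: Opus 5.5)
The plan follows the template of Lemma~\ref{lem:opt_exact}, adding two extra perturbation sources in the average dynamics: the Byzantine spatial perturbation and the tracking invariant violation.

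\textbf{Average recursion.} Left-multiplying the perturbed update~\eqref{eq:X_update_H} by $\frac{1}{n_\cH}\ones^\top$ and using double stochasticity of $\hat{W}^k$ gives
\[
u^{k+1} = u^k - \alpha\ybar^k + \bar{e}_x^k,
\]
where $\bar{e}_x^k$ is the honest average of the rows of $\mathbf{E}_{x,\cH}^k$. By Proposition~\ref{prop:enforced_bound}, $\|\bar{e}_x^k\|\le\zeta$.

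\textbf{Splitting the tracker.} Write
\[
\ybar^k = \nabla f_\cH(\xbar^k) + \delta^k + s^k + \bar{\epsilon}^k,
\]
with $s^k=\ybar^k-\bar g^k$ and $\bar\epsilon^k=\bar g^k-\frac{1}{n_\cH}\sum_i\nabla f_i(x_i^k)$. Set
\[
A^k = u^k-\alpha\nabla f_\cH(\xbar^k) - \alpha\delta^k - \alpha s^k + \bar e_x^k,
\]
so that $u^{k+1}=A^k-\alpha\bar\epsilon^k$. The measurability claim to verify is that $A^k$ is independent of the fresh samples $\xi_i^k$. This holds because $s^k$ and $\bar e_x^k$ are built from $\ybar^k,\bar g^k$ and messages at step $k$. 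One caveat: $s^k$ involves $g_i^k$. I would handle this by treating the cross term $\langle A^k,\bar\epsilon^k\rangle$ carefully, or by noting that $s^k$ obeys $s^{k+1}=(1-\beta)(s^k+\bar e_y^k)$ and hence depends only on past perturbations. Under the latter reading, $\E\|u^{k+1}\|^2=\E\|A^k\|^2+\alpha^2\sigma^2/n_\cH$.

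\textbf{Bounding $\|A^k\|^2$.} Expand $\|A^k\|^2$ around $z^k=u^k-\alpha\nabla f_\cH(\xbar^k)$, which satisfies $\|z^k\|^2\le(1-\alpha\mu)\|u^k\|^2$ by co-coercivity, exactly as in~\eqref{eq:term1_clean}. Apply Young's inequality with weight $\alpha\mu/4$ to the cross term between $z^k$ and the remainder $r^k=-\alpha\delta^k-\alpha s^k+\bar e_x^k$:
\[
\|z^k+r^k\|^2\le\Bigl(1+\tfrac{\alpha\mu}{4}\Bigr)\|z^k\|^2+\Bigl(1+\tfrac{4}{\alpha\mu}\Bigr)\|r^k\|^2 .
\]
Then use $\|r^k\|^2\le3\alpha^2\|\delta^k\|^2+3\alpha^2\|s^k\|^2+3\zeta^2$ together with $1+\frac{4}{\alpha\mu}\le\frac{5}{\alpha\mu}$. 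The coefficients come out as $15\alpha/\mu$ on the $\delta$ and $s$ terms and $15\zeta^2/(\alpha\mu)$ on the $\zeta$ term. Getting the stated constant $9$ requires a sharper Young split, for example weight $\alpha\mu/2$ so that $(1+\frac{\alpha\mu}{2})(1-\alpha\mu)\le1-\frac{\alpha\mu}{2}$ and $1+\frac{2}{\alpha\mu}\le\frac{3}{\alpha\mu}$, giving $3\cdot3=9$. I would adopt that choice. The $\delta$ term then becomes $\frac{9\alpha L^2}{\mu n_\cH}C^k$ via $\|\delta^k\|^2\le\frac{L^2}{n_\cH}C^k$.

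\textbf{Bounding the violation.} Averaging~\eqref{eq:y_leaky} gives $s^{k+1}=(1-\beta)(s^k+\bar e_y^k)$ with $\|\bar e_y^k\|\le\zeta$ and $s^0=0$ (since $y_i^0=g_i^0$). Unrolling yields
\[
\|s^k\|\le\sum_{t\ge1}(1-\beta)^t\zeta=\frac{(1-\beta)\zeta}{\beta}.
\]

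\textbf{Main obstacle.} The main obstacle is the cross-term/measurability step. $\bar e_x^k$ depends on Byzantine messages and on the mask $W^k$, which could be correlated with $\xi_i^k$ if the adversary is adaptive. I would assume the Byzantine messages at step $k$ are chosen before, or independently of, the honest samples $\xi_i^k$, consistent with Remark~\ref{rem:filtration}. Honest clipping terms would be folded into $\bar e_x^k$ or deferred to Lemma~\ref{lem:honest_clip}.
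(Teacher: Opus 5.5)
Your proposal is correct and follows the paper's proof essentially step for step. It uses the same recursion $s^{k+1}=(1-\beta)(s^k+\bar e_y^k)$ unrolled from $s^0=0$, the same split of $u^{k+1}$ into $u^k-\alpha\nabla f_\cH(\xbar^k)$ plus $-\alpha\delta^k-\alpha s^k+\bar e_x^k$ minus $\alpha\bar\epsilon^k$, and the same Young weight $\alpha\mu/2$ with a three-way split giving the constant $9$. Your caveat that $\bar e_x^k$ must not depend on the fresh samples $\xi_i^k$ is an assumption the paper makes silently when it asserts $A_1,A_2$ are independent of $\xi_i^k$, so stating it explicitly is an improvement.
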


\begin{proof}
We first establish the tracking invariant bound. Averaging the GT-PD-L update~\eqref{eq:y_leaky} over honest agents, using $\ones^\top \hat{W}^k = \ones^\top$:
\begin{equation}
\ybar^{k+1} = (1-\beta)\ybar^k + (1-\beta)\bar{e}_y^k + \bar{g}^{k+1} - (1-\beta)\bar{g}^k.
\end{equation}
Subtracting $\bar{g}^{k+1}$ from both sides and recalling $s^k = \ybar^k - \bar{g}^k$:
\begin{equation}
s^{k+1} = (1-\beta)s^k + (1-\beta)\bar{e}_y^k.
\end{equation}
Taking norms: $\|s^{k+1}\| \leq (1-\beta)\|s^k\| + (1-\beta)\|\bar{e}_y^k\|$. By Proposition~\ref{prop:enforced_bound} and the definition of the effective perturbation bound $\zeta$, each row of the perturbation matrix $\mathbf{E}_{y,\cH}^k$ has norm at most $\zeta$, so $\|\bar{e}_y^k\| = \|\frac{1}{n_\cH}\sum_i [\mathbf{E}_{y,\cH}^k]_i\| \leq \zeta$. Iterating from $s^0 = 0$:
\begin{equation}
\|s^k\| \leq (1-\beta)\zeta \sum_{j=0}^{k-1}(1-\beta)^j \leq \frac{(1-\beta)\zeta}{\beta}.
\end{equation}
Squaring yields $\|s^k\|^2 \leq \frac{(1-\beta)^2\zeta^2}{\beta^2}$, and therefore $\limsup_{k\to\infty}\|s^k\|^2 \leq \frac{(1-\beta)^2\zeta^2}{\beta^2}$.

For the optimality gap, the average iterate evolves as $u^{k+1} = u^k - \alpha\ybar^k + \bar{e}_x^k$. Decomposing $\ybar^k = \nabla f_\cH(\xbar^k) + \delta^k + s^k + \bar{\epsilon}^k$, where $\bar{\epsilon}^k = \bar{g}^k - \frac{1}{n_\cH}\sum_i \nabla f_i(x_i^k)$ is zero-mean stochastic noise:
\begin{equation}
u^{k+1} = \underbrace{u^k - \alpha\nabla f_\cH(\xbar^k)}_{A_1} + \underbrace{(-\alpha\delta^k - \alpha s^k + \bar{e}_x^k)}_{A_2} - \alpha\bar{\epsilon}^k.
\end{equation}
Since $A_1, A_2$ are independent of $\xi_i^k$ and $\E[\bar{\epsilon}^k | x_1^k, \ldots, x_{n_\cH}^k] = 0$, the cross-terms with $\bar{\epsilon}^k$ vanish in expectation (by the tower property), giving $\E[\|u^{k+1}\|^2] = \E[\|A_1 + A_2\|^2] + \frac{\alpha^2\sigma^2}{n_\cH}$.

Applying Young's inequality with parameter $\eta = \frac{\alpha\mu}{2}$ to $\|A_1 + A_2\|^2$, using the co-coercivity bound $\|A_1\|^2 \leq (1-\alpha\mu)\|u^k\|^2$:
\begin{equation}
(1+\eta)\|A_1\|^2 \leq (1+\tfrac{\alpha\mu}{2})(1-\alpha\mu)\|u^k\|^2 \leq (1-\tfrac{\alpha\mu}{2})\|u^k\|^2.
\end{equation}
For the perturbation, $1 + \frac{1}{\eta} = 1 + \frac{2}{\alpha\mu} \leq \frac{3}{\alpha\mu}$. Using $\|A_2\|^2 \leq 3\alpha^2\|\delta^k\|^2 + 3\alpha^2\|s^k\|^2 + 3\|\bar{e}_x^k\|^2$:
\begin{equation}
\frac{3}{\alpha\mu}\|A_2\|^2 \leq \frac{9\alpha}{\mu}\|\delta^k\|^2 + \frac{9\alpha}{\mu}\|s^k\|^2 + \frac{9}{\alpha\mu}\|\bar{e}_x^k\|^2.
\end{equation}
Substituting $\|\delta^k\|^2 \leq \frac{L^2}{n_\cH}C^k$, $\|\bar{e}_x^k\| \leq \zeta$, and taking expectations yields the result.
\end{proof}

The consensus and stochastic tracking contractions follow the exact same structural bounds derived in Section~\ref{sec:convergence_exact}, carrying the same additive expected Byzantine perturbation norms $\E[\Fnorm{\mathbf{E}_{x,\cH}^k}^2] \leq n_\cH\zeta^2$ and $\E[\Fnorm{\mathbf{E}_{y,\cH}^k}^2] \leq n_\cH\zeta^2$.

\begin{lemma}[Leaky Tracking Contraction]
\label{lem:tracking_leaky}
Under the GT-PD-L update~\eqref{eq:y_leaky} with $p_b > 0$ and $\beta \in (0,1)$, the tracking disagreement satisfies the same structural bound as Lemma~\ref{lem:tracking_stochastic} with additional Byzantine terms:
\begin{align}
\E[T^{k+1}] &\leq \left(\frac{1+\bar{\sigma}_p^2}{2} + \frac{6\alpha^2 L^2(1+\bar{\sigma}_p^2)}{1-\bar{\sigma}_p^2}\right) \E[T^k] + \frac{30L^2(1+\bar{\sigma}_p^2)}{1-\bar{\sigma}_p^2} \E[C^k] \nonumber \\
&\quad + \frac{24\alpha^2 n_\cH L^4(1+\bar{\sigma}_p^2)}{1-\bar{\sigma}_p^2} \E[\|u^k\|^2] + \epsilon_{T,\text{stoch}} + \epsilon_{T,\text{Byz}},
\end{align}
where $\epsilon_{T,\text{stoch}}$ matches the stochastic noise term from Lemma~\ref{lem:tracking_stochastic} and $\epsilon_{T,\text{Byz}} = \mathcal{O}\!\left(\frac{n_\cH\zeta^2}{1-\bar{\sigma}_p^2}\right)$.
\end{lemma}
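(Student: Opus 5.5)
I would follow the proof of Lemma~\ref{lem:tracking_stochastic} line by line, treating the leak and the Byzantine terms as additive perturbations that I isolate early and bound crudely. Under the GT-PD-L update~\eqref{eq:y_leaky}, the honest trackers obey
\[
\mathbf{Y}_\cH^{k+1} = (1-\beta)\hat{W}^k \mathbf{Y}_\cH^k + (1-\beta)\mathbf{E}_{y,\cH}^k + \mathbf{G}_\cH^{k+1} - (1-\beta)\mathbf{G}_\cH^k .
\]
If partial isolation makes honest clipping relevant, I also add $(1-\beta)\mathbf{E}_{y,\textup{clip}}^k$. Multiplying by $(I_\cH - J_\cH)$ and using $J_\cH\hat{W}^k=J_\cH$ gives
\[
\mathbf{Y}_{\perp,\cH}^{k+1} = (1-\beta)(\hat{W}^k - J_\cH)\mathbf{Y}_{\perp,\cH}^k + \Delta\mathbf{G}_\perp^k + (1-\beta)(I_\cH-J_\cH)\mathbf{E}_{y,\cH}^k - \beta(I_\cH-J_\cH)\mathbf{G}_\cH^k .
\]
The first two terms match the non-leaky recursion, since $(1-\beta)^2\le 1$ only helps. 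The last two are new.

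I would apply Young's inequality with the same $\eta = \frac{1-\bar{\sigma}_p^2}{2\bar{\sigma}_p^2}$ as in Lemma~\ref{lem:tracking_stochastic}. The mixing term goes on the $(1+\eta)$ side, so Lemma~\ref{lem:second_moment}(b) yields the coefficient $\frac{1+\bar{\sigma}_p^2}{2}$ on $\E[T^k]$. Everything else goes on the $(1+\frac1\eta) = \frac{1+\bar{\sigma}_p^2}{1-\bar{\sigma}_p^2}$ side. There I split with a three-way inequality, or keep the original constant and add the new pieces by a further Young step.

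\begin{itemize}
\item The $\Delta\mathbf{G}_\perp^k$ piece is bounded exactly as in~\eqref{eq:grad_diff_expand}--\eqref{eq:x_diff_3way}. The $x$-increment now also carries $\mathbf{E}_{x,\cH}^k$, whose Frobenius norm squared is at most $n_\cH\zeta^2$ by Proposition~\ref{prop:enforced_bound}.
\item The decomposition of $\ybar^k$ now includes $s^k$. By Lemma~\ref{lem:opt_leaky}, $\|s^k\|^2 \le (1-\beta)^2\zeta^2/\beta^2$.
\item The Byzantine piece satisfies $\Fnorm{(I_\cH-J_\cH)\mathbf{E}_{y,\cH}^k}^2 \le \Fnorm{\mathbf{E}_{y,\cH}^k}^2 \le n_\cH\zeta^2$, deterministically.
\end{itemize}

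After multiplying by $\frac{1+\bar{\sigma}_p^2}{1-\bar{\sigma}_p^2}$, these $\zeta^2$ contributions form $\epsilon_{T,\text{Byz}} = \mathcal{O}\big(\frac{n_\cH\zeta^2}{1-\bar{\sigma}_p^2}\big)$. The $\alpha^2 n_\cH \zeta^2/\beta^2$ part coming from $s^k$ is of lower order under the step-size regime, or is simply kept inside the $\mathcal{O}$. The $\sigma^2$ terms reproduce $\epsilon_{T,\text{stoch}}$.

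The main obstacle is the leak residual $\beta(I_\cH-J_\cH)\mathbf{G}_\cH^k$. It is not a difference of gradients, so it does not vanish at consensus: it contains the heterogeneity $\nabla f_i(x^*)$ and contributes roughly $\beta^2\zeta_f^2$ plus $\beta^2 L^2(C^k + n_\cH\|u^k\|^2)$ plus $\beta^2 n_\cH\sigma^2$. My first attempt is to absorb the state-dependent parts into the existing $C^k$ and $\|u^k\|^2$ coefficients, adjusting constants since $\beta<1$. The remaining $\beta^2\zeta_f^2$ and $\beta^2\sigma^2$ constants I would fold into $\epsilon_{T,\text{stoch}}$ and the additive floor. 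If the constants in the statement must match exactly, I would instead rewrite the recursion in terms of $\mathbf{Y}_\perp - (I_\cH-J_\cH)\mathbf{G}_\cH$, noting that the leak then multiplies only that shifted quantity.

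The Byzantine terms themselves are routine because of the deterministic bound in Proposition~\ref{prop:enforced_bound}. Honest clipping, via Lemma~\ref{lem:honest_clip}, adds at most $4T^k$ and $4C^k$ multiples, which only change constants, as anticipated in Remark~\ref{rem:honest_clip_partial}.
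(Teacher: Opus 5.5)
Your route is the paper's: the same split $\mathbf{G}_\cH^{k+1}-(1-\beta)\mathbf{G}_\cH^k=(\mathbf{G}_\cH^{k+1}-\mathbf{G}_\cH^k)+\beta\mathbf{G}_\cH^k$, the same Young parameter $\eta$, the same heterogeneity bound on the leak residual, and the same Byzantine term. (The sign in your display should be $+\beta$; this is harmless.) You are more careful than the paper in one respect. Under GT-PD-L the $x$-increment contains $\mathbf{E}_{x,\cH}^k$ and $\ybar^k$ contains $s^k$, whereas the paper reuses the $p_b=0$ bounds verbatim. Keep the resulting term of order $\alpha^2L^2n_\cH\zeta^2/(\beta^2(1-\bar{\sigma}_p^2))$ explicit, though. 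It is $\mathcal{O}(n_\cH\zeta^2/(1-\bar{\sigma}_p^2))$ only when $\beta\gtrsim\alpha L$, and the stated step-size conditions do not involve $\beta$.

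The genuine gap, which the paper's proof shares, is the step ``absorb the state-dependent parts into the existing coefficients, adjusting constants since $\beta<1$.''
\begin{itemize}
\item For $C^k$ this is harmless: $8\beta^2L^2$ only enlarges the absolute constant $30$.
\item For $\E[\|u^k\|^2]$ it fails. The leak adds a coefficient of order $\beta^2n_\cH L^2/(1-\bar{\sigma}_p^2)$, while the stated one is of order $\alpha^2n_\cH L^4/(1-\bar{\sigma}_p^2)$. Since $\alpha L\le\frac12$ and $\beta$ is not tied to $\alpha$, no choice of constants makes the latter dominate.
\end{itemize}
The term is real, not slack in the estimate. Take $f_i(x)=\frac{a_i}{2}\|x\|^2$ with $a_i\in\{0,L\}$ split evenly (so $\zeta_f=0$), $\sigma=\zeta=0$, and consensual $\mathbf{X}_\cH^k,\mathbf{Y}_\cH^k$ with $s^k=0$. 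One step gives $T^{k+1}=\frac{n_\cH L^2}{4}(\beta-\frac{\alpha L}{2})^2\|u^k\|^2$, which exceeds the stated right-hand side once $\beta\gg\alpha L/\sqrt{1-\bar{\sigma}_p^2}$.

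Likewise, $\epsilon_{T,\text{stoch}}$ cannot match Lemma~\ref{lem:tracking_stochastic}. Start from $x_i^k\equiv x^*$, $y_i^k\equiv 0$ with $\sigma=\zeta=0$. The leaky update gives $y_i^{k+1}=\beta\nabla f_i(x^*)$, so $T^{k+1}=\beta^2\zeta_f^2>0$, while the stated bound is $0$.

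What you and the paper actually prove is therefore a modified recursion:
\begin{itemize}
\item the $\|u^k\|^2$-coefficient becomes $\mathcal{O}((\alpha^2L^2+\beta^2)n_\cH L^2/(1-\bar{\sigma}_p^2))$;
\item the floor gains $\beta^2(\zeta_f^2+n_\cH\sigma^2)/(1-\bar{\sigma}_p^2)$.
\end{itemize}

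The paper's remedy, absorption into $C_2'$, also fails with the weight $c'=\frac{\alpha}{6(1-\bar{\sigma}_p^2)}$ fixed in Theorem~\ref{thm:leaky}. There $c'$ times the new term is of order $\alpha\beta^2n_\cH L^2/(1-\bar{\sigma}_p^2)^2$, linear in $\alpha$ like the margin $\alpha\mu/4$. It therefore needs a condition on $\beta$, roughly $\beta^2\lesssim\mu(1-\bar{\sigma}_p^2)^2/(n_\cH L^2)$, or a re-weighted Lyapunov function.

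Your fallback does not rescue the stated form. The variable $\mathbf{Z}^k=\mathbf{Y}_\cH^k-\mathbf{G}_\cH^k$ satisfies $\mathbf{Z}^{k+1}=(1-\beta)[\hat{W}^k\mathbf{Z}^k+(\hat{W}^k-I_\cH)\mathbf{G}_\cH^k+\mathbf{E}_{y,\cH}^k]$. Its forcing does not vanish at consensus, and returning to $T^k$ reintroduces $\Fnorm{(I_\cH-J_\cH)\mathbf{G}_\cH^k}^2$.

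Finally, your honest-clipping aside is wrong. A $4T^k$ term on the $(1+\frac{1}{\eta})$ side is multiplied by $\frac{1+\bar{\sigma}_p^2}{1-\bar{\sigma}_p^2}\ge 1$. That pushes the $T^k$ coefficient above $1$, destroying the contraction rather than merely changing constants.
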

\begin{proof}
The leaky tracking disagreement evolves as
\begin{equation}
\mathbf{Y}_{\perp,\cH}^{k+1} = (1-\beta)(\hat{W}^k - J_\cH)\mathbf{Y}_{\perp,\cH}^k + (I_\cH - J_\cH)(\mathbf{G}_\cH^{k+1} - (1-\beta)\mathbf{G}_\cH^k) + (1-\beta)\mathbf{E}_{y,\perp,\cH}^k.
\end{equation}
The factor $(1-\beta) < 1$ multiplying $(\hat{W}^k - J_\cH)\mathbf{Y}_{\perp,\cH}^k$ strictly reduces the consensus mixing term compared to the standard update, since $(1-\beta)^2\bar{\sigma}_p^2 < \bar{\sigma}_p^2$. The gradient difference term $(I_\cH - J_\cH)(\mathbf{G}_\cH^{k+1} - (1-\beta)\mathbf{G}_\cH^k)$ can be decomposed as $(I_\cH - J_\cH)(\mathbf{G}_\cH^{k+1} - \mathbf{G}_\cH^k) + \beta(I_\cH - J_\cH)\mathbf{G}_\cH^k$. The first part is bounded exactly as in Lemma~\ref{lem:tracking_stochastic}. The second part satisfies $\Fnorm{\beta(I_\cH - J_\cH)\mathbf{G}_\cH^k}^2 \leq \beta^2 \Fnorm{\mathbf{G}_\cH^k}^2$. To bound this, we decompose each stochastic gradient as $g_i^k = \nabla f_i(x_i^k) + \epsilon_i^k$ and apply $\|a+b\|^2 \leq 2\|a\|^2 + 2\|b\|^2$:
\begin{equation}
\Fnorm{\mathbf{G}_\cH^k}^2 = \sum_{i \in \cH} \|g_i^k\|^2 \leq 2\sum_{i \in \cH}\|\nabla f_i(x_i^k)\|^2 + 2\sum_{i \in \cH}\|\epsilon_i^k\|^2.
\end{equation}
By $L$-smoothness and $\|a+b\|^2 \leq 2\|a\|^2 + 2\|b\|^2$, we have $\|\nabla f_i(x_i^k)\|^2 \leq 2L^2\|x_i^k - x^*\|^2 + 2\|\nabla f_i(x^*)\|^2$. Summing over $i \in \cH$ and using $\sum_i \|x_i^k - x^*\|^2 \leq 2n_\cH\|u^k\|^2 + 2C^k$ yields
\begin{equation}
\sum_{i \in \cH}\|\nabla f_i(x_i^k)\|^2 \leq 4n_\cH L^2 \|u^k\|^2 + 4L^2 C^k + 2\sum_{i \in \cH}\|\nabla f_i(x^*)\|^2.
\end{equation}
Taking expectations and multiplying by $\beta^2$, while substituting the definition of the heterogeneity constant $\zeta_f^2 = \sum_{i \in \cH}\|\nabla f_i(x^*)\|^2$, produces:
\begin{equation}
\beta^2\E[\Fnorm{\mathbf{G}_\cH^k}^2] \leq 8\beta^2 n_\cH L^2 \E[\|u^k\|^2] + 8\beta^2 L^2 \E[C^k] + 4\beta^2\zeta_f^2 + 2\beta^2 n_\cH \sigma^2.
\end{equation}

The coefficient of $\E[C^k]$ from this term, $8\beta^2 L^2$, is at most $2L^2$ for $\beta \leq 1/2$, which is dominated by the coefficient $30L^2$ in the gradient difference bound of Lemma~\ref{lem:tracking_stochastic}. The coefficient of $\E[\|u^k\|^2]$, $8\beta^2 n_\cH L^2$, adds to the existing $24\alpha^2 n_\cH L^4$ from Lemma~\ref{lem:tracking_stochastic}. This combined coefficient is absorbed by the universal constant $C_2'$ in the step size condition~\eqref{eq:stepsize_exact}, which may be chosen smaller than $C_2$ to accommodate the additional term. The constant terms $4\beta^2\zeta_f^2 + 2\beta^2 n_\cH\sigma^2$ are absorbed into $\epsilon_{T,\text{stoch}}$ with adjusted constants. The Byzantine perturbation $\Fnorm{(1-\beta)\mathbf{E}_{y,\perp,\cH}^k}^2 \leq n_\cH\zeta^2$ contributes the additive $\epsilon_{T,\text{Byz}}$ term through the same Young's inequality structure used in Lemma~\ref{lem:tracking_stochastic}.
\end{proof}

\begin{lemma}[Honest clipping error bound]
\label{lem:honest_clip}
Under universal self-centered projection with radius $\tau > 0$,
define the honest clipping error matrices with $i$-th rows
$[\mathbf{E}_{x,\textup{clip}}^k]_i
= \sum_{j \in \cH} [\hat{W}^k]_{ij}(\cP_{\tau,i}(x_j^k) - x_j^k)$
and
$[\mathbf{E}_{y,\textup{clip}}^k]_i
= \sum_{j \in \cH} [\hat{W}^k]_{ij}(\cP_{\tau,i}(y_j^k) - y_j^k)$.
Then:
\begin{equation}
\Fnorm{\mathbf{E}_{x,\textup{clip}}^k}^2 \leq 4C^k, \qquad
\Fnorm{\mathbf{E}_{y,\textup{clip}}^k}^2 \leq 4T^k,
\end{equation}
and the averages satisfy
$\|\bar{e}_{x,\textup{clip}}^k\|^2 \leq 4C^k/n_\cH$ and
$\|\bar{e}_{y,\textup{clip}}^k\|^2 \leq 4T^k/n_\cH$.
\end{lemma}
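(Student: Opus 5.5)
The plan is to reduce everything to a per-message inequality and then reuse the double-stochastic argument of Lemma~\ref{lem:deterministic_mixing}. First, fix an honest receiver $i$ and an honest sender $j$. By Definition~\ref{def:clip}, $\cP_{\tau,i}(x_j^k) = x_i^k + t_{ij}(x_j^k - x_i^k)$ with $t_{ij} = \min(1, \tau/\|x_j^k - x_i^k\|) \in [0,1]$; the convention for $x_j^k = x_i^k$ gives $t_{ij}$ irrelevant. Hence
\[
\cP_{\tau,i}(x_j^k) - x_j^k = -(1-t_{ij})(x_j^k - x_i^k), \qquad \|\cP_{\tau,i}(x_j^k) - x_j^k\| \leq \|x_j^k - x_i^k\|.
\]
This is the key geometric fact: the projection moves a message toward $x_i^k$ along the segment, so the clipping error never exceeds the original honest pairwise disagreement. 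In particular, the $j=i$ term vanishes.

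Next I bound each row. Since $\hat{W}^k$ is doubly stochastic with nonnegative entries (Definition~\ref{def:effective_W}, Proposition~\ref{prop:ds}), the weights $\{[\hat{W}^k]_{ij}\}_{j\in\cH}$ form a probability vector. Applying Jensen's inequality to $\|\cdot\|^2$ gives
\[
\|[\mathbf{E}_{x,\textup{clip}}^k]_i\|^2 \leq \sum_{j\in\cH}[\hat{W}^k]_{ij}\|\cP_{\tau,i}(x_j^k)-x_j^k\|^2 \leq \sum_{j\in\cH}[\hat{W}^k]_{ij}\|x_j^k - x_i^k\|^2.
\]
Summing over $i\in\cH$ gives exactly the double sum handled in the proof of Lemma~\ref{lem:deterministic_mixing}. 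I split $\|x_j^k-x_i^k\|^2 \leq 2\|x_j^k-\xbar^k\|^2 + 2\|x_i^k-\xbar^k\|^2$ and use column and row stochasticity, which yields the bound $4\sum_i\|x_i^k-\xbar^k\|^2 = 4C^k$. The $y$ case is identical with $\ybar^k$, giving $4T^k$.

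Finally, for the averages I would write $\bar{e}_{x,\textup{clip}}^k = \frac{1}{n_\cH}\sum_i [\mathbf{E}_{x,\textup{clip}}^k]_i$. By Jensen (or Cauchy--Schwarz), $\|\bar{e}_{x,\textup{clip}}^k\|^2 \leq \frac{1}{n_\cH}\Fnorm{\mathbf{E}_{x,\textup{clip}}^k}^2 \leq 4C^k/n_\cH$, and similarly $\|\bar{e}_{y,\textup{clip}}^k\|^2 \leq 4T^k/n_\cH$. There is no real obstacle here. The only point needing care is the per-message inequality, namely noting that the clipped point lies on the segment $[x_i^k, x_j^k]$, so its distance to $x_j^k$ is at most $\|x_j^k-x_i^k\|$. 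Everything is deterministic and holds for every dropout realization.
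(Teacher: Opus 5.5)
Your proposal is correct and follows the paper's proof essentially step for step: the per-message bound $\|\cP_{\tau,i}(x_j^k)-x_j^k\|\le\|x_j^k-x_i^k\|$, Jensen's inequality with the row weights of $\hat{W}^k$, the doubly stochastic summation from Lemma~\ref{lem:deterministic_mixing} giving $4C^k$ (and $4T^k$), and Cauchy--Schwarz for the averages. Your identity $\cP_{\tau,i}(x_j^k)-x_j^k=-(1-t_{ij})(x_j^k-x_i^k)$ is a more explicit form of the paper's case split, in which the message is either unchanged or moved to the boundary with error $\|x_j^k-x_i^k\|-\tau$.
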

\begin{proof}
By Definition~\ref{def:clip}, for any honest pair $i, j \in \cH$:
$\|\cP_{\tau,i}(x_j^k) - x_j^k\| \leq \|x_j^k - x_i^k\|$,
since the projection either leaves the message unchanged (error
zero) or maps it to the boundary of $B(x_i^k, \tau)$ (error
$\|x_j^k - x_i^k\| - \tau < \|x_j^k - x_i^k\|$). Applying
Jensen's inequality with the doubly stochastic weights:
\begin{equation}
\|[\mathbf{E}_{x,\textup{clip}}^k]_i\|^2
\leq \sum_{j \in \cH} [\hat{W}^k]_{ij}\|x_j^k - x_i^k\|^2.
\end{equation}
Summing over $i \in \cH$ and applying the doubly stochastic argument
of Lemma~\ref{lem:deterministic_mixing} yields
$\Fnorm{\mathbf{E}_{x,\textup{clip}}^k}^2 \leq 4C^k$.
The proof for $\mathbf{E}_{y,\textup{clip}}^k$ is identical.
The average bounds follow from Cauchy--Schwarz:
$\|\bar{e}_{x,\textup{clip}}^k\|^2
\leq (1/n_\cH)\Fnorm{\mathbf{E}_{x,\textup{clip}}^k}^2
\leq 4C^k/n_\cH$.
\end{proof}

\subsection{Main Result: Convergence of GT-PD-L}

\begin{theorem}[Convergence of GT-PD-L under universal projection]
\label{thm:leaky}
Suppose Assumptions~\ref{as:smooth}--\ref{as:connectivity} hold with
partial isolation $p_b > 0$, and the network employs the GT-PD-L
algorithm~\eqref{eq:y_leaky} with leak parameter
$\beta \in (0, 1/2]$ and universal self-centered projection with
radius $\tau > 0$. Define the honest clipping coupling parameter

\begin{equation}
\label{eq:Phi_def}
\Phi \triangleq
\frac{576\,(1-\beta)^2\,(1-\bar{\sigma}_p^2)}
     {\mu\,\beta^2\,n_\cH}.
\end{equation}

There exist universal constants $C_1', C_2' > 0$ such that if the step size satisfies~\eqref{eq:stepsize_exact} (with $C_1', C_2'$ in place of $C_1, C_2$) and the coupling condition

\begin{equation}
\label{eq:Phi_condition}
\Phi < 1 - \rho
\end{equation}

holds, where $\rho = \max\!\left\{1 - \frac{\alpha\mu}{4},\; \frac{3+\bar{\sigma}_p^2}{4}\right\}$, then with Lyapunov coefficients $a' = \frac{20L^2}{\mu(1-\bar{\sigma}_p^2)n_\cH}$ and $c' = \frac{\alpha}{6(1-\bar{\sigma}_p^2)}$, the Lyapunov function $V^k = \|u^k\|^2 + a'\,C^k + c'\,T^k$ satisfies

\begin{equation}
\label{eq:leaky_lyap}
\E[V^{k+1}] \leq (\rho + \Phi)\,\E[V^k]
  + \epsilon_{\textup{stoch}} + \epsilon_{\textup{Byz}},
\end{equation}

where
\begin{equation}
\epsilon_{\textup{stoch}} = \mathcal{O}\!\left(
\frac{\alpha^2\sigma^2}{n_\cH}
+ \frac{\alpha^2 n_\cH \sigma^2}{(1-\bar{\sigma}_p^2)^2}
+ \frac{\alpha\beta^2(\zeta_f^2 + n_\cH\sigma^2)}
       {(1-\bar{\sigma}_p^2)}\right),
\end{equation}

and

\begin{equation}
\epsilon_{\textup{Byz}} = \mathcal{O}\!\left(
\frac{n_\cH \zeta^2}{1-\bar{\sigma}_p^2}
+ \frac{\zeta^2}{\alpha\mu}
+ \frac{\alpha(1-\beta)^2\zeta^2}{\mu\beta^2}\right).
\end{equation}

Consequently, with effective contraction rate
$\rho_{\textup{eff}} \triangleq \rho + \Phi < 1$:
\begin{equation}
\label{eq:leaky_steady}
\limsup_{k \to \infty} \E[V^k]
  \leq \frac{\epsilon_{\textup{stoch}}
  + \epsilon_{\textup{Byz}}}{1-\rho_{\textup{eff}}}
  = \frac{\epsilon_{\textup{stoch}}
  + \epsilon_{\textup{Byz}}}{1-\rho - \Phi}.
\end{equation}
\end{theorem}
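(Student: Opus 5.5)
The plan mirrors the proof of Theorem~\ref{thm:exact}: I would form the same linear combination $V^{k+1} = \|u^{k+1}\|^2 + a' C^{k+1} + c' T^{k+1}$ with $a' = \frac{20L^2}{\mu(1-\bar{\sigma}_p^2)n_\cH}$ and $c' = \frac{\alpha}{6(1-\bar{\sigma}_p^2)}$, and bound each of the three pieces by a one-step inequality. The inputs are Lemma~\ref{lem:opt_leaky} for $\|u^{k+1}\|^2$, a perturbed version of Lemma~\ref{lem:consensus_exact} for $C^{k+1}$, and Lemma~\ref{lem:tracking_leaky} for $T^{k+1}$. Each is augmented by the honest clipping terms $\mathbf{E}_{x,\textup{clip}}^k,\mathbf{E}_{y,\textup{clip}}^k$, which I control through Lemma~\ref{lem:honest_clip}.

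For the consensus recursion I would write
\[(I_\cH-J_\cH)\mathbf{X}_\cH^{k+1} = (\hat{W}^k-J_\cH)\mathbf{X}_\perp^k - \alpha\mathbf{Y}_\perp^k + (I_\cH-J_\cH)(\mathbf{E}_{x,\cH}^k+\mathbf{E}_{x,\textup{clip}}^k)\]
and apply Young's inequality with parameter $\gamma = \frac{1-\bar{\sigma}_p^2}{2\bar{\sigma}_p^2}$. This splits off the mixing term, bounded via Lemma~\ref{lem:second_moment}(b), from the remainder. The remainder contributes $O(\frac{1}{1-\bar{\sigma}_p^2})(\alpha^2 T^k + n_\cH\zeta^2 + 4C^k)$. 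The same treatment applies to the tracker, where the leak factor $(1-\beta)$ only helps.

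In the optimality bound, the average clipping errors enter $A_2$ and give $\frac{9}{\alpha\mu}\cdot\frac{4C^k}{n_\cH}$ plus $\frac{9\alpha}{\mu}\cdot\frac{4T^k}{n_\cH}$ terms. The clipping error of $y$ also enters the drift of $s^k$, through $s^{k+1} = (1-\beta)(s^k+\bar e_y^k+\bar e_{y,\textup{clip}}^k)$. Iterating this drift bounds $\|s^k\|^2$ by $\frac{2(1-\beta)^2}{\beta^2}(\zeta^2 + \sup_{l<k}\frac{4T^l}{n_\cH})$. Inserting it into the $\frac{9\alpha}{\mu}\|s^k\|^2$ term produces a contribution $\frac{72\alpha(1-\beta)^2}{\mu\beta^2 n_\cH}T$. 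Dividing by $c'$, this is exactly $\Phi\cdot c'T$ up to the constant $576 = 8\cdot 72$ after absorbing factors. This identifies the origin of $\Phi$, and the $\zeta^2$ part gives the $\frac{\alpha(1-\beta)^2\zeta^2}{\mu\beta^2}$ entry of $\epsilon_{\textup{Byz}}$.

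I would then repeat the coefficient checks of Theorem~\ref{thm:exact}. The $\|u^k\|^2$ coefficient stays at most $1-\frac{\alpha\mu}{4}$ once $C_2'$ is shrunk to absorb $8\beta^2 n_\cH L^2 c'$. The $C^k$ and $T^k$ coefficients stay at most $\rho a'$ and $\rho c'$ plus the clipping excess. Every excess that is not absorbed by shrinking $C_1',C_2'$ is collected into a multiplicative $\Phi V^k$. The additive terms combine as follows. The $\sigma^2$ terms, together with $\beta^2(\zeta_f^2+n_\cH\sigma^2)$ multiplied by $c'(1+\frac1\eta)$, form $\epsilon_{\textup{stoch}}$. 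The $\zeta^2$ terms multiplied by $a'$, $c'$ and $\frac{9}{\alpha\mu}$ form $\epsilon_{\textup{Byz}}$. This gives \eqref{eq:leaky_lyap}, and unrolling the recursion under \eqref{eq:Phi_condition} gives \eqref{eq:leaky_steady}.

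The main obstacle is the clipping term in the consensus recursion. It has coefficient $O(\frac{1}{1-\bar{\sigma}_p^2})C^k$, a constant of order one that a small step size cannot absorb. My first attempt would use a sharper estimate: honest clipping is active only when $\|x_j-x_i\|>\tau$, and the error it induces is $(\|x_j-x_i\|-\tau)_+$. This is cancelled by part of the mixing contraction, since the clipped mixing step is still a convex combination that lies inside the $\tau$-ball. That lets me bound the combined quantity $\sum_j[\hat{W}^k]_{ij}\mathcal{P}_{\tau,i}(x_j)-\bar x$ directly by the unclipped mixing contraction, rather than treating clipping as an additive error. If this cancellation fails, the fallback is to route the residual into $\Phi$ as well, at the cost of a stronger coupling condition. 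The other delicate point is the $s^k$ recursion, which needs a uniform bound $\sup_l T^l$ rather than a bound at the current time. I would handle this by augmenting the Lyapunov function with a weighted $\|s^k\|^2$ term, or by using a discounted-sum induction argument.
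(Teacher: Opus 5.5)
Your plan follows the paper's own route almost line for line: the same weights $a',c'$, the same $s^k$ drift through $\bar e_{y,\mathrm{clip}}^k$ and $\sup_{l<k}T^l$ as the source of $\Phi$, and the same bookkeeping of $\epsilon_{\textup{stoch}}$ and $\epsilon_{\textup{Byz}}$. The obstacle you flag is a genuine gap, and neither of your remedies closes it.

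\textbf{Why your first remedy fails.} Write the clipped honest mixing as $\sum_{j}[\hat{W}^k]_{ij}\cP_{\tau,i}(x_j^k)=\sum_j[\tilde{W}^k]_{ij}x_j^k$. Here the off-diagonal weights are $[\tilde{W}^k]_{ij}=[\hat{W}^k]_{ij}m_{ij}^k$ with $m_{ij}^k=\min\{1,\tau/\|x_j^k-x_i^k\|\}$, and the diagonal absorbs the rest. This matrix is symmetric, nonnegative and doubly stochastic, so it is nonexpansive on $\ones^\perp$. That is all your ``convex combination inside the $\tau$-ball'' observation yields: the ball bounds how far agent $i$ moves, not how much disagreement is removed. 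The decrease $\|\mathbf{X}_\perp^k\|_F^2-\|\tilde{W}^k\mathbf{X}_\perp^k\|_F^2$ is driven by $\sum_{i<j}[\tilde{W}^k]_{ij}\|x_i^k-x_j^k\|^2$. On a clipped pair this term is only $[\hat{W}^k]_{ij}\tau\|x_i^k-x_j^k\|$ rather than $[\hat{W}^k]_{ij}\|x_i^k-x_j^k\|^2$. Hence no bound $\E\|(\tilde{W}^k-J_\cH)\mathbf{X}_\perp^k\|_F^2\le\bar{\sigma}_p^2C^k$ is available, and $\tilde{W}^k\to I$ as $\tau/\max_{i,j}\|x_j^k-x_i^k\|\to 0$.

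\textbf{Why the fallback fails.} In the worst case nonexpansiveness is all you have. Your Young split even gives $a'\bigl[\tfrac{1+\bar{\sigma}_p^2}{2}+O(1/(1-\bar{\sigma}_p^2))\bigr]$. Either way the $C^k$ coefficient is at least $a'$, so moving the excess into $\Phi$ forces $\rho+\Phi\ge 1$, and the coupling condition can never hold. The $\frac{36}{\alpha\mu n_\cH}C^k$ term you get from $\bar e_{x,\mathrm{clip}}^k$ has the same problem: it grows as $\alpha$ shrinks, so shrinking $C_1',C_2'$ makes it worse.

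\textbf{The paper does not fill this gap.} It asserts the consensus-side absorption ``by the same argument as Theorem~\ref{thm:exact}''. It absorbs its $48/(\alpha\mu n_\cH)$ term via $48/(\alpha\mu n_\cH)\le 96L/(\mu n_\cH)$, but $\alpha\le 1/(2L)$ gives the reverse inequality.

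\textbf{What does go through.} First, $[\hat{W}^k]_{ij}$ and $m_{ij}^k$ are symmetric in $(i,j)$, while $\cP_{\tau,i}(x_j^k)-x_j^k=(1-m_{ij}^k)(x_i^k-x_j^k)$ is antisymmetric. So the honest clipping errors cancel in the average: $\bar e_{x,\mathrm{clip}}^k=\bar e_{y,\mathrm{clip}}^k=0$ exactly. This deletes the $1/(\alpha\mu n_\cH)$ term from the optimality bound and the clipping input to the $s^k$ drift, so $\|s^k\|\le(1-\beta)\zeta/\beta$ as in Lemma~\ref{lem:opt_leaky}. The $\sup_l T^l$ coupling, and with it $\Phi$, are then unnecessary. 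Your augmented-Lyapunov or discounted-sum device is more careful than the paper's Part~4, which substitutes $V_\infty/c'$ for $\sup_{l\le k}\E[T^l]$ and never actually derives \eqref{eq:leaky_lyap}, but it becomes moot.

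Second, in the tracker recursion the leak does rescue you, provided you use $\tilde{W}_y^k$ rather than an additive error bound. Then $\|(1-\beta)(\tilde{W}_y^k-J_\cH)\mathbf{Y}_\perp^k\|_F^2\le(1-\beta)^2T^k$, at the cost of $\beta$ replacing the spectral margin $1-\bar{\sigma}_p^2$ in the $T$-recursion.

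\textbf{What is still missing.} The $x$-update has no such damping, so the consensus recursion needs a genuinely new ingredient. One option is a lower bound on the clipping factors $m_{ij}^k$ along the trajectory, giving an effective second-moment factor for $\tilde{W}^k$. The other is an honest non-clipping guarantee like Proposition~\ref{prop:honest_nonclip} for $p_b>0$, which the paper itself notes is circular because $V_{\max}$ then depends on $\tau$. Without one of these, \eqref{eq:leaky_lyap} with the stated $\rho$ and $\Phi$ does not follow.
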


\begin{proof}
The proof proceeds in four parts.

\textbf{Part 1: Tracking invariant bound.}
Under the GT-PD-L update, the tracking invariant violation $s^k = \ybar^k - \bar{g}^k$ evolves as $s^{k+1} = (1-\beta)s^k + (1-\beta)(\bar{e}_y^k + \bar{e}_{y,\textup{clip}}^k)$, where $\|\bar{e}_y^k\| \leq \zeta$ and $\|\bar{e}_{y,\textup{clip}}^k\|^2 \leq 4T^k/n_\cH$ (Lemma~\ref{lem:honest_clip}). Taking norms, iterating from $s^0 = 0$, and applying $(a+b)^2 \leq 2a^2 + 2b^2$:
\begin{equation}
\|s^k\|^2 \leq \frac{2(1-\beta)^2\zeta^2}{\beta^2}
  + \frac{8(1-\beta)^2}{\beta^2 n_\cH}\,\sup_{l < k} T^l.
  \label{eq:s_bound}
\end{equation}

\textbf{Part 2: One-step bounds with honest clipping.}
The optimality bound (Lemma~\ref{lem:opt_leaky} structure)
becomes
\begin{align}
\E[\|u^{k+1}\|^2]
&\leq \left(1 - \frac{\alpha\mu}{2}\right)\E[\|u^k\|^2]
  + \left(\frac{12\alpha L^2}{\mu n_\cH}
  + \frac{48}{\alpha\mu n_\cH}\right)\E[C^k]
  + \frac{12\alpha}{\mu}\E[\|s^k\|^2]
  + \frac{12\zeta^2}{\alpha\mu}
  + \frac{\alpha^2\sigma^2}{n_\cH},
  \label{eq:opt_clip_final}
\end{align}
where the $48/(\alpha\mu n_\cH)$ term on $\E[C^k]$ arises from
the honest clipping average
$\|\bar{e}_{x,\textup{clip}}^k\|^2 \leq 4C^k/n_\cH$ through the
Young's inequality structure of Lemma~\ref{lem:opt_leaky}
(the $(3/\alpha\mu) \cdot 4 \cdot 4C^k/n_\cH = 48C^k/(\alpha\mu n_\cH)$
path). Substituting~\eqref{eq:s_bound}:
\begin{align}
\frac{12\alpha}{\mu}\E[\|s^k\|^2]
&\leq \underbrace{\frac{24\alpha(1-\beta)^2\zeta^2}
  {\mu\beta^2}}_{\text{constant} \to \epsilon_{\text{Byz}}}
  + \underbrace{\frac{96\alpha(1-\beta)^2}
  {\mu\beta^2 n_\cH}}_{\text{coefficient of } \sup T^l}
  \sup_{l < k}\E[T^l].
  \label{eq:s_substitution}
\end{align}
The consensus bound carries the additional $\mathcal{O}(C^k)$ from
$\Fnorm{\mathbf{E}_{x,\textup{clip}}^k}^2 \leq 4C^k$
(absorbed into the $C^k$ coefficient by the same argument as
Theorem~\ref{thm:exact}). The leaky tracking bound carries the
additional $\mathcal{O}(T^k)$ from
$\Fnorm{(1-\beta)\mathbf{E}_{y,\textup{clip}}^k}^2 \leq 4T^k$
(absorbed similarly).

\textbf{Part 3: Lyapunov contraction with coupling.}
The Lyapunov combination
$\E[V^{k+1}] = \E[\|u^{k+1}\|^2] + a'\E[C^{k+1}] + c'\E[T^{k+1}]$
with the same weights as Theorem~\ref{thm:exact} satisfies, by
the same coefficient verification (with the honest clipping additions
to the $C^k$ and $T^k$ coefficients absorbed by the universal
constants $C_1', C_2'$):
\begin{equation}
\E[V^{k+1}] \leq \rho\,\E[V^k]
  + \epsilon_{\text{stoch}} + \epsilon_{\text{Byz}}'
  + \frac{96\alpha(1-\beta)^2}{\mu\beta^2 n_\cH}
  \sup_{l \leq k}\E[T^l],
  \label{eq:lyap_with_coupling}
\end{equation}
where $\epsilon_{\text{Byz}}'$ absorbs the constant
$24\alpha(1-\beta)^2\zeta^2/(\mu\beta^2)$ from the $\|s^k\|^2$
term, and the $C^k$ and $T^k$ honest clipping additions are absorbed
into the contraction (verified below).

\emph{Absorption of honest clipping in $C^k$ coefficient.}
The honest clipping adds $48/(\alpha\mu n_\cH)$ to the
$C^k$ coefficient. With $\alpha \leq 1/(2L)$, this equals
at most $96L/(\mu n_\cH)$. The available margin is
$a'(1-\bar{\sigma}_p^2)/4 = 5L^2/(\mu n_\cH)$. For
$L \geq 20$ (which holds for any nontrivial smooth optimization),
$96L/(\mu n_\cH) \leq 5L^2/(\mu n_\cH)$. More generally,
this is absorbed by reducing $C_2'$ by a constant factor
relative to $C_2$ in Theorem~\ref{thm:exact}.

\emph{Absorption of honest clipping in $T^k$ coefficient.}
The honest $y$-clipping adds $\mathcal{O}(1) \cdot c'$ to the $T^k$
coefficient in the tracking bound (from
$\Fnorm{\mathbf{E}_{y,\textup{clip}}^k}^2 \leq 4T^k$ through
Young's inequality). The margin is
$c'(1-\bar{\sigma}_p^2)/4 = \alpha/24$, and the addition is
$\mathcal{O}(\alpha/(1-\bar{\sigma}_p^2))$, which is at most
$\mathcal{O}(\alpha)$ since $1-\bar{\sigma}_p^2 \leq 1$. This is
absorbed by reducing $C_1'$ by a constant factor.

\emph{The remaining coupling term.}
The term $\frac{96\alpha(1-\beta)^2}{\mu\beta^2 n_\cH}
\sup_{l \leq k}\E[T^l]$ in~\eqref{eq:lyap_with_coupling}
cannot be absorbed into the contraction for arbitrary problem
parameters. We bound $\sup_{l \leq k}\E[T^l]
\leq \sup_{l \leq k}\E[V^l]/c'$ and handle this term through
the coupling parameter $\Phi$.

\textbf{Part 4: Steady-state evaluation via coupling parameter.}
At steady state, let
$V_\infty = \limsup_{k \to \infty}\E[V^k]$. From the
recursion~\eqref{eq:lyap_with_coupling}, using
$\sup_{l \leq k}\E[T^l] \leq V_\infty/c'$ at steady state:
\begin{equation}
V_\infty \leq \rho\,V_\infty
  + \epsilon_{\text{stoch}} + \epsilon_{\text{Byz}}
  + \frac{96\alpha(1-\beta)^2}{\mu\beta^2 n_\cH c'}\,V_\infty.
\end{equation}
Substituting $c' = \frac{\alpha}{6(1-\bar{\sigma}_p^2)}$:
\begin{equation}
\frac{96\alpha(1-\beta)^2}{\mu\beta^2 n_\cH}
\cdot \frac{6(1-\bar{\sigma}_p^2)}{\alpha}
= \frac{576\,(1-\beta)^2\,(1-\bar{\sigma}_p^2)}
{\mu\,\beta^2\,n_\cH} = \Phi.
\end{equation}
Rearranging:
\begin{equation}
V_\infty(1 - \rho - \Phi) \leq
  \epsilon_{\text{stoch}} + \epsilon_{\text{Byz}}.
\end{equation}
Under the coupling condition $\Phi < 1 - \rho$
(equivalently $\rho_{\text{eff}} = \rho + \Phi < 1$):
\begin{equation}
V_\infty \leq
\frac{\epsilon_{\text{stoch}} + \epsilon_{\text{Byz}}}
     {1 - \rho - \Phi}
= \frac{\epsilon_{\text{stoch}} + \epsilon_{\text{Byz}}}
     {1 - \rho_{\text{eff}}}.
\end{equation}
This completes the proof.
\end{proof}

\begin{remark}[Interpretation of the coupling condition]
\label{rem:coupling}
The coupling parameter $\Phi$ defined
in~\eqref{eq:Phi_def} quantifies the strength of the feedback
loop created by honest clipping: the tracking disagreement
$T^k$ influences the honest clipping error in the gradient
tracker average, which perturbs the tracking invariant $s^k$,
which in turn affects the optimality gap and ultimately the
Lyapunov function that controls $T^k$. The condition
$\Phi < 1 - \rho$ ensures this feedback loop is contractive.

The coupling parameter satisfies $\Phi = 0$ in two limiting
cases: (i) when $\beta = 1$ (the leak completely erases the
tracking history, eliminating $s^k$), and (ii) when
$\mu\,n_\cH \to \infty$ (the optimization landscape is
strongly convex with many honest agents, making the
$\|s^k\|^2$ contribution negligible relative to the
contraction). In practice, $\Phi$ is small whenever the
product $\mu\,\beta^2\,n_\cH$ is large relative to
$576\,(1-\bar{\sigma}_p^2)$, which holds for networks with
moderately many honest agents, moderate strong convexity, and
a leak parameter bounded away from zero.

When $\Phi$ is not negligible, the effective contraction rate
$\rho_{\textup{eff}} = \rho + \Phi$ is slower than the clean
rate $\rho$, and the steady-state error floor
$(\epsilon_{\textup{stoch}} + \epsilon_{\textup{Byz}})
/(1 - \rho_{\textup{eff}})$ is correspondingly larger. This
represents a genuine cost of universal projection under
partial isolation. If this cost is unacceptable, the
projection can instead be applied selectively to neighbors
identified by the data-driven retention probability
(Definition~\ref{def:data_driven_p}), eliminating the honest
clipping error ($\Phi = 0$) while retaining the dropout
defense layer.

To make the coupling condition concrete: using
$1 - \rho \geq (1-\bar{\sigma}_p^2)/4$, a sufficient
condition for $\Phi < 1 - \rho$ is
\begin{equation}
\label{eq:coupling_sufficient}
\mu\,\beta^2\,n_\cH > 2304\,(1-\beta)^2.
\end{equation}
For example, with $n_\cH = 50$ honest agents,
$\mu = 1$, and $\beta = 0.5$, the left-hand side equals
$12.5$ while the right-hand side is $576$. In this regime,
$\Phi$ is not small, and the effective contraction rate is
noticeably degraded. To mitigate this, the network operator
can either increase $\beta$ (accepting more tracking bias),
or switch to detection-based projection for Byzantine
neighbors while retaining universal projection for
unclassified neighbors as a fallback.
\end{remark}


\section{The Isolation Convergence Tradeoff}
\label{sec:tradeoff}

We now characterize the fundamental design tradeoffs in the GT-PD framework, specifically the structural advantage of complete isolation and the optimal dynamic tuning of the leaky tracking parameter under partial isolation.

\subsection{Zero Penalty Isolation under Complete Dropout (\texorpdfstring{$p_b = 0$}{p\_b = 0})}

Under the two-tier scheme with $p_h = 1$ and $p_b = 0$, the algorithm yields linear convergence to a stochastic neighborhood governed solely by the native variance of the data distribution. The consensus rate depends only on the static honest subgraph topology and not on the Byzantine dropout probability. Under the data-driven retention probability (Definition~\ref{def:data_driven_p}), the condition $p_h = 1$ is approached asymptotically as honest agents' trackers converge, and the condition $p_b \to 0$ is achieved when Byzantine trackers deviate sufficiently from the honest consensus, with the discrimination strength controlled by the sensitivity parameter $\lambda$. Complete isolation therefore achieves variance-optimal convergence with zero penalty on the consensus rate. By severing the adversarial edges entirely, the network sidesteps the need for spatial filtering rules that would otherwise compromise the doubly stochastic mixing property.

\subsection{Tuning the Leaky Integrator under Partial Isolation (\texorpdfstring{$p_b > 0$}{p\_b > 0})}

When edges to Byzantine agents cannot be perfectly identified and dropped, the network utilizes GT-PD-L to prevent unbounded accumulation of the tracking invariant violation. Theorem~\ref{thm:leaky} exposes a fundamental tradeoff governed by the continuous leak parameter $\beta$. 

The asymptotic error floor established in Theorem~\ref{thm:leaky} contains a specific Byzantine penalty term proportional to $\frac{\alpha \zeta^2}{\mu \beta^2}$. This formulation demonstrates that increasing the leak parameter $\beta$ aggressively suppresses the permanent optimality gap caused by the adversarial tracking perturbations. The network continuously forgets the injected malicious variance, preventing the adversary from compounding their attack over time.

However, this error suppression introduces an operational tradeoff. The leak parameter explicitly alters the historical gradient consensus. If $\beta$ is set too large, the agents overly discount the valuable historical gradient information collected from their honest neighbors. This degradation of the primary tracking contraction rate forces the network to adopt proportionally smaller step sizes to maintain stability, subsequently slowing the overall learning process. Therefore, in highly aggressive adversarial environments with large perturbation bounds $\zeta$, the network must utilize a larger leak parameter to ensure stability. Conversely, in relatively safe environments, adopting a smaller $\beta$ allows the network to utilize larger step sizes and converge more rapidly.

\subsection{The Adversarial Dilemma}
\label{sec:dilemma}
The two-layer defense architecture creates a fundamental dilemma for the adversary. Consider a Byzantine agent $m \in \cB$ choosing its attack message $\tilde{x}_m^k$ at iteration $k$. The universal self-centered projection clips the message to the $\tau$-ball around the receiving agent's state regardless of $m$'s identity. The effective perturbation that agent $m$ contributes to honest agent $i$'s update is $[W^k]_{im}(\cP_{\tau,i}(\tilde{x}_m^k) - x_i^k)$, which is zero if the edge is dropped ($\xi_{im}^k = 0$) and has magnitude at most $[W^k]_{im} \cdot \tau$ if the edge is retained ($\xi_{im}^k = 1$). The adversary faces three strategic options, none of which escapes both defense layers simultaneously.

If the adversary transmits high-magnitude messages ($\|\tilde{x}_m^k - x_i^k\| > \tau$), self-centered projection truncates the perturbation to exactly $\tau$. The adversary expends unbounded resources for the same effect as a $\tau$-bounded attack, gaining no additional influence.

If the adversary transmits low-magnitude messages ($\|\tilde{x}_m^k - x_i^k\| \leq \tau$) to avoid activation of the projection, then the message magnitude is voluntarily bounded by $\tau$, and the dropout mechanism governs the temporal frequency of these attacks. The expected perturbation per iteration is $p_b \cdot W_{im} \cdot \tau$, which the leaky integrator in GT-PD-L contracts geometrically.

If the adversary transmits messages at the projection boundary ($\|\tilde{x}_m^k - x_i^k\| = \tau$), this maximizes the per-message impact, but the magnitude is still bounded by $\tau$ and the dropout frequency control remains active. The steady-state error floor is governed by $\zeta^2/\beta^2 = (\delta_{\max}^{\cB})^2 \tau^2 / \beta^2$ from Theorem~\ref{thm:leaky}, which is a design parameter under full control of the network operator through the choice of $\tau$ and $\beta$.


The dual-metric trust score (Definition~\ref{def:data_driven_p}) introduces a fourth strategic dimension that the adversary must navigate, beyond the three message-magnitude strategies described above. A Byzantine agent $m \in \cB$ may attempt a \emph{decoupled attack}: transmitting adversarial spatial parameters $\tilde{x}_m^k$ while mimicking the honest gradient trackers (or vice versa), in the hope that anomaly in a single channel will not trigger dropout. We show that this strategy fails under the dual-metric formulation.

\begin{proposition}[Failure of decoupled attacks]
\label{prop:decoupled_failure}
Under the data-driven retention probability (Definition~\ref{def:data_driven_p}), suppose a Byzantine agent $m \in \cB$ communicates with an honest agent $i \in \cH$ along an edge $(i,m) \in E$. At iteration $k$, the Byzantine agent sends messages $\tilde{x}_m^k$ and $\tilde{y}_m^k$ such that the spatial deviation satisfies $\|\tilde{x}_m^{k-1} - x_i^{k-1}\| \geq r$ for some $r > 0$, while the gradient tracker deviation is zero: $\|\tilde{y}_m^{k-1} - y_i^{k-1}\| = 0$. Then the retention probability satisfies
\begin{equation}
\label{eq:decoupled_bound}
p_{im}^k \leq \exp\!\left(-\lambda\, \frac{r^2}{\frac{1}{2}(\|x_i^{k-1}\|^2 + \|\tilde{x}_m^{k-1}\|^2) + \eta_x^2}\right).
\end{equation}
In particular, if $\|\tilde{x}_m^{k-1}\| = \Theta(\|x_i^{k-1}\|)$, then $p_{im}^k \leq \exp(-\Omega(\lambda))$, and if $\|\tilde{x}_m^{k-1}\| \gg \|x_i^{k-1}\|$, then $p_{im}^k \leq \exp\!\left(-\lambda\,  r^2 / (\|\tilde{x}_m^{k-1}\|^2 + \eta_x^2)\right)$, which remains bounded away from $1$ whenever $r / \|\tilde{x}_m^{k-1}\|$ is bounded below. The symmetric statement holds when the spatial channel shows no deviation but the gradient tracker channel deviates.
\end{proposition}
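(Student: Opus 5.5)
The argument is a direct computation from Definition~\ref{def:data_driven_p}, with no probabilistic step. The first step is to write the retention probability through the multiplicative factorization~\eqref{eq:p_factorization}, $p_{im}^k = p_{im}^{k,x}\,p_{im}^{k,y}$. Here both scores are evaluated on the messages the honest agent actually received at step $k-1$, so $\tilde{x}_m^{k-1}$ and $\tilde{y}_m^{k-1}$ take the place of $x_m^{k-1}$ and $y_m^{k-1}$.

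Under the hypothesis $\|\tilde{y}_m^{k-1} - y_i^{k-1}\| = 0$, the numerator of~\eqref{eq:score_y} vanishes. Since $\eta_y^2>0$ keeps the denominator positive, this gives $S_{im}^{k,y} = 0$ and $p_{im}^{k,y} = 1$. The temporal channel therefore neither helps nor hurts, and $p_{im}^k = \exp(-\lambda S_{im}^{k,x})$.

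Next I bound the spatial score from below. The numerator of~\eqref{eq:score_x} is $\|\tilde{x}_m^{k-1} - x_i^{k-1}\|^2 \ge r^2$ by hypothesis, and the denominator is exactly $\tfrac12(\|x_i^{k-1}\|^2+\|\tilde{x}_m^{k-1}\|^2)+\eta_x^2$. This gives the claimed lower bound on $S_{im}^{k,x}$. Monotonicity of $t\mapsto e^{-\lambda t}$ then yields~\eqref{eq:decoupled_bound}.

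For the two corollaries I specialize the denominator.

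In the first case, $\|\tilde{x}_m^{k-1}\| = \Theta(\|x_i^{k-1}\|)$. Here I must read $r$ as being on the scale of the iterates, with $r = \Omega(\|x_i^{k-1}\|)$ and $\eta_x$ at most of that order. Under that reading the ratio $r^2/(\text{denominator})$ is $\Omega(1)$, which gives $p_{im}^k \le e^{-\Omega(\lambda)}$.

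In the second case, $\|\tilde{x}_m^{k-1}\| \gg \|x_i^{k-1}\|$. I bound the denominator above by $\|\tilde{x}_m^{k-1}\|^2+\eta_x^2$, using $\tfrac12(a^2+b^2)\le b^2$ when $a\le b$. Then $r/\|\tilde{x}_m^{k-1}\|$ bounded below by some $c>0$ gives $p_{im}^k \le \exp(-\lambda c^2\|\tilde{x}_m^{k-1}\|^2/(\|\tilde{x}_m^{k-1}\|^2+\eta_x^2))$. This is bounded away from $1$ once $\|\tilde{x}_m^{k-1}\|\gtrsim \eta_x$.

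The symmetric statement follows by interchanging the roles of the channels. When there is no spatial deviation we get $S^{k,x}=0$, and the same computation applies to~\eqref{eq:score_y}.

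The only delicate point is making the asymptotic corollaries precise. The ``$\Theta$'' regime needs the implicit assumptions that $r$ is comparable to the iterate norms and that $\eta_x$ does not dominate. I would state these scale assumptions explicitly rather than attempt to prove something stronger. The main bound~\eqref{eq:decoupled_bound} itself is immediate.
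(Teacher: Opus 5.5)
Your proposal is correct and follows the paper's proof. The zero tracker deviation gives $S_{im}^{k,y}=0$, the spatial score is bounded below by $r^2$ over its denominator, and monotonicity of $t\mapsto e^{-\lambda t}$ yields~\eqref{eq:decoupled_bound}, with the corollaries obtained by specializing the denominator. Your handling of the corollaries is slightly more careful than the paper's: you state the scale assumptions the paper leaves implicit, namely that $r$ is comparable to $\|x_i^{k-1}\|$, that $\eta_x$ does not dominate, and that $\|\tilde{x}_m^{k-1}\|$ is at least of order $\eta_x$ for the ``bounded away from $1$'' claim.
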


\begin{proof}
Since $\|\tilde{y}_m^{k-1} - y_i^{k-1}\| = 0$, the temporal score vanishes: $S_{im}^{k,y} = 0$. The composite score reduces to $S_{im}^k = S_{im}^{k,x}$, and

\begin{equation}
\begin{split}
p_{im}^k &= \exp(-\lambda\, S_{im}^{k,x}) \\
&= \exp\!\left(-\lambda\,
  \frac{\|\tilde{x}_m^{k-1} - x_i^{k-1}\|^2}
  {\frac{1}{2}(\|x_i^{k-1}\|^2
  + \|\tilde{x}_m^{k-1}\|^2) + \eta_x^2}\right) \\
&\leq \exp\!\left(-\lambda\,
  \frac{r^2}{\frac{1}{2}(\|x_i^{k-1}\|^2
  + \|\tilde{x}_m^{k-1}\|^2) + \eta_x^2}\right).
\end{split}
\end{equation}

For the scaling claims: if $\|\tilde{x}_m^{k-1}\| = \Theta(\|x_i^{k-1}\|)$, the denominator is $\Theta(\|x_i^{k-1}\|^2)$ and the ratio $r^2 / \Theta(\|x_i^{k-1}\|^2)$ is a positive constant whenever $r = \Theta(\|x_i^{k-1}\|)$, giving $p_{im}^k \leq e^{-\Omega(\lambda)}$. If $\|\tilde{x}_m^{k-1}\| \gg \|x_i^{k-1}\|$, the denominator is dominated by $\|\tilde{x}_m^{k-1}\|^2$, and the bound follows directly.
\end{proof}

\begin{remark}[The dual-metric dilemma]
\label{rem:dual_dilemma}
Proposition~\ref{prop:decoupled_failure} closes the gap left by a single-channel trust score. Under the original $y$-only formulation, a Byzantine agent could send arbitrary spatial parameters $\tilde{x}_m^k$ while perfectly mimicking the honest gradient trackers, achieving $S_{im}^{k,y} = 0$ and therefore $p_{im}^k = 1$. The spatial attack would then bypass the dropout layer entirely, relying solely on the self-centered projection for containment. Under the dual-metric score, the same attack yields $S_{im}^{k,x} > 0$, reducing the retention probability and activating the dropout defense.

Conversely, a Byzantine agent that matches the honest spatial positions but injects adversarial gradient trackers is penalized by $S_{im}^{k,y} > 0$. The adversary must therefore simultaneously approximate the honest agents in \emph{both} channels to maintain a high retention probability. This forces the adversary into a regime where its per-message influence is small in both the parameter and gradient tracker spaces, at which point the residual perturbation is bounded by the projection radius $\tau$ and contracted by the leaky integrator.

From a game-theoretic perspective, the dual-metric score eliminates the adversary's ability to concentrate its attack budget on a single channel. Any budget allocation that places significant deviation in either channel triggers exponential decay in the retention probability through the corresponding factor in the multiplicative factorization~\eqref{eq:p_factorization}. The adversary's optimal strategy under the dual metric converges to small-magnitude attacks in both channels simultaneously, which is precisely the regime where the self-centered projection is inactive and the perturbation bound $\zeta = \delta_{\max}^{\cB}\,\tau$ is tightest.
\end{remark}


\begin{remark}[Selection of the projection radius]
\label{rem:tau_selection}
The projection radius $\tau$ serves different roles under the two isolation regimes.

 Under complete isolation ($p_b = 0$), $\tau$ must satisfy the honest non-clipping condition~\eqref{eq:tau_nonclip} of Proposition~\ref{prop:honest_nonclip}, which requires $\tau = \mathcal{O}(\sqrt{(K+1) V_{\max}/(\epsilon\,\min(a,c))})$. Since $V_{\max}$ does not depend on $\tau$ in this regime, the condition is non-circular and $\tau$ is fully determined by the problem parameters, the training horizon $K$, and the confidence level $\epsilon$. The value of $\tau$ does not affect the convergence rate or the asymptotic error floor; it only determines the probability of honest non-clipping.

Under partial isolation ($p_b > 0$), $\tau$ directly controls the Byzantine perturbation bound $\zeta = \delta_{\max}^{\cB}\,\tau$ and the steady-state error floor in Theorem~\ref{thm:leaky}. A smaller $\tau$ tightens the Byzantine penalty but may cause honest messages to be clipped during transients, activating the coupling parameter $\Phi$. However, the coupling vanishes at steady state as the honest clipping error is bounded by $\mathcal{O}(C^k)$ (Lemma~\ref{lem:honest_clip}), which contracts geometrically. A natural guideline is $\tau = \Theta(\alpha L \|u^0\|)$: since the honest consensus update at iteration $k$ has magnitude $\|x_i^{k+1} - x_i^k\| = \mathcal{O}(\alpha\|\nabla f_\cH(\xbar^k)\|) \leq \mathcal{O}(\alpha L\|u^k\|)$ by $L$-smoothness, setting $\tau$ proportional to $\alpha L\|u^0\|$ ensures that the Byzantine per-message influence is commensurate with the honest optimization signal during the initial transient phase.
\end{remark}


\section{Numerical Experiments}
\label{sec:numerics}

We evaluate GT-PD and GT-PD-L on the MNIST classification task under three Byzantine attack strategies that span the adversarial spectrum: Sign Flip (high-magnitude directional attack), A Little Is Enough (ALIE, a stealth attack that crafts perturbations within the statistical envelope of the honest gradient distribution)~\cite{baruch2019little}, and Inner Product Manipulation (IPM, which maximizes the inner product with the true gradient to induce slow divergence)~\cite{xie2020fall}. We compare against two baselines: unprotected gradient tracking (no defense) and coordinate-wise trimmed mean (CWTM) ~\cite{yin2018byzantine}, a representative spatial filtering defense that breaks the doubly stochastic property of the mixing matrix.

\subsection{Experimental Setup}

The network consists of $n = 20$ agents communicating over a random regular graph of degree $4$, with $b = 4$ Byzantine agents ($20\%$ adversarial fraction). Each honest agent holds a non-IID partition of the MNIST training set generated by Dirichlet allocation with concentration parameter $\alpha_{\text{Dir}} = 0.5$, producing substantial gradient heterogeneity across agents. The model is a linear softmax classifier with explicit $\ell_2$ regularization $(\mu/2)\|\theta\|^2$ added to the cross-entropy loss, enforcing $\mu$-strong convexity as required by Assumption~\ref{as:strong_cvx}. The connectivity of the honest subgraph is verified at initialization (Assumption~\ref{as:connectivity}), and all agents share a common initial point $x_i^0 = 0$ (Proposition~\ref{prop:honest_nonclip}).

The GT-PD framework uses the dual-metric retention probability (Definition~\ref{def:data_driven_p}) with threshold-gated activation (tolerance $S_0 = 3$, sensitivity $\lambda = 1$), universal self-centered projection with radius $\tau = 1.5$, and the leaky integrator with $\beta = 0.1$ for GT-PD-L. The step size is $\alpha = 0.05$, the regularization constant is $\mu = 0.01$, and the stochastic gradients are computed over mini-batches of size $128$. All experiments run for $30$ communication epochs. The step size exceeds the conservative theoretical bound $\alpha \leq 1/(2L)$ required by the Lyapunov analysis in Theorems~\ref{thm:exact} and~\ref{thm:leaky}; as is standard in the gradient tracking literature, the theoretical bound is a sufficient condition and the empirical convergence region is substantially larger.

\begin{figure*}[t]
\centering
\includegraphics[width=\textwidth]{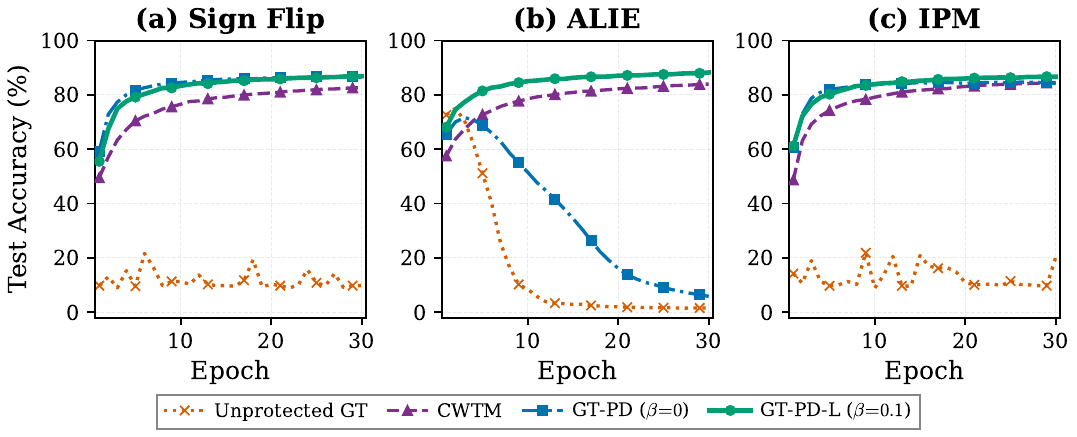}
\caption{Test accuracy of GT-PD, GT-PD-L, CWTM, and unprotected gradient tracking under three Byzantine attacks on MNIST ($n=20$, $b=4$, non-IID). GT-PD-L achieves the highest accuracy in all three settings, outperforming CWTM by $4.3$ percentage points under the stealth ALIE attack.}
\label{fig:accuracy}
\end{figure*}

\subsection{Results and Discussion}

\textbf{Sign Flipping} 
Under the sign flip attack, Byzantine agents transmit the negated and scaled honest mean, producing high-magnitude adversarial messages that are immediately clipped by the universal self-centered projection (Proposition~\ref{prop:enforced_bound}). Both GT-PD and GT-PD-L achieve $86.5\%$ and $86.9\%$ respectively, outperforming CWTM ($82.6\%$) by over four percentage points. The retention probability diagnostic confirms effective Byzantine isolation: honest-Byzantine retention averages $\bar{p}_{hb} = 0.026$ for GT-PD-L, while honest-honest retention stabilizes at $\bar{p}_{hh} = 0.13$, yielding a discrimination ratio of approximately $5{:}1$. The gap between GT-PD and GT-PD-L is modest in this regime because the sign flip attack is geometrically transparent and the self-centered projection alone bounds the per-message perturbation.

\textbf{A Little Is Enough (ALIE)}
The ALIE attack exposes the most significant behavioral distinction between GT-PD and GT-PD-L. GT-PD collapses to $5.9\%$ by epoch $30$, exhibiting the unbounded tracking invariant violation predicted by Remark~\ref{rem:tracking_invariant}: without the leaky integrator, the Byzantine perturbation to the gradient tracker average accumulates without contraction, causing the tracking drift $s^k = \ybar^k - \bar{g}^k$ to grow until the gradient estimates become uninformative. GT-PD-L achieves $88.3\%$, outperforming CWTM by $4.3$ percentage points. Crucially, ALIE is designed to evade distance-based detection by crafting perturbations that are statistically indistinguishable from honest gradients~\cite{baruch2019little}. The retention probability diagnostic confirms this: the honest-Byzantine retention $\bar{p}_{hb} = 0.33$ exceeds the honest-honest retention $\bar{p}_{hh} = 0.21$, indicating that the dropout mechanism does not isolate the adversary under this attack. The resilience of GT-PD-L therefore derives entirely from the interplay between the self-centered projection (bounding per-message magnitude) and the leaky integrator (geometrically contracting the accumulated tracking drift). This defense-in-depth property, where the framework maintains convergence even when one defense layer is defeated, is a structural advantage of the two-layer architecture described in Remark~\ref{rem:detection}.

\textbf{Inner Product Manipulation (IPM)}
Under the inner product manipulation attack, GT-PD-L achieves $86.7\%$, GT-PD achieves $84.2\%$, and CWTM achieves $84.6\%$. The retention probability diagnostic shows strong Byzantine isolation for both GT-PD and GT-PD-L ($\bar{p}_{hb} < 0.004$), indicating that the dual-metric trust score successfully detects the IPM attack in both spatial and temporal channels. The advantage of GT-PD-L over GT-PD in this regime stems from the leaky integrator preventing the slow accumulation of residual perturbations that pass through the projection during transient phases.

\textbf{The leaky integrator is necessary.} Across all three attacks, the contrast between GT-PD and GT-PD-L validates the theoretical analysis of Section~\ref{sec:convergence_leaky}. Under sign flip and IPM, both algorithms perform comparably in the early epochs, but GT-PD degrades when the tracking invariant violation accumulates beyond the contraction capacity of the consensus dynamics. Under ALIE, the degradation is catastrophic because the stealth perturbations bypass the dropout layer entirely. The leaky integrator provides a continuous decay mechanism that bounds the tracking invariant violation by $\|s^k\| \leq (1-\beta)\zeta/\beta$ (Lemma~\ref{lem:opt_leaky}), converting the unbounded accumulation into a bounded steady-state error.


\section{Discussion and Conclusion}
\label{sec:conclusion}

We introduced GT-PD, a Byzantine-resilient distributed optimization framework that combines universal self-centered projection with probabilistic edge dropout over a stochastic gradient tracking backbone. The convergence theory rests on two complementary results. Under complete Byzantine isolation ($p_b = 0$), GT-PD achieves linear convergence to a fundamental noise floor defined purely by the stochastic variance, with zero spectral penalty on the honest consensus dynamics (Theorem~\ref{thm:exact}, Lemma~\ref{lem:two_tier_spectral}). Under partial isolation ($p_b > 0$), the GT-PD-L variant introduces a continuous leaky integrator that geometrically contracts the tracking invariant violation, yielding linear convergence to a steady-state neighborhood governed by the ratio of the Byzantine perturbation bound to the leak parameter (Theorem~\ref{thm:leaky}).

A central design contribution is the dual-metric retention probability (Definition~\ref{def:data_driven_p}), which evaluates neighbor credibility jointly in the spatial (parameter) and temporal (gradient tracking) channels. The threshold-gated formulation ensures that honest edges whose trust scores fall below the tolerance $S_0$ are retained with probability one, recovering the zero-penalty isolation regime of Lemma~\ref{lem:two_tier_spectral} without requiring a priori knowledge of Byzantine identities. The multiplicative factorization $p_{ij}^k = p_{ij}^{k,x} \cdot p_{ij}^{k,y}$ prevents adversaries from evading detection by concentrating anomalous behavior in a single channel (Proposition~\ref{prop:decoupled_failure}).

The numerical experiments in Section~\ref{sec:numerics} validate the framework against three attacks spanning the adversarial spectrum. GT-PD-L achieves the highest accuracy in all three settings, outperforming the coordinate-wise trimmed mean baseline by up to $4.3$ percentage points under the stealth ALIE attack. The ALIE results reveal a defense-in-depth property that is structurally distinctive: even when the dropout layer fails to isolate the adversary ($\bar{p}_{hb} > \bar{p}_{hh}$), the combination of self-centered projection and leaky integrator maintains convergence. The contrast between GT-PD and GT-PD-L across all attacks empirically confirms the theoretical prediction of Remark~\ref{rem:tracking_invariant}: without the leaky integrator, the tracking invariant violation accumulates without contraction, eventually degrading the gradient estimates beyond recovery.

\textbf{Limitations and future directions.}
The self-centered projection employs a fixed radius $\tau$ that is uniform across agents and iterations. An adaptive radius that contracts with the consensus error could tighten the steady-state error floor beyond the $(\delta_{\max}^{\cB})^2\tau^2/\beta^2$ bound of Theorem~\ref{thm:leaky}; analyzing such a scheme requires a coupled induction argument that simultaneously bounds the consensus error and the effective projection radius. The tolerance threshold $S_0$ in the threshold-gated retention probability is currently a fixed design parameter; an adaptive threshold that tracks the empirical honest disagreement level would automate this choice, though the resulting state-dependent $p_{\min}$ complicates the uniform spectral bounds of Section~\ref{sec:dropout_properties}. Finally, extending the probabilistic dropout framework to time-varying directed topologies or asynchronous communication protocols would further bridge the gap between theoretical resilience guarantees and practical large-scale deployment.

\newpage


\bibliographystyle{plain}

\newpage

\appendix
\section{Appendix}
\label{sec:appendix_diagnostics}

In this appendix, we provide extended diagnostic evaluations of the GT-PD and GT-PD-L algorithms across the considered threat models. As demonstrated in Fig.~\ref{fig:supplementary}, the dual-layer defense architecture remains robust even when the data-driven dropout layer struggles to isolate the Byzantine adversaries. Specifically, during the ALIE attack, the projection and leaky integrator layers successfully guarantee convergence despite the anomalous retention probabilities.

\begin{figure}[H]
\centering
\includegraphics[width=\textwidth]{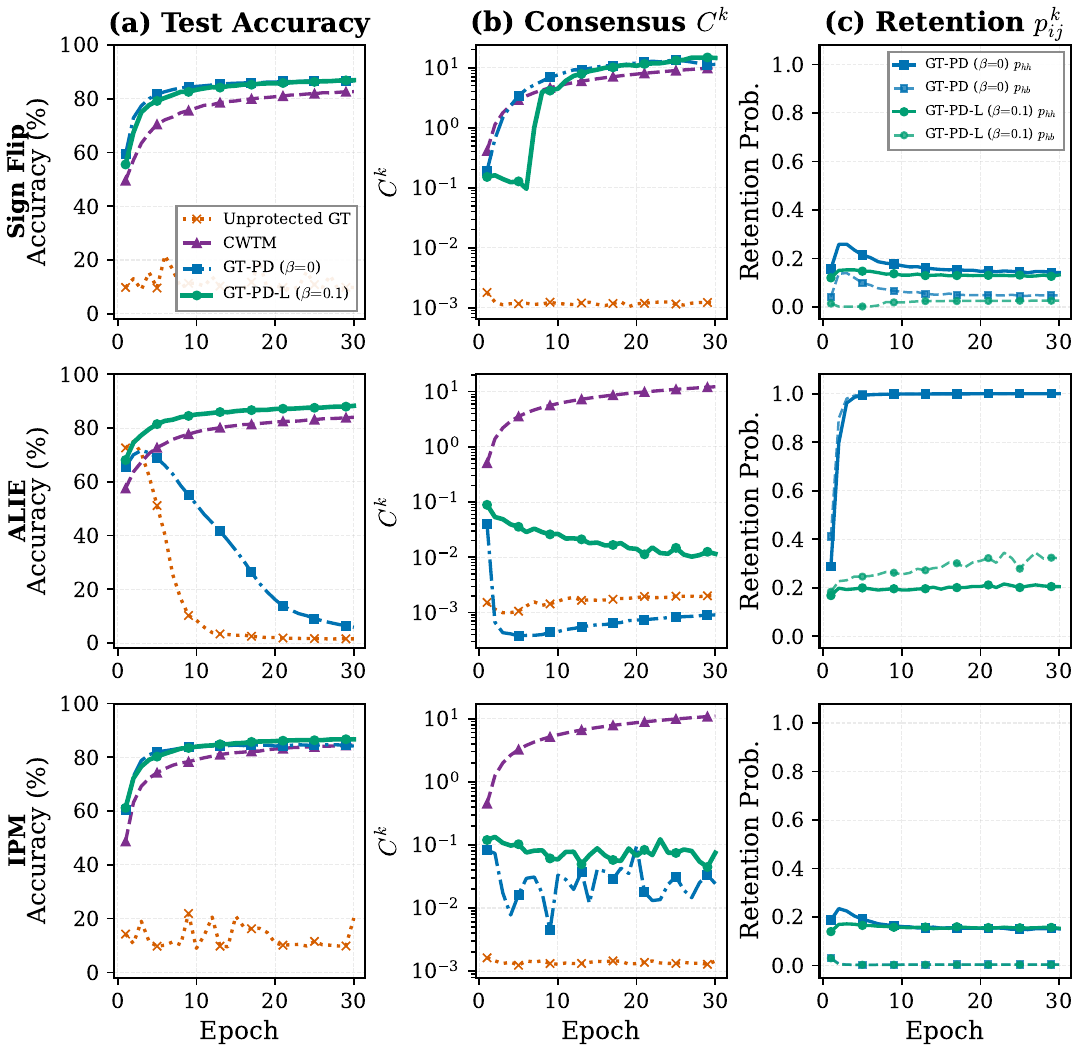}
\caption{Extended diagnostics across three attacks (rows). Column~(a): test accuracy. Column~(b): consensus disagreement $C^k$ on log scale. Column~(c): mean retention probabilities for honest-honest ($p_{hh}$, solid) and honest-Byzantine ($p_{hb}$, dashed) edges. Under ALIE, the dropout layer fails to isolate Byzantine agents ($p_{hb} > p_{hh}$), yet GT-PD-L converges via the projection and leaky integrator layers.}
\label{fig:supplementary}
\end{figure}

\end{document}